\newcommand{\vertiii}[1]{{\left\vert\kern-0.25ex\left\vert\kern-0.25ex #1 
    \right\vert\kern-0.25ex\right\vert\kern-0.25ex}}
\newcommand{\mbf}[1]{\boldsymbol{#1}}
\newcommand{\br}{\mbf{r}}
\newcommand{\bx}{\mbf{x}}
\newcommand{\bX}{\mbf{X}}
\newcommand{\bgamma}{\mbf{\gamma}}
\renewcommand{\top}{T}
\newcommand{\balpha}{\mbf{\alpha}}
\renewcommand{\algorithmicrequire}{\textbf{Input:}}    
\renewcommand{\algorithmicensure}{\textbf{Output:}}
\newcommand{\intkernel}{\phi}
\newcommand{\argmin}[1]{\underset{#1}{\operatorname{arg}\operatorname{min}}\;}
\begin{document}

\title{A Sparse Bayesian Learning Algorithm for Estimation of Interaction Kernels in  Motsch-Tadmor Model %\thanks{Grants or other notes
%about the article that should go on the front page should be
%placed here. General acknowledgments should be placed at the end of the article.}
}
%\subtitle{Do you have a subtitle?\\ If so, write it here}

%\titlerunning{Short form of title}        % if too long for running head

\author{Jinchao Feng        \and
        Sui Tang %etc.
}

%\authorrunning{Short form of author list} % if too long for running head

\institute{Jinchao Feng \at
              Department of Mathematics, Great Bay University, Dongguan, Guangdong, China \\
              %Tel.: +123-45-678910\\
              %Fax: +123-45-678910\\
              \email{jcfeng@gbu.edu.cn}           %  \\
%             \emph{Present address:} of F. Author  %  if needed
           \and
           Sui Tang \at
              Department of Mathematics, University of California, Santa Barbara, Isla Vista, CA, USA\\
              \email{suitang@math.ucsb.edu}
}

\date{Received: date / Accepted: date}
% The correct dates will be entered by the editor

\maketitle

\begin{abstract}
In this paper, we investigate the data-driven identification of asymmetric interaction kernels in the Motsch–Tadmor model based on observed trajectory data. The model under consideration is governed by a class of semilinear evolution equations, where the interaction kernel defines a normalized, state-dependent Laplacian operator that governs collective dynamics. To address the resulting nonlinear inverse problem, we propose a variational framework that reformulates kernel identification using the implicit form of the governing equations, reducing it to a subspace identification problem. We establish an identifiability result that characterizes conditions under which the interaction kernel can be uniquely recovered up to scale. To solve the inverse problem robustly, we develop a sparse Bayesian learning algorithm that incorporates informative priors for regularization, quantifies uncertainty, and enables principled model selection. Extensive numerical experiments on representative interacting particle systems demonstrate the accuracy, robustness, and interpretability of the proposed framework across a range of noise levels and data regimes.
\keywords{Asymmetric interacting particle systems \and Sparse Bayesian learning \and Uncertainty quantification \and Inverse problems \and Data-driven modeling}
% \PACS{PACS code1 \and PACS code2 \and more}
\subclass{70F17 \and 82C22 \and 68T37 \and 62F15}
\end{abstract}

\section{Introduction}

Interacting dynamical systems are ubiquitous in natural and engineered systems, characterized by complex behaviors that emerge from local interactions and the temporal evolution of multiple agents or particles. Examples include planetary orbits, self-propelled motion, bird flocking, fish schooling, cell aggregation, opinion formation, and oscillator synchronization \cite{laskar1990chaotic,vicsek1995novel,cucker2007emergent,couzin2002collective,keller1971model,hegselmann2002opinion,acebron2005kuramoto}. Differential equation models have been instrumental in understanding how collective behaviors arise from individual interactions. However, the high dimensionality and heterogeneous interaction mechanisms inherent to these systems pose major challenges for mathematical modeling and data-driven learning \cite{lu2019nonparametric,lu2020learning,lu2021learning,miller2023learning}.

Recent advances in applied mathematics demonstrate that even simple interaction laws can generate complex emergent dynamics when embedded in mechanistic models \cite{schmidt2009distilling,BKP2017,zhang2018robust,BCCCCGLOPPVZ2008,BCGMSVW2012}. Consider a system of \(N\) agents, each defined by a position \(\mathbf{x}_i(t) \in \mathbb{R}^d\). For first-order dynamics of homogeneous agents, the governing equations are:

\begin{equation}\label{ode1}
\frac{d \mathbf{x}_i(t)}{dt} =  \sum_{j=1}^N a_t(i,j)(\mathbf{x}_j(t) - \mathbf{x}_i(t)),
\end{equation}
\begin{equation}
a_t(i,j) = \frac{\phi(\|\mathbf{x}_j(t) - \mathbf{x}_i(t)\|)}{c_i},
\end{equation}
where $a_t(i,j)$ represents the normalized influence of agent $j$ on agent $i$; \(\phi: \mathbb{R}^+ \rightarrow \mathbb{R}\) is the interaction kernel based on relative distance; and \(c_i\) is a normalization factor. For symmetric models, \(c_i = N\), while for asymmetric models, \(c_i = \sum_{j=1}^N \phi(\|\mathbf{x}_j(t) - \mathbf{x}_i(t)\|)\). These models can also be extended to second-order systems.

While symmetric models have been extensively analyzed, they often fail to capture far-from-equilibrium dynamics where interactions are localized and not uniformly distributed. To address this, Motsch and Tadmor introduced an asymmetric formulation \cite{motsch2011new,motsch2014heterophilious}, generalizing the Krause model \cite{krause2000discrete,blondel2009krause} for continuous-time opinion dynamics. In this framework, the influence between agents is scaled by their relative distances, eliminating dependence on the number of agents and emphasizing geometric configuration in phase space. Analogous mechanisms appear in recent advances in machine learning, such as graph transformers with non-symmetric attention mechanisms \cite{geshkovski2024emergence}, which show improved empirical performance over their symmetric counterparts.
Numerical studies \cite{blondel2005convergence,blondel2009krause,motsch2014heterophilious} have shown that asymmetric models capture phenomena such as local consensus in opinion dynamics \cite{jabin2014clustering} and mono- or multi-cluster flocking \cite{ha2024mono}. Despite their expressiveness, asymmetric models introduce mathematical difficulties due to the absence of conserved quantities (e.g., total momentum), which are central to the analysis of symmetric systems. While recent works \cite{motsch2011new,motsch2014heterophilious,jabin2014clustering,jin2018flocking,ha2024mono,geshkovski2024emergence} have advanced the understanding of emergent dynamics in asymmetric systems, rigorous analysis remains an active research area.

A central challenge in qualitative analysis is identifying interaction functions that lead to spontaneous pattern formation or self-organization. Traditionally, interaction potentials were empirically chosen to replicate observed dynamics. However, advancements in data acquisition technologies, such as digital imaging and GPS tracking \cite{lukeman2010inferring,katz2011inferring}, now enable the collection of density evolution data for large particle ensembles. This raises a critical question: can the underlying interaction laws governing the dynamics be inferred directly from empirical observations? Addressing this requires algorithms that align theoretical models with empirical observations. This paper tackles this challenge by bridging the gap between models and data. While data-driven approaches have extensively studied symmetric models, asymmetric models remain underexplored. This work focuses on filling this gap.

\subsection{Related Works}

Data-driven discovery of dynamical systems using machine learning has emerged as a core theme in scientific computing \cite{bongard2007automated, schmidt2009distilling, BPK2016, RBPK2017, BKP2017, han2015robust, kang2019ident, zhang2018robust, BCCCCGLOPPVZ2008, BCGMSVW2012, raissi2018deep, raissi2018hidden, long2017pde}, enabling the representation and analysis of complex functional data. These approaches typically formulate a loss functional comprising a data-fidelity term (enforcing consistency with governing equations) and a regularization term (embedding prior knowledge such as sparsity), transforming the learning task into an optimization problem. This paradigm effectively leverages machine learning techniques to tackle high-dimensional challenges.

Despite extensive progress in symmetric models \cite{lu2019nonparametric, lu2020learning, lu2021learning, miller2023learning, liu2021random}, data-driven discovery of asymmetric interaction laws remains underexplored. The lack of symmetry leads to a nonlinear inverse problem, precluding the direct application of prior methods and necessitating new theoretical and algorithmic tools. To the best of our knowledge, this paper is the first to address the data-driven discovery of asymmetric models. The asymmetric property introduces a nonlinear inverse problem, rendering previous techniques for symmetric models inapplicable and necessitating new analytical and algorithmic approaches.

A close line of work is the sparse identification of dynamical systems (SINDy) framework \cite{BPK2016}, and in particular its implicit formulation \cite{mangan2016inferring}, which considers learning ODEs with rational right-hand side functions. In this approach, the governing equations are reformulated in implicit form, leading to a homogeneous system that can be interpreted as a linear subspace identification problem. While our formulation shares a similar spirit, the asymmetric normalization in the Motsch--Tadmor model introduces a nonlinear dependence on the kernel, which prevents the direct application of standard SINDy or implicit SINDy methods. In addition, subspace identification approaches are known to be sensitive to noise and model misspecification, particularly in stochastic or noisy settings, which further motivates the Bayesian treatment adopted here.

To robustly address the ill-posed nature of the inverse problem, we adopt a sparse Bayesian learning (SBL) framework. This approach not only enables flexible incorporation of prior information (e.g., sparsity or smoothness) but also quantifies uncertainty in model estimates, leading to more interpretable and stable results. While deterministic sparsity-promoting techniques, such as SINDy, iterative greedy methods, and gradient-based methods \cite{BPK2016, TranWardExactRecovery, schaeffer2018extracting, boninsegna2018sparse}, dominate the literature, Bayesian methods remain underexplored \cite{zhang2018robust}. 

To clarify the relationship between the proposed method and existing sparse identification approaches, in Section~\ref{comparision:sindy} we adapt an implicit SINDy-type method to our setting and compare it with our sparse Bayesian approach; see Figure~\ref{fig:sindy_comparison}. The results show that when the true kernel is exactly representable in the chosen function space, the SINDy-based method can recover the correct structure in the noise-free setting. However, it does not provide uncertainty quantification and is more sensitive to noise. More importantly, when the true kernel is not contained in the chosen basis space, the SINDy-based approach fails to produce meaningful approximations, whereas the proposed Bayesian framework remains robust and continues to provide reliable estimates together with uncertainty quantification.

\paragraph{Contributions} Our contributions can be summarized as follows:
\begin{itemize}
    \item We propose a variational learning framework based on the implicit form of the governing ODE, enabling the identification of interaction kernels in asymmetric models via sparse Bayesian inference.
    \item We introduce a new model selection criterion for identifying relevant basis functions, which we show outperforms standard approaches under various noise conditions.
    \item We validate the proposed method on multiple synthetic datasets, demonstrating robustness to noise and efficiency in computation.
\end{itemize}

\section{Problem Formulation}
We begin by formulating the problem of estimating the interaction kernel 
$\phi$ from noise-free observational trajectory data. Let the dataset be denoted by \(\mathcal{D}_{M,L} = \{ \mathcal{D}^{(m,l)} \}_{m,l=1}^{M,L} := \{ \bx^{(m)}_i(t_l), \dot{\bx}_i^{(m)}(t_l), i=1,\cdots,N\}_{m,l=1}^{M,L}\), where \(0 = t_1 < \dots < t_L = T\) are observation time instances. Here, \(m\) denotes the trial number, with initial conditions sampled i.i.d. from a probability measure on \(\mathbb{R}^d\).

% We assume that the interaction kernel \(\phi\) lies in the span of a finite set of basis functions \(\mathcal{S} = \{\xi_k\}_{k=1}^K\), i.e., \(\phi = \sum_k c_k^{\mathrm{true}} \xi_k\) for coefficients \(\mbf{c}^{\mathrm{true}} \in \mathbb{R}^K\).

We assume that the interaction kernel \(\phi\) lies in the span of a finite set of basis functions \(\mathcal{S} = \{\xi_k\}_{k=1}^K\), i.e.,
\[
\phi = \sum_{k=1}^K c_k^{\mathrm{true}} \,\xi_k,
\]
for coefficients \(\mbf{c}^{\mathrm{true}} \in \mathbb{R}^K\). The choice of basis functions is not restricted in the proposed framework and can be adapted to the expected regularity of the kernel. In the numerical experiments presented in this work, we employ piecewise constant basis functions due to their flexibility and their ability to approximate kernels without imposing smoothness assumptions. However, smooth basis functions (e.g., radial basis functions or polynomial bases) can be incorporated directly without any modification to the formulation or the inference procedure.

Multiplying both sides of the normalized ODE \eqref{ode1} by the normalization constant \(\sum_{j \in \mathcal{N}_i} \phi(|\bx_j - \bx_i|)\), we eliminate the denominator and obtain the implicit constraint:
\begin{equation}\label{newode}
\sum_{j=1}^{N} \phi(|\bx_j - \bx_i|)(\dot\bx_i - (\bx_{i} - \bx_{j})) = 0, \quad i = 1, \ldots, N.
\end{equation}

To estimate $\phi$, we propose a variational approach that minimizes the empirical loss functional over the unit sphere in $\mathbb{R}^K$, enforcing non-trivial solutions:
\begin{align}
\min_{\|\mbf{c}\|_2=1} \mathcal{E}_{M,L}(\mbf{c}),
\end{align}
where
\begin{align}
\mathcal{E}_{M,L}(\mbf{c}) := \frac{1}{NML} \sum_{m,l,i=1,1,1}^{M,L,N} \Big\|\sum_{j=1}^{N} \sum_{k=1}^{K} c_{k} \xi_k(|\bx_j^{(m)}(t_l) - \bx_i^{(m)}(t_l)|) (\dot\bx_i^{(m)}(t_l) - (\bx_{i}^{(m)}(t_l) - \bx_{j}^{(m)}(t_l)))\Big\|^2.
\end{align}

The goal is to identify a function from the candidate library that satisfies \eqref{newode} as closely as possible on the observed data, ensuring \(\mathcal{E}_{M,L}(\mbf{c}^{\mathrm{true}}) = 0\). Since \(\mathcal{E}_{M,L}(\cdot)\) is quadratic in \(\mbf{c}\), it admits a closed-form solution described in Proposition \ref{prop1}. We note that directly using the original ODE form \eqref{ode1} as the loss function leads to a nonlinear least squares problem, which is more challenging to solve.

\begin{proposition}\label{prop1}
Let \(\hat{\mbf{c}}_{M,L} = \argmin{\|\mbf{c}\|_2=1} \mathcal{E}_{M,L}(\mbf{c})\). Then:
\begin{equation}\label{eq:prob1}
\frac{1}{NML} \mathbb{A}_{M,L}^{\top} \mathbb{A}_{M,L} \hat{\mbf{c}}_{M,L} = \mbf{0},
\end{equation}
where \(\mathbb{A}_{M,L} = \begin{pmatrix}\mbf{A}^{(1,1)} \\ \vdots \\ \mbf{A}^{(M,L)} \end{pmatrix} \in \mathbb{R}^{dNML \times K}\) with \(\mbf{A}^{(m,l)} = [\mbf{A}_{ik}^{(m,l)}] \in \mathbb{R}^{dN \times K}\), and
\begin{equation}\label{eq:assemble_A}
\mbf{A}_{ik}^{(m,l)} = \sum_{j=1}^{N} \xi_k(|\bx_j^{(m)}(t_l) - \bx_i^{(m)}(t_l)|) (\dot\bx_i^{(m)}(t_l) - (\bx_{i}^{(m)}(t_l) - \bx_{j}^{(m)}(t_l))) \in \mathbb{R}^d,
\end{equation}
for \(i=1,\cdots,N\) and \(k=1,\cdots,K\).
\end{proposition}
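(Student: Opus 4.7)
The plan is to recognize $\mathcal{E}_{M,L}$ as a non-negative quadratic form in $\mbf{c}$, exhibit its Gram-matrix structure through $\mathbb{A}_{M,L}$, and then conclude via elementary convex minimization on the unit sphere.

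First, I would rewrite the inner double sum in $\mathcal{E}_{M,L}$. For each fixed triple $(m,l,i)$, the quantity inside the outer norm is precisely $\sum_{k=1}^{K} c_k \mbf{A}_{ik}^{(m,l)}$, with $\mbf{A}_{ik}^{(m,l)} \in \mathbb{R}^d$ as defined in \eqref{eq:assemble_A}. Stacking over $i$ produces $\mbf{A}^{(m,l)}\mbf{c} \in \mathbb{R}^{dN}$, and stacking over $(m,l)$ yields $\mathbb{A}_{M,L}\mbf{c} \in \mathbb{R}^{dNML}$, whose squared Euclidean norm equals $\sum_{m,l,i}\bigl\|\sum_k c_k \mbf{A}_{ik}^{(m,l)}\bigr\|^2$. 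Consequently $\mathcal{E}_{M,L}(\mbf{c}) = \frac{1}{NML}\,\mbf{c}^{\top}\mathbb{A}_{M,L}^{\top}\mathbb{A}_{M,L}\mbf{c}$.

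Second, I would exploit the standing noise-free assumption together with the modelling hypothesis $\phi = \sum_k c_k^{\mathrm{true}}\xi_k$ to argue that the minimum value is $0$. Substituting $\mbf{c} = \mbf{c}^{\mathrm{true}}$ into a block of $\mathbb{A}_{M,L}\mbf{c}$ reconstructs the left-hand side of the implicit identity \eqref{newode} at the observed state, which vanishes identically because the data are generated by $\phi$. Hence $\mathbb{A}_{M,L}\mbf{c}^{\mathrm{true}} = \mbf{0}$, and since $\mbf{c}^{\mathrm{true}} \neq \mbf{0}$ (otherwise $\phi \equiv 0$), the unit vector $\mbf{c}^{\mathrm{true}}/\|\mbf{c}^{\mathrm{true}}\|_2$ is feasible and attains $\mathcal{E}_{M,L} = 0$.

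Finally, because $\mathcal{E}_{M,L}\ge 0$ on the sphere and the infimum is attained at value $0$, every minimizer $\hat{\mbf{c}}_{M,L}$ must itself satisfy $\|\mathbb{A}_{M,L}\hat{\mbf{c}}_{M,L}\|_2^2 = 0$, so $\mathbb{A}_{M,L}\hat{\mbf{c}}_{M,L} = \mbf{0}$; left-multiplying by $\mathbb{A}_{M,L}^{\top}/(NML)$ yields \eqref{eq:prob1}. Equivalently, a Lagrange-multiplier argument identifies any critical point with an eigenvector, giving $\mathbb{A}_{M,L}^{\top}\mathbb{A}_{M,L}\hat{\mbf{c}}_{M,L} = \lambda_{\min}\hat{\mbf{c}}_{M,L}$, and the previous paragraph forces $\lambda_{\min}=0$. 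The only mild subtlety is the bookkeeping required to align the triple index $(m,l,i)$ with the row blocks of $\mathbb{A}_{M,L}$; there is no true analytical obstacle, since the noise-free hypothesis collapses the constrained optimization to the purely algebraic task of locating a null vector of $\mathbb{A}_{M,L}^{\top}\mathbb{A}_{M,L}$.
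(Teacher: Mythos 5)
Your proposal is correct, and it follows the same basic reduction as the paper: identify $\mathcal{E}_{M,L}(\mbf{c}) = \frac{1}{NML}\,\mbf{c}^{\top}\mathbb{A}_{M,L}^{\top}\mathbb{A}_{M,L}\mbf{c}$ and read off the conclusion from the quadratic structure. The difference is in how the conclusion is extracted, and here your version is actually the more careful one. The paper simply differentiates the quadratic form and sets the gradient to zero, which is the first-order condition for the \emph{unconstrained} problem; for minimization over the unit sphere the correct stationarity condition is the eigenvalue equation $\mathbb{A}_{M,L}^{\top}\mathbb{A}_{M,L}\hat{\mbf{c}}_{M,L} = \lambda\,\hat{\mbf{c}}_{M,L}$ with $\lambda$ the smallest eigenvalue, and one still needs to argue $\lambda = 0$. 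You supply exactly that missing step: under the noise-free hypothesis and the assumption $\phi = \sum_k c_k^{\mathrm{true}}\xi_k$, the implicit identity \eqref{newode} forces $\mathbb{A}_{M,L}\mbf{c}^{\mathrm{true}} = \mbf{0}$, so the minimum over the sphere is attained at value zero and every minimizer lies in the null space of $\mathbb{A}_{M,L}$. This is the observation the paper relies on implicitly (it asserts $\mathcal{E}_{M,L}(\mbf{c}^{\mathrm{true}})=0$ just before the proposition) but does not fold into the proof. In short: same Gram-matrix approach, but your route through the attained zero minimum, or equivalently your Lagrange-multiplier variant with $\lambda_{\min}=0$, is what makes the argument airtight.
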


\begin{proof}
The loss function \(\mathcal{E}_{M,L}(\mbf{c})\) can be written as a quadratic form: 
\[
\mathcal{E}_{M,L}(\mbf{c}) = \frac{1}{NML} \|\mathbb{A}_{M,L} \mbf{c}\|^2 = \frac{1}{NML} \mbf{c}^\top \mathbb{A}_{M,L}^\top \mathbb{A}_{M,L} \mbf{c}.
\]
Differentiating with respect to \(\mbf{c}\) and setting it to zero gives:
\[
\mathbb{A}_{M,L}^\top \mathbb{A}_{M,L} \mbf{c} = \mbf{0},
\]
which proves the result.
\end{proof}

The null spaces of \(\mathbb{A}_{M,L}^\top \mathbb{A}_{M,L}\) and \(\mathbb{A}_{M,L}\) are identical, so it suffices to solve \(\mathbb{A}_{M,L} \mbf{c} = 0\), excluding the trivial solution \(\mbf{c} = 0\) due to normalization.

\begin{remark}
Estimating the right-hand side of a \(dN\)-dimensional ODE system is challenging. For instance, estimating a generic cubic polynomial \(\phi\) defined on \(\mathbb{R}^{Nd}\) requires a polynomial basis of dimension \(\mathcal{O}((Nd)^3)\), which grows exponentially with \(N\) and \(d\), necessitating large datasets. In contrast, our problem is feasible with limited data due to the 1D nature of \(\phi\), which reduces complexity. Furthermore, \(\phi\) couples the entire equation, leveraging the system's structure to constrain and regularize the estimation, enabling possible accurate recovery.
\end{remark}

\paragraph{Identifiability.} Next, we examine conditions under which the true coefficients can be recovered. The matrix \(\mathbb{A}_{M,L}\) is inherently random due to stochastic initial conditions. Using a statistical inverse problem framework, we analyze identifiability as the number of trials \(M\) tends to infinity while \(L\) remains fixed.

\begin{proposition}
Assuming the initial conditions \(\bx_1(0), \ldots, \bx_N(0)\) are i.i.d. samples from a probability distribution \(\mu_0\), we have:
\begin{align}
\lim_{M \to \infty} \frac{1}{NML} \mathbb{A}_{M,L}^\top \mathbb{A}_{M,L} = \mathbb{B} \in \mathbb{R}^{K \times K}\quad a.s.,
\end{align}
where 
\[
\mathbb{B}_{kk'} = 
\frac{1}{LN} \sum_{\ell=1}^{L} \mathbb{E}_{\mu_0} \left[\sum_{i=1}^{N} \sum_{j=1}^{N} \sum_{j'=1}^{N} \xi_k(\|\br_{ij}(t_\ell)\|) \xi_{k'}(\|\br_{ij'}(t_\ell)\|) \langle \dot{\bx}_i(t_\ell) - \br_{ij}(t_\ell), \dot{\bx}_i(t_\ell) - \br_{ij'}(t_\ell) \rangle \right].
\]
If \(\mathrm{rank}(\mathbb{B}) = K-1\), the true coefficients can be identified up to a scaling factor.
\end{proposition}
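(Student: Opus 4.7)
The plan is to prove the two claims separately: first the almost sure convergence of the normalized Gram matrix to $\mathbb{B}$, then the identifiability statement. Both are conceptually simple, relying on the strong law of large numbers (SLLN) and basic linear algebra, with the main technical burden residing in integrability and in characterizing the null space of the limit.

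For the convergence, I would begin by expanding the block structure of $\mathbb{A}_{M,L}^\top \mathbb{A}_{M,L}$ entry-wise, writing
$$\frac{1}{NML}(\mathbb{A}_{M,L}^\top\mathbb{A}_{M,L})_{kk'} = \frac{1}{M}\sum_{m=1}^{M} Z^{(m)}_{kk'},\qquad Z^{(m)}_{kk'} := \frac{1}{NL}\sum_{l=1}^{L}\sum_{i=1}^{N}\langle \mathbf{A}_{ik}^{(m,l)},\mathbf{A}_{ik'}^{(m,l)}\rangle.$$
Because each initial configuration $\{\mathbf{x}_i^{(m)}(0)\}_{i=1}^N$ is an i.i.d.\ draw from $\mu_0^{\otimes N}$ and the deterministic flow of the Motsch--Tadmor system maps these into the full trajectory, the variables $\{Z^{(m)}_{kk'}\}_{m\ge 1}$ are i.i.d.\ copies of $Z^{(1)}_{kk'}$. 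Applying Kolmogorov's SLLN yields almost sure convergence to $\mathbb{E}_{\mu_0}[Z^{(1)}_{kk'}]$; substituting the definition of $\mathbf{A}_{ik}^{(m,l)}$ from \eqref{eq:assemble_A} and setting $\mathbf{r}_{ij}(t_\ell)=\mathbf{x}_i(t_\ell)-\mathbf{x}_j(t_\ell)$ produces exactly the expression for $\mathbb{B}_{kk'}$ stated in the proposition. Since $K$ is fixed, entrywise convergence is equivalent to matrix convergence.

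For identifiability, the key observation is that the true coefficient vector $\mathbf{c}^{\mathrm{true}}$ makes the implicit constraint \eqref{newode} hold identically along every trajectory. In particular, for every realization, every $m,l,i$, one has $\sum_{k}c_k^{\mathrm{true}}\mathbf{A}_{ik}^{(m,l)}=\mathbf{0}$, so $\mathbb{A}_{M,L}\mathbf{c}^{\mathrm{true}}=\mathbf{0}$ and hence $\mathbb{A}_{M,L}^\top\mathbb{A}_{M,L}\mathbf{c}^{\mathrm{true}}=\mathbf{0}$ for all $M$. Passing to the limit gives $\mathbb{B}\mathbf{c}^{\mathrm{true}}=\mathbf{0}$, i.e., $\mathrm{span}(\mathbf{c}^{\mathrm{true}})\subseteq \ker(\mathbb{B})$. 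The rank condition $\mathrm{rank}(\mathbb{B})=K-1$ forces $\dim\ker(\mathbb{B})=1$, so the inclusion is an equality. Any unit-norm solution of $\mathbb{B}\mathbf{c}=\mathbf{0}$ therefore equals $\pm\mathbf{c}^{\mathrm{true}}/\|\mathbf{c}^{\mathrm{true}}\|_2$, which is the desired identification up to scaling.

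The main obstacle is the SLLN hypothesis: one must verify $\mathbb{E}_{\mu_0}|Z^{(1)}_{kk'}|<\infty$. This reduces to moment control on $\xi_k(\|\mathbf{r}_{ij}(t_\ell)\|)$ and on $\|\dot{\mathbf{x}}_i(t_\ell)\|$, which in turn requires well-posedness and trajectory bounds for the Motsch--Tadmor flow up to time $T$, together with appropriate moment assumptions on $\mu_0$ and a local boundedness or growth assumption on the dictionary $\{\xi_k\}$. A secondary, cleaner step would be to lift the null-space identification to convergence of the minimizers $\hat{\mathbf{c}}_{M,L}\to \mathbf{c}^{\mathrm{true}}/\|\mathbf{c}^{\mathrm{true}}\|_2$ by invoking continuity of the bottom eigenspace of symmetric matrices together with the spectral gap implied by the rank condition; this is the natural bridge between the identifiability statement and the consistency of the variational estimator.
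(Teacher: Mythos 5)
The paper states this proposition without an explicit proof, and your argument --- the block decomposition of $\frac{1}{NML}\mathbb{A}_{M,L}^{\top}\mathbb{A}_{M,L}$ into an average of i.i.d.\ per-trial Gram contributions followed by the strong law of large numbers, plus the observation that $\mathbb{A}_{M,L}\mathbf{c}^{\mathrm{true}}=\mathbf{0}$ combined with $\mathrm{rank}(\mathbb{B})=K-1$ pins down $\ker(\mathbb{B})=\mathrm{span}(\mathbf{c}^{\mathrm{true}})$ --- is correct and is precisely the standard argument the authors implicitly rely on (it also matches their surrounding discussion of the null space and the spectral-gap/random-matrix remarks). Your observation that the SLLN needs an integrability hypothesis on the $\xi_k$ and on the flow up to time $T$ is a fair point the paper leaves tacit, but it does not change the approach.
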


\begin{proof}[Proof sketch]
We decompose the empirical Gram matrix as
\[
\frac{1}{NML}\mathbb{A}_{M,L}^\top \mathbb{A}_{M,L}
=
\frac{1}{M}\sum_{m=1}^M \frac{1}{NL} \bigl(\mathbb{A}^{(m)}\bigr)^\top \mathbb{A}^{(m)},
\]
where \(\mathbb{A}^{(m)}\) denotes the contribution from the \(m\)-th independent trajectory. Since the initial conditions are i.i.d.\ samples from \(\mu_0\), the matrices \(\frac{1}{NL}(\mathbb{A}^{(m)})^\top \mathbb{A}^{(m)}\) are i.i.d.\ random matrices with finite second moments. Therefore, by the (entrywise) law of large numbers,
\[
\frac{1}{NML}\mathbb{A}_{M,L}^\top \mathbb{A}_{M,L}
\longrightarrow
\mathbb{E}_{\mu_0}\!\left[\frac{1}{NL}(\mathbb{A}^{(1)})^\top \mathbb{A}^{(1)}\right]
= \mathbb{B},
\quad \text{a.s. as } M \to \infty,
\]
which yields the stated expression for \(\mathbb{B}\) by direct expansion.
If \(\mathrm{rank}(\mathbb{B}) = K-1\), then its null space is one-dimensional. Since the true coefficient vector \(\mbf{c}^{\mathrm{true}}\) satisfies the underlying implicit relation, it lies in \(\ker(\mathbb{B})\), implying that \(\ker(\mathbb{B}) = \mathrm{span}\{\mbf{c}^{\mathrm{true}}\}\). Hence, the coefficients are identifiable up to a multiplicative constant.
This argument follows standard results on convergence of sample covariance (Gram) matrices in statistical inverse problems and random matrix theory; see, e.g., \cite{vershynin2018high, van2000asymptotic}.
\end{proof}

There are several implications from this proposition. First, one needs to construct the basis on the support of the measure
\begin{equation*}
    \rho(dr) = \frac{1}{L}\sum_{l=1}^{L} \mathbb{E}_{\mu_0}\left[\frac{1}{N^2} \sum_{i,i'=1}^{N,N} \delta_{||\mathbf{x}_{i'}(t_l) - \mathbf{x}_{i}(t_l)||}(dr)\right],
\end{equation*}
which describes the pairwise distances explored by the dynamical systems during the observation period. If a basis function $\xi_k$ is supported outside $\text{supp}(\rho)$, it will yield a zero column in $\mathbb{B}$, leading to rank deficiency.

Second, the rank constraint on $\mathbb{B}$ implies that it has a one-dimensional null space, and its second smallest singular value is positive. By applying classical random matrix theory, the second smallest singular value of \(\frac{1}{NML}\mathbb{A}_{M,L}^{\top}\mathbb{A}_{M,L}\) converges to that of $\mathbb{B}$ as $M \to \infty$ \cite{vershynin2018high}. This ensures the identifiability of \(\mathbf{c}^{\mathrm{true}}\) when \(M\) is sufficiently large. The coefficients can then be estimated using the SVD algorithm by identifying the singular vector corresponding to the smallest singular value, a standard condition appearing in sparse recovery and compressive sensing theory \cite{donoho2006compressed, candes2008introduction}.

A special case arises when $\phi$ is \textit{1-sparse} in the basis representation, meaning only one of the coefficient entries is nonzero. In this scenario, the identifiability condition requires that exactly one column of $\mathbb{B}$ is zero, while the remaining \(K-1\) columns are linearly independent.

In our numerical experiments, we also explore the use of piecewise constant basis functions to approximate a continuous kernel function $\phi$. In this case, $\phi$ does not lie in the span of the basis, and we instead find the eigenvector of $\mathbb{B}$ corresponding to the minimal eigenvalue. We remark that such a matrix $\mathbb{B}$ is a perturbation of the matrix constructed using a piecewise basis of the form $\phi(\cdot)\chi_{I_i}(\cdot)$ on each subinterval $I_i$, up to a similarity transformation. The spectral gap of the gram matrix determines the robustness of eigenvector recovery, as governed by the Davis-Kahan theorem \cite{davis1970rotation, stewart1990matrix}. Later, we demonstrate through numerical examples that in the case of small noise, one can achieve good recovery accuracy.

%These challenges highlight the importance of employing effective regularization techniques to mitigate the sensitivity to small singular values, enhance robustness against noise, and address issues of non-identifiability. Regularization can improve the reliability of the estimation process by introducing constraints or priors that stabilize the solution.

\section{Algorithm}

Building on the previous section, the estimation problem reduces to solving the homogeneous system:
\begin{equation}
    \mathbb{A} \mathbf{c} = 0,
\end{equation}
where, for brevity, we omit the indices $M, L$. To avoid the trivial solution $\mathbf{c} = 0$, a standard approach from prior works \cite{tipping2001sparse} is to fix one coefficient of $\mathbf{c}$ to a nonzero value. Specifically, we set $\mathbf{c}_{k^\ast} = 1$ for some $k^\ast \in \{1, \dots, K\}$, leading to the reformulated system:
\begin{equation}\label{eq:nonimplicit1}
    \mathbb{A}_{-k^\ast} \mathbf{c}_{-k^\ast} = -\mathbb{A}_{k^\ast},
\end{equation}
where $\mathbb{A}_{k^\ast}$ denotes the $k^\ast$-th column of $\mathbb{A}$, and $\mathbb{A}_{-k^\ast}$ is the matrix obtained by removing this column. This choice guarantees a nontrivial solution by ensuring at least one nonzero coefficient.

This approach is well justified because the estimation problem depends only on the relative magnitudes of the coefficients. Specifically, the dynamics of each $\mathbf{x}_i$ are governed by the ratio:
\begin{equation}
    \frac{\phi(|\mathbf{x}_{i'} - \mathbf{x}_i|)}{\sum_{j \in \mathcal{N}_i} \phi(|\mathbf{x}_j - \mathbf{x}_i|)},
\end{equation}
which remains unchanged under normalization. Consequently, setting
\begin{equation}
    \bar{\mathbf{c}} = \left( \frac{c_1}{c_{k^\ast}}, \dots, 1, \dots, \frac{c_K}{c_{k^\ast}} \right)
\end{equation}
yields an equivalent solution, provided that $\mathbf{c}_{k^\ast} \neq 0$. Thus, the coefficients can be rewritten as:
\begin{equation}
    \bar{\mathbf{c}} = \mathbf{e}_{k^\ast} + \hat{\mathbf{c}}_{-k^\ast},
\end{equation}
where $\mathbf{e}_{k^\ast}$ is the canonical basis vector in $\mathbb{R}^K$, and $\hat{\mathbf{c}}_{-k^\ast}$ is the coefficient estimate from \eqref{eq:nonimplicit1}, embedded in $\mathbb{R}^K$ with its $k^\ast$-th entry set to zero.

In summary, the estimation procedure consists of two key steps:
\begin{itemize}
    \item Step 1: {solving the reformulated system \eqref{eq:nonimplicit1} for each $k = 1, \dots, K$} using an appropriate optimization method, such as least squares.  
    \item Step 2: {model selection.} Identifying the optimal index $k^*$
  to balance numerical stability and robustness.
\end{itemize}

\subsection{Challenges in Estimation}\label{subsec:challenges}

Despite its conceptual simplicity, this approach faces several fundamental challenges:

\textbf{Essential ill-posedness.} The estimation problem could become inherently ill-posed when the dimension of the null space of $\mathbb{B}$—the expectation of the empirical matrix $\mathbb{A}^{T} \mathbb{A}$—exceeds one, i.e., when $\mathrm{rank}(\mathbb{B}) < K-1$. This could be caused by the nature of the dynamical systems, or by choosing basis functions outside of the support of $\rho$. In such cases, even with an appropriate choice of $k^\ast$, solving \eqref{eq:nonimplicit1} remains ill-posed, necessitating additional constraints or regularization to restore identifiability.

\textbf{Scarce and noisy data.} In practical applications, data is often limited ($M$ is small) and subject to measurement noise. Let $\tilde{\mathbf{x}}_i(t_l)$ and $\tilde{\dot{\mathbf{x}}}_i(t_l)$ denote the noisy observations of position and velocity, respectively. The $(i,k)$ entry of the empirical matrix $\mathbb{A}^{(m,l)}$ in \eqref{eq:assemble_A} is then perturbed as follows:
\begin{align*}
    \tilde{\mathbf{A}}_{ik}^{(m,l)} &= \sum_{j \in \mathcal{N}_i} \xi_k\left(|\tilde{\mathbf{x}}_j^{(m)}(t_l) - \tilde{\mathbf{x}}_i^{(m)}(t_l)|\right) 
    \left(\tilde{\dot{\mathbf{x}}}_i^{(m)}(t_l) - (\tilde{\mathbf{x}}_i^{(m)}(t_l) - \tilde{\mathbf{x}}_j^{(m)}(t_l))\right) \\
    &= \mathbf{A}_{ik}^{(m,l)} + \tilde{\mathbf{\epsilon}}_{ik}^{(m,l)},
\end{align*}
where $\tilde{\mathbf{\epsilon}}_{ik}^{(m,l)}$ represents the noise-induced perturbation. Consequently, the true system of equations is perturbed to:
\begin{equation}\label{newe}
    \tilde{\mathbb{A}}_{M,L} \mathbf{c} = 0,
\end{equation}
whereas the correct system should be:
\begin{equation}\label{eq:noiseterm}
    \tilde{\mathbb{A}}_{M,L} \mathbf{c} = \mathbf{\eta}_{M,L}, \quad \text{where} \quad \mathbf{\eta}_{M,L} := \tilde{\mathbf{\epsilon}}_{M,L} \mathbf{c}.
\end{equation}
Here, $\mathbf{\eta}_{M,L}$ represents an unknown noise vector, making direct calibration of \eqref{newe} infeasible. Furthermore, the right singular vectors of $\tilde{\mathbb{A}}_{M,L}$ may deviate significantly from those of the true matrix $\mathbb{A}_{M,L}$ (and thus $\mathbb{B}$), complicating the identification of the correct null space. This sensitivity to noise underscores the need for robust estimation techniques, such as regularization or noise-aware optimization strategies.

Finally, the estimation performance in step 1 would affect the model selection performance in step 2.

\subsection{A Sparse Bayesian Learning (SBL) Approach}
For clarity, we consider a universal form of \eqref{eq:noiseterm} in linear inverse problem as:
\begin{equation}\label{eq:newformulation}
    \mathbf{b} = \mathbf{\Phi} \mathbf{w} + \mathbf{\eta},
\end{equation}
where \(\mathbf{b} = -\tilde{\mathbb{A}}_{k^\ast} \in \mathbb{R}^{dNML}\) is a vector of indirect measurements/observations, \(\mathbf{w} = \mathbf{c}_{-k^\ast} \in \mathbb{R}^{K-1}\) is the vector of unknowns which we seek to recover, \(\mathbf{\Phi} = \tilde{\mathbb{A}}_{-k^\ast} \in \mathbb{R}^{dNML \times (K-1)}\) is a known linear forward operator, and \(\mbf{\eta}\) represents an unknown noise vector.  We employ the \textit{Sparse Bayesian Learning (SBL)} framework \cite{tipping2001sparse}, a principled and flexible framework that addresses the challenges in section \ref{subsec:challenges}.

SBL is particularly suited for high-dimensional, noisy data, leveraging probabilistic inference and adaptive regularization to mitigate ill-posedness. Unlike conventional methods requiring manual tuning, SBL infers regularization parameters directly from data, improving efficiency and stability. Beyond regularization, SBL naturally quantifies uncertainty, accounting for variability due to initial conditions and measurement noise. This facilitates principled model selection and provides confidence measures for inferred parameters.

\subsubsection{Data and Observation Models}

%To begin with, we model the trajectory data \(\mbf{\eta}\) by conditioning on observations and incorporating an observational noise model. The dataset is defined as:
To incorporate measurement uncertainty in the observed trajectories, we introduce an additive observational noise model. In practice, the velocities $\dot{\bX}(t_l)$ are typically obtained either from numerical differentiation of position data or from noisy measurements, both of which introduce stochastic errors. Following a standard modeling assumption in inverse problems and statistical learning, we approximate these errors by independent Gaussian perturbations. The dataset is therefore modeled as:
\begin{align}\label{eq:noisemodel}
\tilde{\mathcal{D}}_{M,L} = \{\bX^{(m)}(t_l), \dot{\bX}^{(m)}(t_l) + \mbf{\zeta}^{(m,l)}\}_{m,l=1}^{M,L},
\end{align}
where \(\mbf{\zeta}^{(m,l)} = \{\mbf{\zeta}_i^{(m,l)}\}_{i=1}^N\) are i.i.d. random variables sampled from \(\mathcal{N}(\mbf{0}, \sigma_{\mathrm{velocity}}^2 I_{dN})\). This setup leads to a distribution for \(\tilde{\mbf{\epsilon}}_{ik}^{(m,l)}\):
\[
\tilde{\mbf{\epsilon}}_{ik}^{(m,l)} \sim \mathcal{N}\left(\mbf{0}, \sigma_{\mathrm{velocity}}^2 \sum_{j \in \mathcal{N}_i} \xi_k^2(|\bx_j^{(m)} - \bx_i^{(m)}|)\right).
\]

The observation noise \(\mbf{\eta}^{(m,l)} \in \mathbb{R}^{Nd}\) is then modeled as:
\begin{equation}\label{eq:noise}
\mbf{\eta}^{(m,l)} \sim \mathcal{N}\left(\mbf{0}, \mathrm{diag}\left(\sum_{j \in \mathcal{N}_i} \sigma_{\mathrm{velocity}}^2 c_k^2 \xi_k^2(|\bx_j^{(m)} - \bx_i^{(m)}|)\right)\right).
\end{equation}

Therefore, the realizations of each random variable in \eqref{eq:newformulation} respectively are the observable data $\mbf{b}$ and noise vector $\mbf{\eta}$, and a hierarchical Bayesian model based on Bayes’ theorem then provides an estimate of the full posterior distribution of $\mbf{w}$ given by
\begin{equation}
    p(\mbf{w},\mbf{\Theta}|\mbf{b}) \propto p(\mbf{b}|\mbf{w},\mbf{\Theta})p(\mbf{w}|\mbf{\Theta})p(\mbf{\Theta})
\end{equation}
where $\mbf{\Theta}$ are all the involved parameters in the random variables, where $p(\mbf{b}|\mbf{w},\mbf{\Theta})$ is the likelihood density function determined by the relationship between $\mbf{b}$ and $\mbf{w}$, and assumptions on $\mbf{\eta}$, $p(\mbf{w}|\mbf{\Theta})$ is the density of the prior distribution encoding a priori assumptions about the weights $\mbf{w}$, and $p(\mbf{\Theta})$ is the hyperprior density for all other involved parameters.

\subsubsection{The Likelihood}
The likelihood function $p(\mbf{b}|\mbf{w},\mbf{\Theta})$ models the connection between the weights $\mbf{w}$, the indirect measurements $\mbf{b}$, and the noise vector $\mbf{\eta}$. Based on the derivation in \eqref{eq:noise}, for simplicity, we can assume that \(\mbf{\eta}\) follows a zero-mean Gaussian distribution with diagonal identical covariance:
\[
\mbf{\eta} \sim \mathcal{N}\left(\mbf{0}, \sigma_{\mathrm{noise}}^2 I_{dNML}\right),
\]
where \(\sigma_{\mathrm{noise}}^2\) will be estimated from the data. Then, the likelihood is given by 
\begin{equation}
\label{eq:likelihood}
    p(\mbf{b}|\mbf{w},\sigma_{\mathrm{noise}}^2) = (2\pi)^{-dNLM/2}\sigma_{\mathrm{noise}}^{-dNLM} \exp\left\{ -\frac{||\mbf{b} - \mbf{\Phi} \mbf{w}||^2}{2\sigma_{\mathrm{noise}}^2}\right\}.
\end{equation}
\begin{remark}
The likelihood function given by \eqref{eq:likelihood} is a classical assumption for SBL that was considered, for instance, in \cite{tipping2001sparse}. More precisely, based on \eqref{eq:noise}, we can also restrict $\mbf{\eta}$ to be independent but not necessarily identically distributed, i.e., $\mbf{\eta} \sim \mathcal{N}(\mbf{0}, \Sigma)$ for some block diagonal positive definite matrix $\Sigma$ \cite{zhang2011sparse}. In practice, since $\mbf{\eta}$ is a weighted average of $\tilde{\mbf{\epsilon}}$, the numerical results show that we can still find good estimates of $\mbf{w}$ under the identical assumption.
\end{remark}

\subsubsection{The Sparse Prior and Hyperprior}
 Direct application of maximum likelihood estimation for $\mbf{w}$ and $\sigma_{\mathrm{noise}}^2$ from \eqref{eq:likelihood} would usually lead to severe overfitting. To address this issue and seek a sparse solution, sparse Bayesian learning `constrains' the targeted parameters $\mbf{w}$ by introducing an explicit sparse prior probability distribution $p(\mbf{w}|\mbf{\Theta})$.
% \begin{equation}
%     p(\mbf{w}) = \prod_{k=1}^K p(w_k) = \prod_{k=1}^K \int p(w_k|\gamma_k) p(\gamma_k)\ d\gamma_k
% \end{equation}
There are a variety of sparsity-promoting priors to choose from, including but not limited to TV-priors, spike-and-slab priors, Gaussian priors,  Laplace priors, and hyper-Laplacian distributions based on $l^p$-quasinorms with $0 < p < 1$ \cite{sant2022block,babacan2009bayesian,krishnan2009fast}.

In our paper, we consider the commonly used prior, the Gaussian prior, where 
\begin{equation}
    p(\mbf{w}|\mbf{\Theta}) = \prod_{k=1}^K p(w_k|\gamma_k) \quad \mathrm{and}\quad p(w_k|\gamma_k) = \mathcal{N}(0,\gamma_k^{-1}),
\end{equation}
with some hyperparameters $\mbf{\gamma} = (\gamma_1,\dots,\gamma_K)$. Note that $\gamma_k$ must be allowed to have distinctly different values for the conditionally Gaussian prior to promote sparsity, and this can be achieved by treating $\mbf{\gamma}$ as random variables with uninformative hyperprior $p(\mbf{\gamma})$, like a uniform hyperprior (over a logarithmic scale). Since all scales are equally likely, a pleasing consequence of the use of such `improper' hyperpriors here is that of scale-invariance \cite{tipping2001sparse}. 
\begin{remark}
    It has been shown that the sparse Bayesian learning method is equivalent to an iterative reweighted $L_1$ method, which is more efficient to find the maximally sparse representations compared with the regular $L_1$ regularized least square methods \cite{wipf2010iterative}.
\end{remark}

In our paper, we consider both a flat hyperprior and another hyperprior that enhances the knowledge of sparsity in our experiments. We employ a hierarchical Laplace hyperprior as proposed in \cite{babacan2009fast}, which is constructed in two stages with hyperparameters $(\mbf{\gamma},\lambda)$:

\begin{itemize}
    \item \textbf{Stage 1:} Each \(\gamma_k\) follows a gamma distribution, which promotes sparsity:
    \begin{equation}
        p(\gamma_k | \lambda) = \Gamma(\gamma_k | 1, \lambda/2).
    \end{equation}
    
    \item \textbf{Stage 2:} The hyperparameter \(\lambda\) is modeled using a non-informative Jeffrey’s prior:
    \begin{equation}
        p(\lambda) \propto \frac{1}{\lambda}.
    \end{equation}
\end{itemize}

\begin{remark}
The $L_1$ regularization formulation is equivalent to using a direct Laplace prior on the coefficients $\mbf{w}$, i.e., $p(\mbf{w}|\lambda) = \frac{\lambda}{2}\exp(-\lambda|\mbf{w}|)$ \cite{babacan2009fast}. However, this formulation of the Laplace prior does not allow for a tractable Bayesian analysis, since it is not conjugate to the likelihood. To alleviate this, the hierarchical priors are employed.
\end{remark}

\begin{remark}
The Gamma distribution is a common choice for hyperpriors due to its conjugacy with conditionally Gaussian models, enabling tractable inference. Compared to Gaussian priors, the Laplace prior more strongly promotes sparsity by concentrating mass near zero and along coordinate axes. Its log-concavity ensures unimodal posteriors, reducing the risk of local minima and simplifying optimization \cite{babacan2009fast}.
\end{remark}

\subsubsection{Bayesian Inference}
\paragraph{Posterior of $\mbf{w}$ given $\mbf{\Theta}$}
Combining the likelihood and prior using Bayes’ rule, the posterior distribution of all the unknowns $(\mbf{w},\mbf{\Theta})$ with data $\mbf{b}$ is given by:
\begin{equation}
    p(\mbf{w}, \mbf{\Theta} | \mbf{b}) = \frac{p(\mbf{b} | \mbf{w}, \mbf{\Theta}) p(\mbf{w}, \mbf{\Theta})}{p(\mbf{b})}.
\end{equation}
Since the marginal likelihood \(p(\mbf{b})\) is intractable, we decompose the posterior as:
\begin{equation}
\label{eq:posterior_decomp}
    p(\mbf{w}, \mbf{\Theta} | \mbf{b}) = p(\mbf{w} | \mbf{b}, \mbf{\Theta}) p(\mbf{\Theta}| \mbf{b}).
\end{equation}

Therefore, given $\mbf{\Theta}$, the first term, i.e., the posterior distribution of $\mbf{w}$ given data, \(p(\mbf{w} | \mbf{b}, \mbf{\Theta})\), is Gaussian and can be computed analytically:
\begin{equation}
    p(\mbf{w} | \mbf{b}, \mbf{\Theta}) = \mathcal{N}(\mbf{\mu}, \mbf{\Sigma}),
\end{equation}
where
\begin{equation}
\label{eq:posterior}
    \mbf{\Sigma} = \left(\mathrm{diag}(\bgamma) + \sigma_{\mathrm{noise}}^{-2} \mbf{\Phi}^T \mbf{\Phi}\right)^{-1}, \quad \mbf{\mu} = \sigma_{\mathrm{noise}}^{-2} \mbf{\Sigma} \mbf{\Phi}^T \mbf{b}.
\end{equation}

\paragraph{Optimization of hyperparameters $\mbf{\Theta}$}
For the second term, instead of applying Bayesian inference over those hyperparameters $\mbf{\Theta}$ (which is analytically intractable), sparse Bayesian learning is formulated as a type-II maximum likelihood procedure, i.e., the (local) maximisation with respect to $\mbf{\Theta}$ of the marginal likelihood  \(p(\mbf{b}, \mbf{\Theta})\), or equivalently its logarithm \(\mathcal{L}(\mbf{\Theta})\),
\begin{equation}
\label{eq:loglik}
    \mathcal{L}(\mbf{\Theta}) = \log p(\mbf{b}, \mbf{\Theta}) = \log \int p(\mbf{b} | \mbf{w}, \mbf{\Theta}) p(\mbf{w}|\mbf{\Theta})\ d\mbf{w}.
\end{equation}

Once the most-probable hyperparameters $\Theta_{MP}$ have been found, they can be plugged into (27) to give the posterior mean, which serves as the point estimate for the parameters, $\hat{w}_{MP}=\mu_{MP}$.

Under the Gaussian likelihood and prior assumptions, the marginal likelihood can be evaluated in closed form and depends on the covariance matrix
\[
\mbf{C} =
\sigma_{\mathrm{noise}}^{2} I + \mbf{\Phi}\,\mathrm{diag}(\bgamma^{-1})\,\mbf{\Phi}^{T}.
\]
\indent Evaluating the log marginal likelihood and its updates requires solving linear systems involving $\mbf{C}$ or computing quantities derived from its inverse. Since $\mbf{C}\in\mathbb{R}^{n\times n}$, where $n$ is the number of observations, a naive implementation would require $O(n^{3})$ operations per update. Efficient implementations exploit matrix identities to update the required quantities without recomputing the full matrix inverse at every iteration.

Empirically, the local maximisation of the marginal likelihood \(p(\mbf{b}, \mbf{\Theta})\) with respect to \(\mbf{\Theta}\) has been seen to work highly effectively with the greedy algorithm which select a candidate $\gamma_k$ at each iteration and update $\gamma_k$ by maximizing $\mathcal{L}(\mbf{\Theta})$ when all components of $\mbf{\gamma}$ except $\gamma_k$ are kept fixed. For the flat hyperprior, we adopt the algorithm proposed in \cite{tipping2003fast}, and for the hierarchical Laplacian hyperprior, we adopt the algorithm proposed in \cite{babacan2009fast}. In these algorithms, the quantities required for the marginal likelihood evaluation can be updated efficiently using rank-one updates, reducing the per-iteration computational cost from cubic to approximately quadratic in the number of observations.

A notable property of this procedure is that the optimal values of many of the hyperparameters $\gamma_k$ would be infinite \cite{tipping2001sparse}, which leads to a posterior with many weights $w_k$ infinitely peaked at zero and results in the sparsity of the model.

Recent work has explored alternative inference strategies for hierarchical Bayesian sparsity models based on maximum a posteriori (MAP) estimation and block-coordinate descent algorithms. These approaches can offer improved computational scalability for certain large-scale inverse problems \cite{calvetti2020sparse,glaubitz2023generalized,lindbloom2025efficient}. While the present work adopts the classical type-II maximum likelihood formulation commonly used in sparse Bayesian learning \cite{tipping2001sparse,tipping2003fast,babacan2009fast}, integrating these MAP-based optimization strategies into the proposed framework is an interesting direction for future research.

Finally, for the noise variance \(\sigma_{\mathrm{noise}}^2\), differentiation of \eqref{eq:likelihood} leads to the re-estimate:
\begin{equation}
\label{eq:noise_hat}
    \hat\sigma_{\mathrm{noise}}^2 = \frac{||\mbf{b}-\mbf{\Phi}\hat{\mbf{w}}||}{dNML-\sum_k \gamma_k}
\end{equation}

\begin{remark}
Although a Gamma prior can be placed on $\sigma_{\mathrm{noise}}^2$ for full Bayesian estimation, this approach is unreliable in practice when combined with greedy algorithms. In early iterations, poor reconstructions lead to inaccurate noise estimates, which in turn degrade subsequent updates. To avoid this instability, we fix $\sigma_{\mathrm{noise}}^2 = 0.01||\mbf{b}||^2_2$ at initialization, following the heuristic in \cite{babacan2009fast}.
\end{remark}

\subsection{Model Selection} 
\label{sec: MS}

In general, the true basis function \(\xi_{k^\ast}\) in the representation of \(\phi\) is unknown. To address this, we test each candidate function \(\xi_{k^\ast}\) for \(k^\ast \in \{1, \dots, K\}\) until a model in \eqref{eq:nonimplicit1} yields a sparse and accurate solution. When the candidate function \(\xi_{k^\ast}\) is not part of the true representation, the resulting coefficient vector \(\mbf{w}\) will lack sparsity, and the prediction error will be large. Conversely, when the correct candidate is selected, the coefficient vector \(\mbf{w}\) will be sparse, and the prediction error will be small. This approach provides a clear indicator of when the correct model has been identified. Although redundant information arises—since each term in the correct model can be used as a candidate function on the left-hand side, and the resulting models can be cross-referenced—this process is highly parallelizable, allowing each candidate term to be tested simultaneously.

In practice, one may randomly select several (or all) candidate indices \(k\) and choose the one with the smallest error according to a pre-selected error measure as \(k^\ast\). Several valid measures for model selection are discussed in \cite{zhang2018robust,kaheman2020sindy}, e.g., the weighted predictive error, 
\begin{equation}
\label{eq:wPE}
    \mathrm{wPE} = \frac{||\Phi\mbf{w}-\Phi\hat{\mbf{w}}||_2^2}{||\hat{\mbf{w}}||_2^2},
\end{equation}
which evaluates the model fit, or the weighted estimation uncertainty,
\begin{equation}
\label{eq:wEU}
    \mathrm{wEU} = \frac{\mathrm{tr}(\hat{\mbf{\Sigma}}^{(k^\ast)})}{||\hat{\mbf{\mu}}^{(k^\ast)}||_2^2},
\end{equation}
which measures the relative uncertainty of the estimated coefficients. In this work, we propose a new criteria to select \(k^\ast\) as the candidate that minimizes the weighted total uncertainty (wTU) constructed by
\begin{equation}
\label{eq:loss}
\mathrm{wTU} = \frac{\hat{\sigma}^2_\mathrm{noise} + \mathrm{tr}(\hat{\mbf{\Sigma}}^{(k^\ast)})}{1+||\hat{\mbf{\mu}}^{(k^\ast)}||_2^2},
\end{equation}
%which include both uncertainties of the estimated coefficients by the variances, $\mathrm{tr}(\hat{\mbf{\Sigma}}^{(k^\ast)})$, and the model itself by the predicted error, $\sigma^2_\mathrm{noise}$. We normalize the total uncertainty by the norm of the coefficients to compare them with different selections of $k^\ast$.
which combines two sources of uncertainty. The term $\mathrm{tr}(\hat{\mbf{\Sigma}}^{(k^\ast)})$ measures the uncertainty in the estimated coefficients through the posterior covariance, while $\hat{\sigma}^2_\mathrm{noise}$ reflects the model misfit through the estimated noise variance. The normalization by $1+||\hat{\mbf{\mu}}^{(k^\ast)}||_2^2$ allows comparisons across different candidate choices of $k^\ast$.\\
\indent It is important to note that the covariance matrix $\hat{\mbf{\Sigma}}^{(k^\ast)}$ arises from the conditional posterior distribution \(p(\mbf{w}\mid \mbf{b},\mbf{\Theta})\), with the hyperparameters $\mbf{\Theta}$ fixed at their estimated values. Therefore, the resulting uncertainty measure should be interpreted as a conditional uncertainty estimate rather than a full Bayesian uncertainty quantification. A full Bayesian treatment would require exploring the joint posterior distribution \(p(\mbf{w},\mbf{\Theta}\mid \mbf{b})\), which accounts for uncertainty in both the coefficients and the hyperparameters; see, e.g., recent sampling-based approaches for sparse Bayesian learning \cite{glaubitz2025efficient}.\\
\indent Despite this limitation, the proposed wTU criterion provides a practical indicator for model selection in our setting. In the numerical experiments presented in Section~4, it consistently identifies the correct candidate \(k^\ast\) across different noise levels and data regimes. The entire procedure is summarized in Algorithm \ref{Algorithm:learning}, and the desired estimate of $\hat{\mbf{c}}_{-k^\ast}$ is given by $\hat{\mbf{\mu}}^{(k^\ast)}$ using \eqref{eq:posterior}.

%This structured approach leverages advanced statistical techniques to address the challenges of noisy, high-dimensional data, ensuring that the resulting model is both sparse and accurate. The numerical results show that this weighted total uncertainty can correctly select the \(k^\ast\) in our examples. The entire procedure is summarized in Algorithm \ref{Algorithm:learning}, and the desire estimation of $\hat{\mbf{c}}_{-k^\ast}$ is given by $\hat{\mbf{\mu}}^{(k^\ast)}$ using \eqref{eq:posterior}.

\begin{algorithm}[!htb]
\algorithmicrequire\ $\tilde{\mathcal{D}}_{M,L}= \{\tilde{\bX}^{(m)}(t_l), \tilde{\dot{\bX}}^{(m)}(t_l)\}_{M,L}$ (training data), $\mathcal{S} = \{\xi_k\}_{k=1}^K$ (basis-functions)

\begin{algorithmic} [1]
\STATE compute $\Tilde{\mathbb{A}} = (\Tilde{\mbf{A}}^{(1,1)}, \dots, \Tilde{\mbf{A}}^{(M,L)})^T \in \mathbb{R}^{dNLM \times K}$ using \eqref{eq:assemble_A};
\FOR{$k^\ast$ = 1 to K}
    \STATE Calculate the posterior distribution $p(\mbf{w}_k|\Tilde{\mathbb{A}}_{k})$ with $ \Tilde{\mathbb{A}}_{k} = - \Tilde{\mathbb{A}}_{-k}\mbf{w}_k$ using the fast greedy algorithm, let the estimated posterior mean be $\hat{\mbf{\mu}}^{(k^\ast)}$, the variance be $\hat{\mbf{\Sigma}}^{(k^\ast)}$;
    \STATE Re-estimate the noise variance $\sigma_{\mathrm{noise}}^2$ using \eqref{eq:noise_hat};
\ENDFOR
\STATE pick $k^\ast$ with the smallest weighted total uncertainty defined by \eqref{eq:loss};
\STATE $\bar{\intkernel}^{\ast}=\xi_{k^\ast} + \mbf{\mu}^{(k^\ast)}\cdot \mbf{\xi}_{-k^\ast}$;
\end{algorithmic}
\algorithmicensure\ $\bar{\intkernel}^{\ast}$
\caption{{\bf Learning kernels}}
\label{Algorithm:learning}
\end{algorithm}

Note that the computational complexity for assembling the matrix $\Tilde{\mathbb{A}}$ is $\mathcal{O}(dN^2MLK)$, and for each iteration in the model selection procedure, the computational complexity is approximately $\mathcal{O}((dNML)^3K)$ since it involves computing an inversion of a matrix with dimension $dNML*dNML$ while evaluating $\mathcal{L}(\theta)$ and we assume $K \ll dNML$. However, in practice, we can reduce this order since the matrix is often very sparse, as shown in our numerical experiments.

\begin{remark}
Since one of the main goals of our strategy is to avoid the trivial solution/estimation, and for $\mathbb{A}_{k^\ast} \approx 0$ (equals zero at most entries), it implies that the $c_{k^\ast} = 0$ or we do not have enough information to learn the right coefficient of $\xi_k$ with the given observational data. Therefore, in our numerical examples, we exclude the candidate $k^\ast$ if more than half of the entries of $\mathbb{A}_{k^\ast}$ are zero. This can not only accelerate the model selection process, but also exclude the misspecification caused by inadequate information.     
\end{remark}

\subsection{Uncertainty Quantification}

The posterior variance \(\mbf{\Sigma}^{(k^\ast)}\) can be used as a good indicator for the uncertainty of the estimation for \(\hat{\mbf{c}}_{-k^\ast}\) based on our Bayesian approach. Therefore, for the estimation of \(\phi\), we can construct an uncertainty quantification region for any \(r \in \mathbb{R}^+\) by:
\begin{align}
    \xi_{k^\ast}(r) + & \sum_{k \neq k^\ast} \min\{ (\mbf{\mu}^{(k^\ast)}_k-2\mbf{\Sigma}^{(k^\ast)}_{kk}) \xi_k(r), (\mbf{\mu}^{(k^\ast)}_k+2\mbf{\Sigma}^{(k^\ast)}_{kk}) \xi_k(r)\} \leq \bar{\intkernel}^{\ast} \notag\\
    & \quad \leq \xi_{k^\ast}(r) + \sum_{k \neq k^\ast} \max\{ (\mbf{\mu}^{(k^\ast)}_k-2\mbf{\Sigma}^{(k^\ast)}_{kk}) \xi_k(r), (\mbf{\mu}^{(k^\ast)}_k+2\mbf{\Sigma}^{(k^\ast)}_{kk}) \xi_k(r)\}.
\end{align}
where the mean $\mbf{\mu}^{(k^\ast)}$ and the covariance matrix $\mbf{\Sigma}^{(k^\ast)}$ are again given by \eqref{eq:posterior} with the estimated $\mbf{\gamma}$ and $\sigma_{\mathrm{noise}}^2$.

\iffalse
\subsection{Generalized Dynamics with Self-dynamic Function}

If there exists the effect of the self-dynamic function \(\mbf{F}(\mbf{X}(t), \mbf{\alpha})\), we can still follow the strategies illustrated above to estimate the interaction function \(\phi\). However, when we assemble the data matrix \(\mbf{\Phi}\) and \(\mbf{b}\), i.e., \(\Tilde{A}_{M,L}\), we need to replace \eqref{eq:assemble_A} with:
\begin{equation}
    \mbf{A}_{ik}^{(m,l)} = \sum_{j \in \mathcal{N}_i} \xi_k(|\bx_j^{(m)}(t_l)-\bx_i^{(m)}(t_l)|)(\dot\bx_i^{(m)}(t_l) - f_i(\bx_{i}^{(m)}(t_l),\alpha_i) -  (\bx_{i}^{(m)}(t_l) - \bx_{j}^{(m)}(t_l))) \in \mathbb{R}^d,
\end{equation} 
where \(f_i(\cdot,\alpha_i)\) is a known parametric function with unknown hyperparameters \(\alpha_i\) for \(i=1,\dots,N\), and \(\balpha\) can be estimated during the ``Model Estimation'' step by maximizing the updated marginal likelihood with one more hyperparameter \(\balpha\):
\begin{equation}
    \mathcal{L}(\mbf{\gamma}, \lambda, \mbf{\Sigma}, \balpha) = -\frac{1}{2} \log |\mbf{C}_{\balpha}| - \frac{1}{2} \mbf{b}_{\balpha}^T \mbf{C_{\balpha}}^{-1} \mbf{b}_{\balpha} + (N-1) \log(\lambda) - \frac{\lambda}{2} \sum_k \gamma_k,
\end{equation}
where \(\mbf{C}_{\balpha} = \mbf{\Theta} + \mbf{\Phi}_{\balpha} \text{diag}(\mbf{\gamma})^{-1} \mbf{\Phi}_{\balpha}^T\), \(\mbf{\Phi}_{\balpha}\), and \(\mbf{b}_{\balpha}\) depend on \(\balpha\). We embed the maximization with respect to \(\balpha\) in the constructive (greedy) algorithm.
\fi

\section{Numerical Experiments}

\paragraph{Numerical setup.} We consider a system of $N$ interacting agents, where each agent's state vector lies in $\mathbb{R}^d$. We simulate agent trajectories over the time interval $[0,T]$, with parameters specified in Tables \ref{t:OD_params}, \ref{t:CK_params}, and with i.i.d. initial conditions sampled from a prescribed uniform distribution. For training data, we generate $M$ independent trajectories, each observed at $L$ equidistant time points $0 = t_1 < t_2 < \cdots < t_L = T$. Additive i.i.d. Gaussian noise is applied to the agent velocities, $\dot\bX^{(m)}(t_l)$, as specified in \eqref{eq:noisemodel} with different noise levels w.r.t. the average velocity, i.e. the standard deviation of the noise equals to the percentages $(0\%, 5\%, 10\%, \dots)$ of the mean
\begin{equation*}
    \bar{v} = \frac{1}{NML}\sum_{i,m,l=1}^{N,M,L} \dot\bx_i^{(m)}(t_l).
\end{equation*}
And the empirical distribution of the pairwise distance, denoted $\rho_r$, is computed by the pairwise distance in each training dataset respectively,
\begin{equation*}
    \rho_r^{\mathcal{D}_{M,L}}(dr) = \frac{1}{ML}\sum_{m,l=1}^{M,L} \frac{1}{N(N-1)} \sum_{i=1}^N\sum_{i' \in \mathcal{N}{i}} \delta_{||\bx_{i'}^{(m)}(t_l) - \bx_{i}^{(m)}(t_l)||}(dr),
\end{equation*}
where $\delta$ is the Dirac distribution.
Since the learned interaction kernel $\hat{\phi}$ may differ from the ground truth 
$\phi_{true}$ by a multiplicative constant, we normalize all estimations using a relative $L_1$-norm weighted by the empirical distribution $\rho_r$:
\begin{equation*}
    \frac{\int |\phi_{true}(r)|\ d\rho_r}{\int |\hat{\phi}(r)|\ d\rho_r}.
\end{equation*}
This normalization enables consistent comparison across noise levels and data sizes.
All ODE systems are numerically integrated using MATLAB\textsuperscript{\textregistered} R2023a's \textrm{ode$15$s} solver, with relative and absolute tolerances set to $10^{-5}$ and $10^{-6}$ respectively.

\subsection{First-order systems: 1D and 2D opinion dynamics model}

Opinion dynamics models are widely used to understand collective behavior in social systems, including consensus formation, voting, and decentralized decision-making. We consider a first-order system of $N$ interacting agents, where interactions are inherently nonsymmetric. The evolution of each agent's opinion is governed by the following first-order differential equation:

\begin{eqnarray}
  \dot\bx_i &=&\sum_{i'=1}^N \frac{\phi(|\bx_{i'} - \bx_{i}|)}{\sum_{j \neq i}^N\phi(|\bx_j-\bx_i|)} (\bx_{i'} - \bx_i).
  \label{eq:1storderOD}
\end{eqnarray}

A widely used kernel in modeling opinion exchange involves employing a discontinuous interaction kernel. Such kernels effectively capture threshold-based interactions, where individuals adjust their opinions depending on the proximity of their current stance to others \cite{motsch2014heterophilious}. One example of such an interaction function is defined as:

\begin{equation}
  \intkernel(r) = 
  \begin{cases}
  1 & \textrm{if } 0 \leq r < 0.5\\
  0.1 & \textrm{if } 0.5 \leq r < 1\\
  0 & \textrm{if } r \geq 1
  \end{cases}
\label{eq:ODS_phi1}
\end{equation}

The interaction kernel $\phi$ characterizes non-repulsive interactions among agents, ensuring that each individual seeks to align its opinion with those of its connected neighbors. This framework effectively encapsulates key aspects of opinion formation, where both close-range and long-range interactions contribute to the evolution of collective consensus (Figure 1).

\begin{table}[H]
\centering
\begin{tabular}{|c|c|c|c|c|c|c|}
\hline 
$N$ & $M$ & $L$ & $[0,T]$ & Learning domain & $K$ & $\xi_k(r)$ \\  
\hline 
100 & 3 & 6 & $[0,5]$ & $[0,10]$ & 100 & $\chi_{[\frac{10(k-1)}{K},\frac{10k}{K}]}(r)$ \\  
\hline
\end{tabular}
\caption{System and learning parameters for the first-order opinion dynamics model.}
\label{t:OD_params}
\end{table}

\begin{figure}[!htb]
\centering
(a)\begin{minipage}{.45\textwidth} \centering  
\includegraphics[scale=0.21]{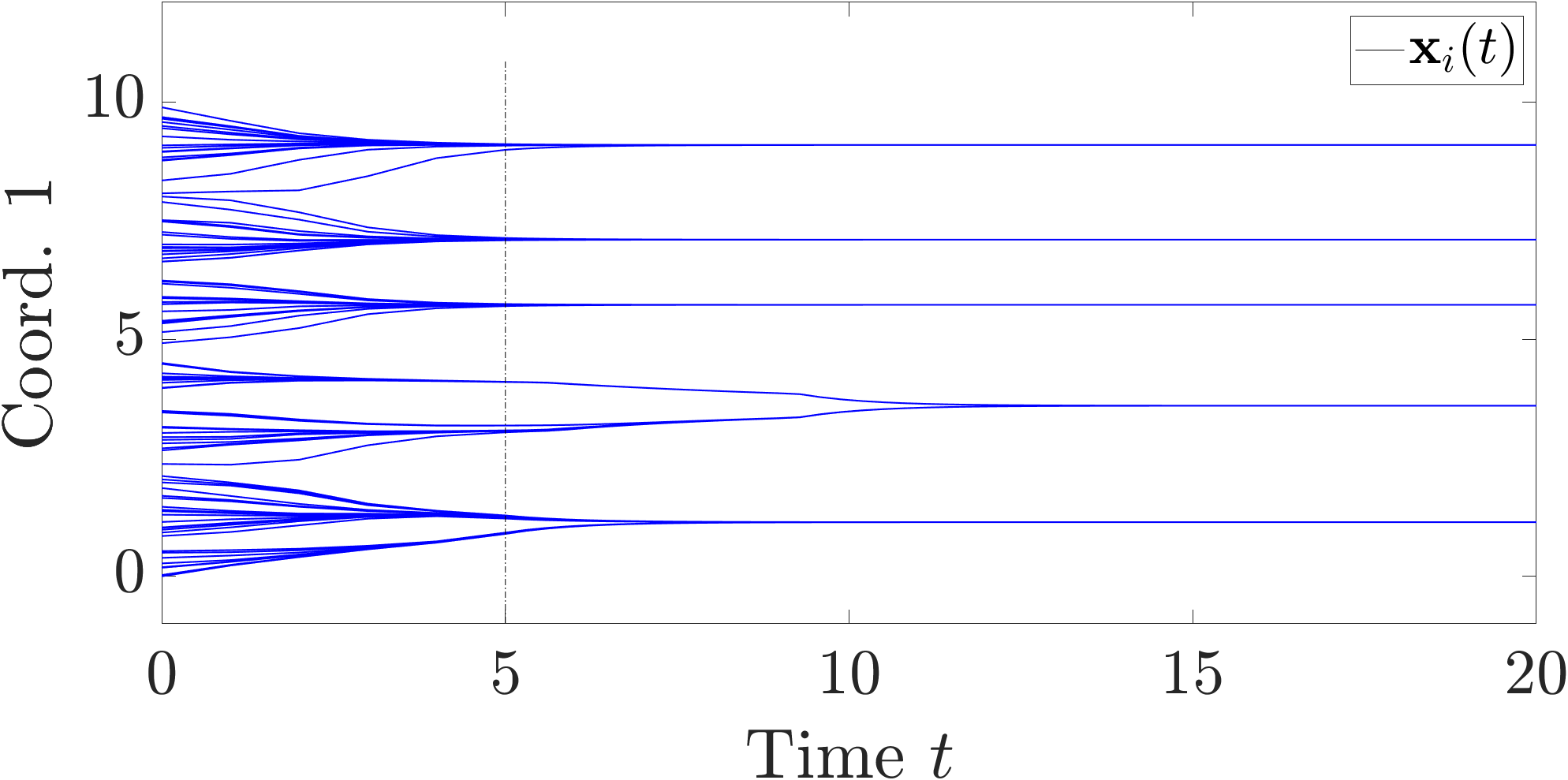}
\end{minipage}
(b)\begin{minipage}{.45\textwidth} \centering  
\includegraphics[scale=0.23]{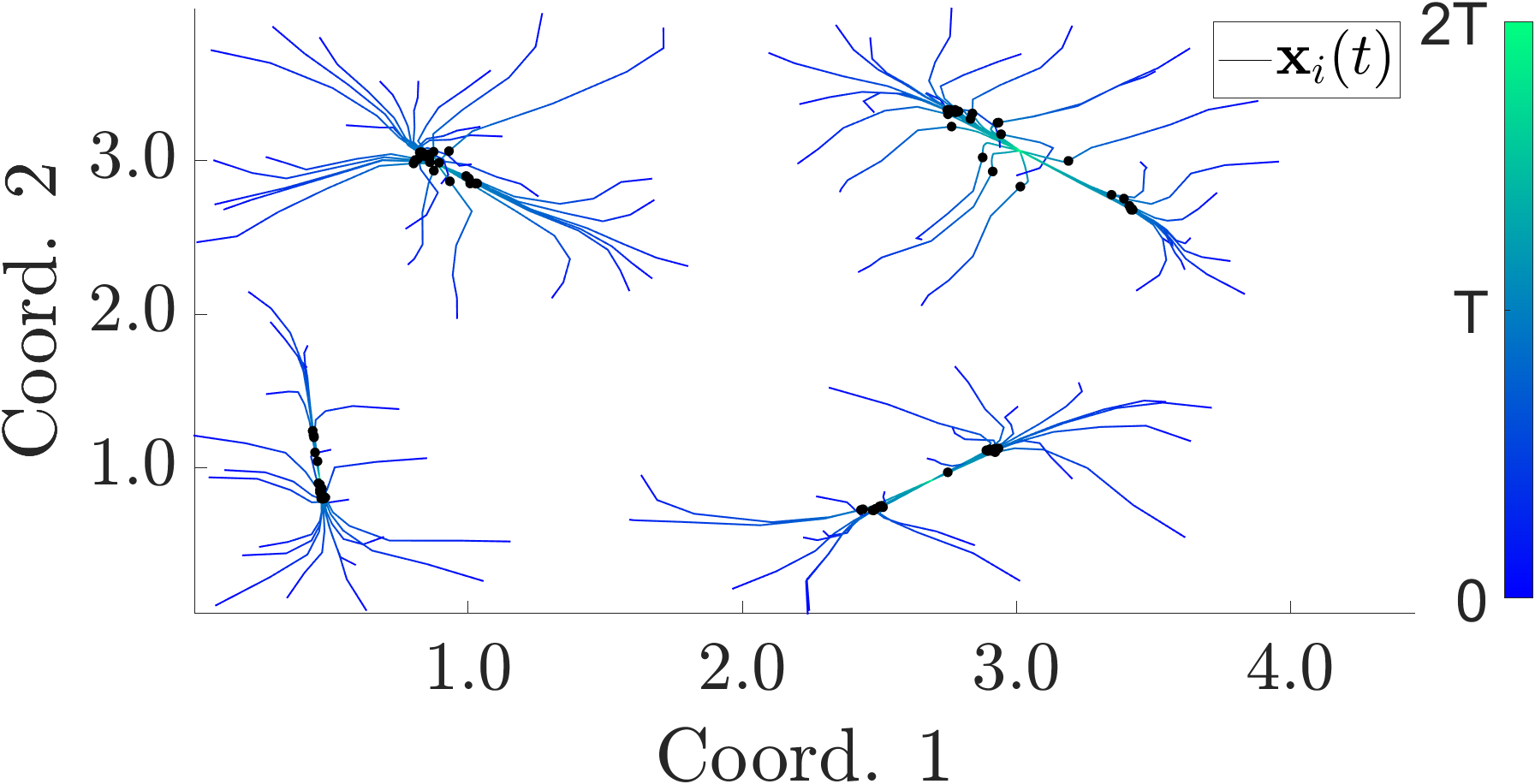}
\end{minipage}
\caption{Trajectory profiles for the opinion dynamics for 1D and 2D examples}
\label{ODtraj}
\end{figure}

We approximate the true interaction kernel using a piecewise constant expansion:
\begin{equation}
  \phi = \sum_{k=1}^{100}c_k\xi_k,
\end{equation}
where $\xi_k = \chi_{[\frac{10(k-1)}{K},\frac{10k}{K}]}(r)$ and the coefficients $c_k$ are given by:
\begin{equation}
  c_k = 
  \begin{cases}
  1 & \textrm{if } k = 1,\dots,5\\
  0.1 & \textrm{if } k = 6,\dots,10\\
  0 & \textrm{if } k = 11,\dots,100
  \end{cases}
\end{equation}
This representation matches the ground truth kernel in \eqref{eq:ODS_phi1} and enables exact recovery under ideal conditions.

% This estimation method ensures an accurate reconstruction of the underlying opinion dynamics, providing insights into the mechanisms governing the evolution of collective opinions.

Figure \ref{fig:OD_ex1} demonstrates that in both 1D and 2D dynamics, our method can capture the exact function when the noise level is small. And when noise increases, although high noise affects the estimation accuracy near $r=0$, where pairwise data is sparse, the model still captures critical structural features of $\phi$ (e.g., discontinuities), ensuring accurate long-term trajectory prediction.

\begin{figure}[!htb]
\centering
(a) \begin{minipage}{.46\textwidth} \centering    
\includegraphics[scale=0.18]{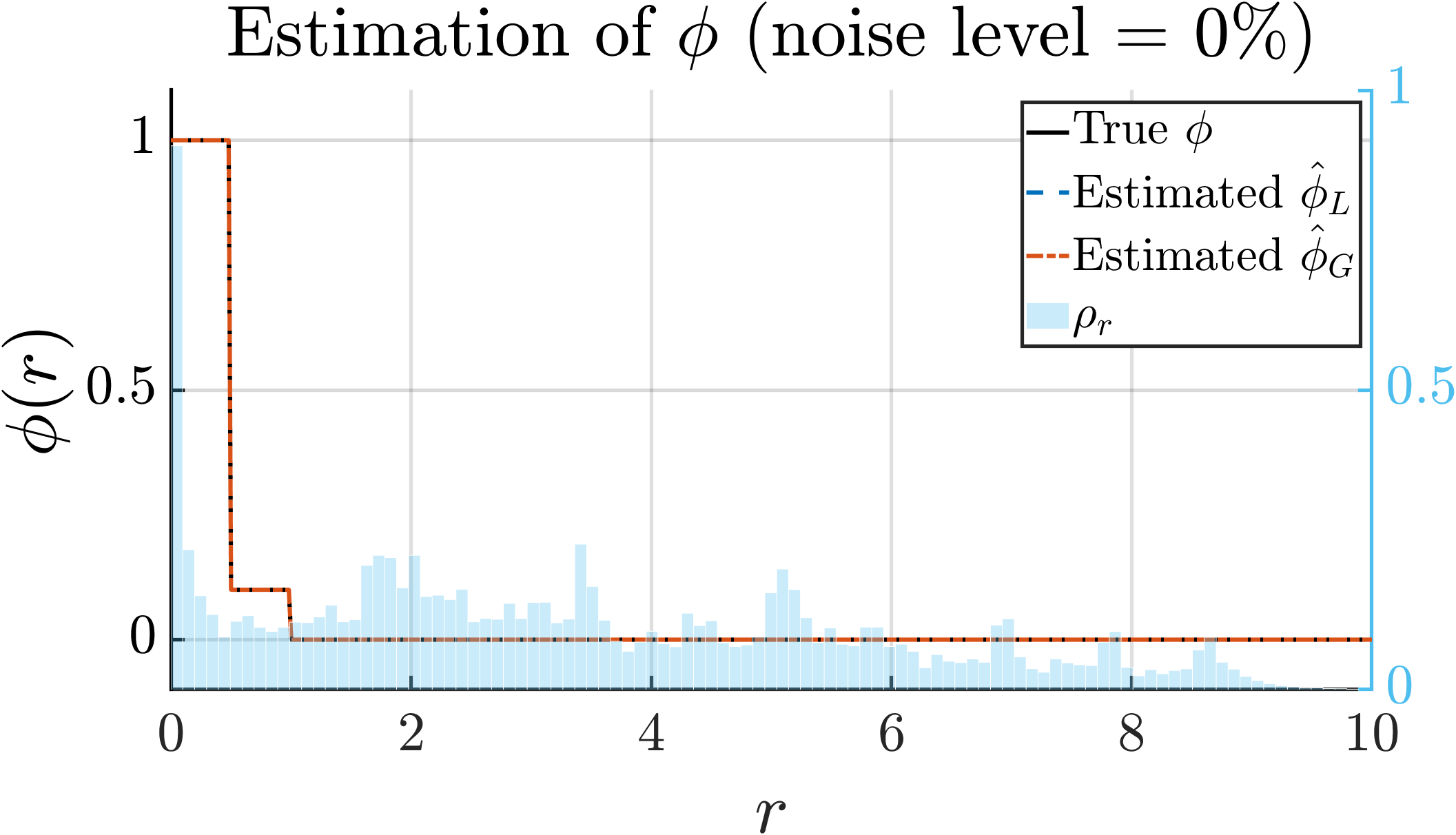}
\end{minipage}
(d)\begin{minipage}{.46\textwidth} \centering   
\includegraphics[scale=0.18]{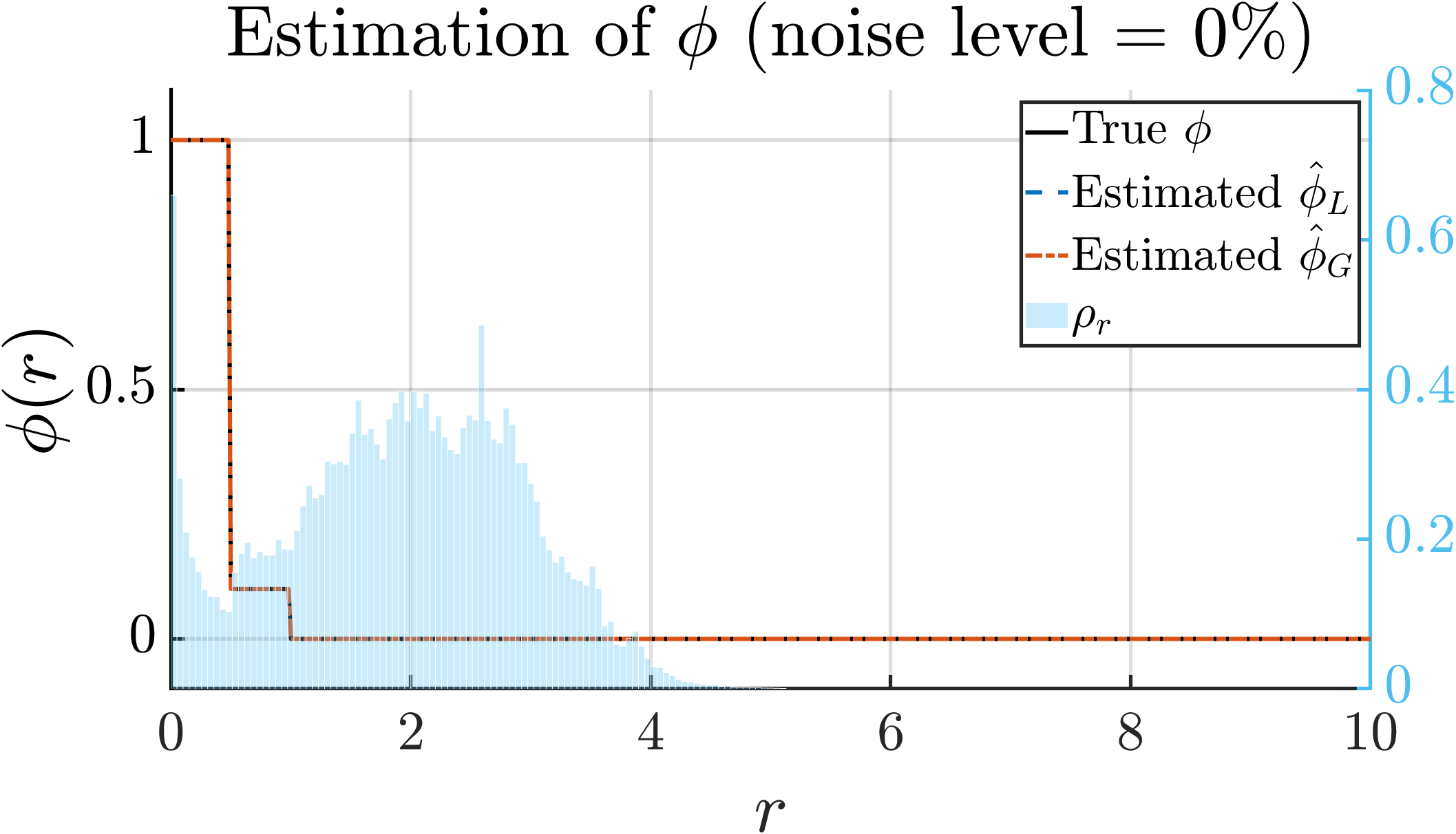}
\end{minipage}
(b) \begin{minipage}{.46\textwidth} \centering    
\includegraphics[scale=0.18]{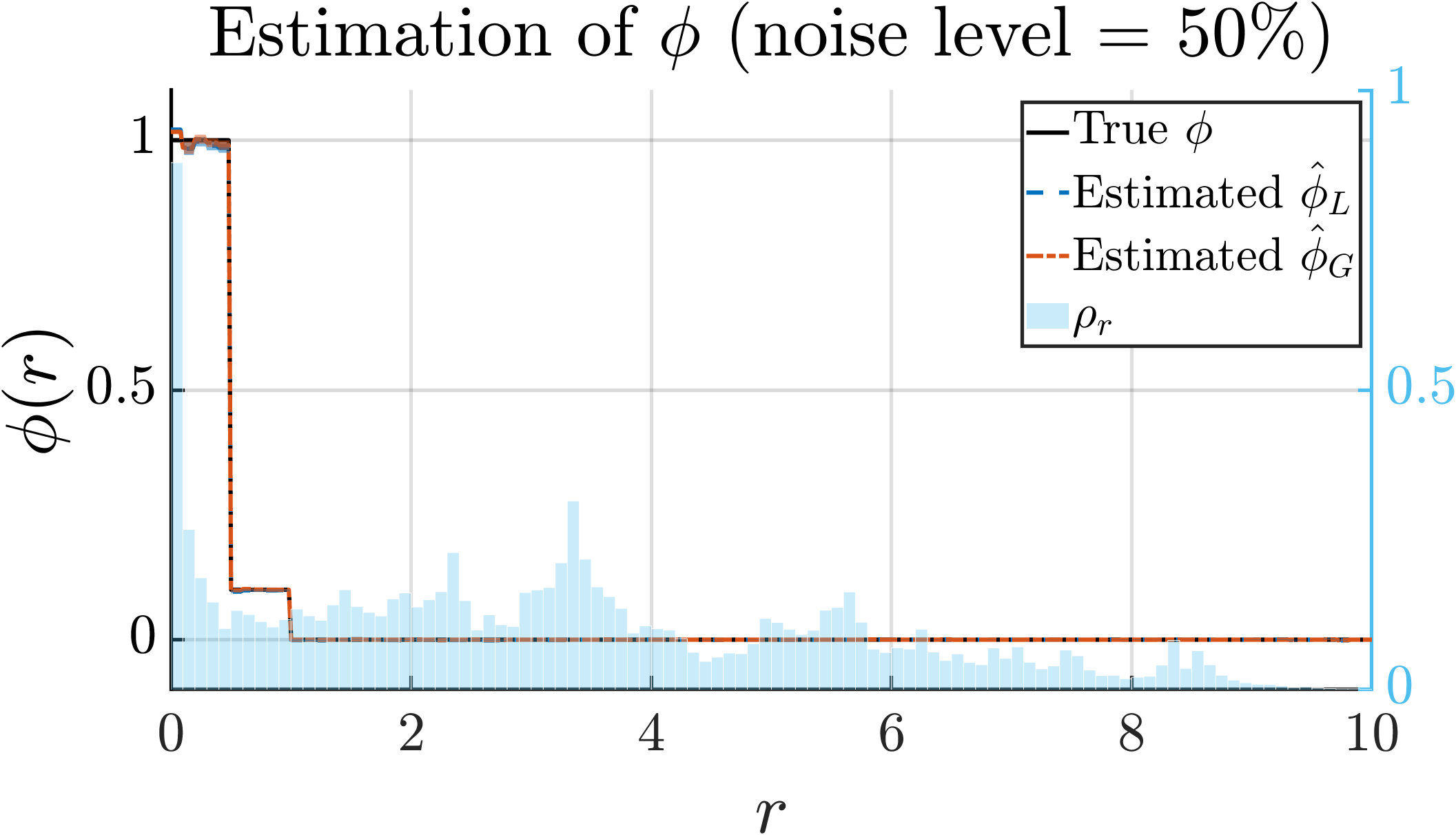}
\end{minipage}
(e)\begin{minipage}{.46\textwidth} \centering   
\includegraphics[scale=0.18]{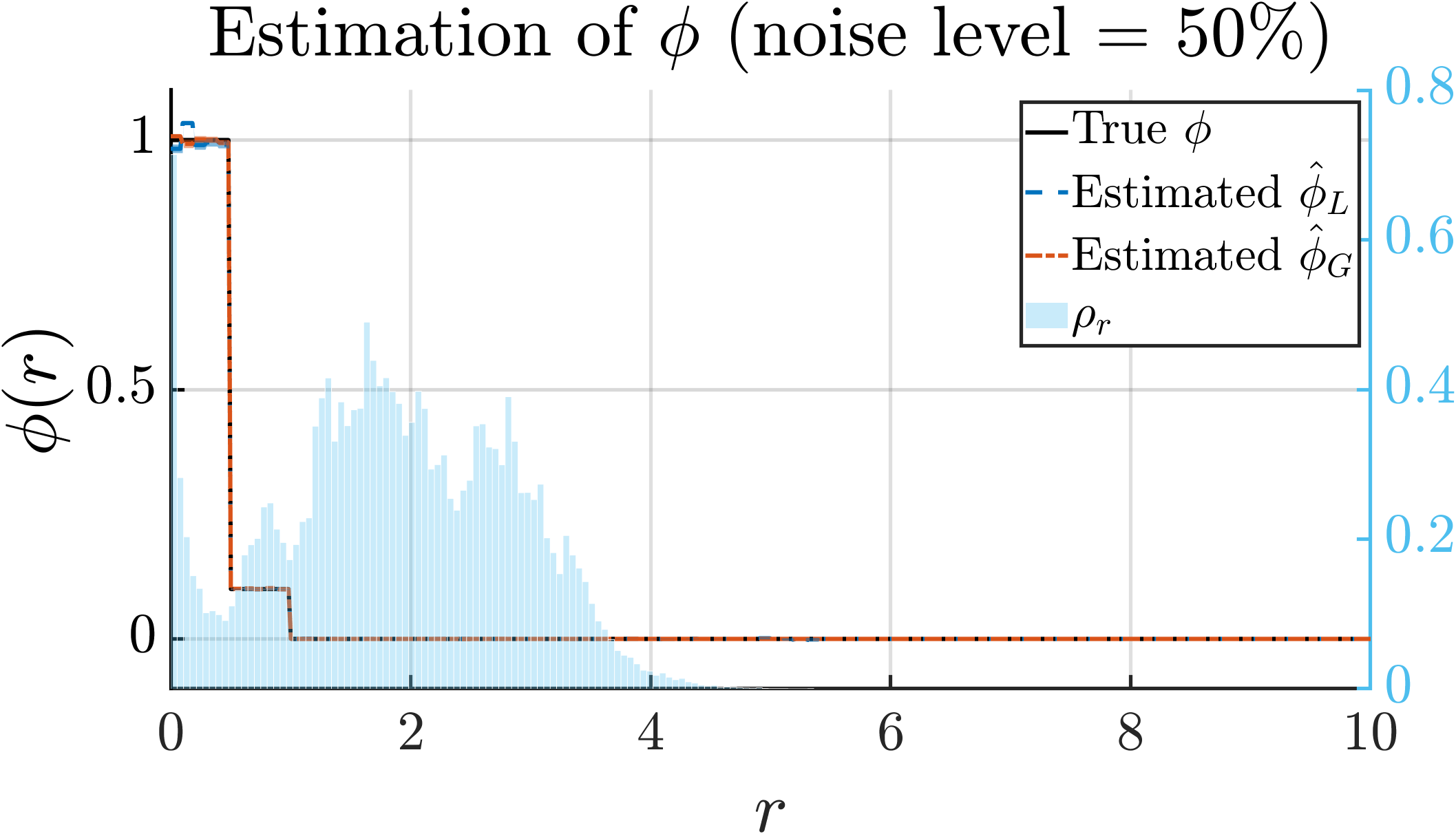}
\end{minipage}
(c) \begin{minipage}{.46\textwidth} \centering    
\includegraphics[scale=0.18]{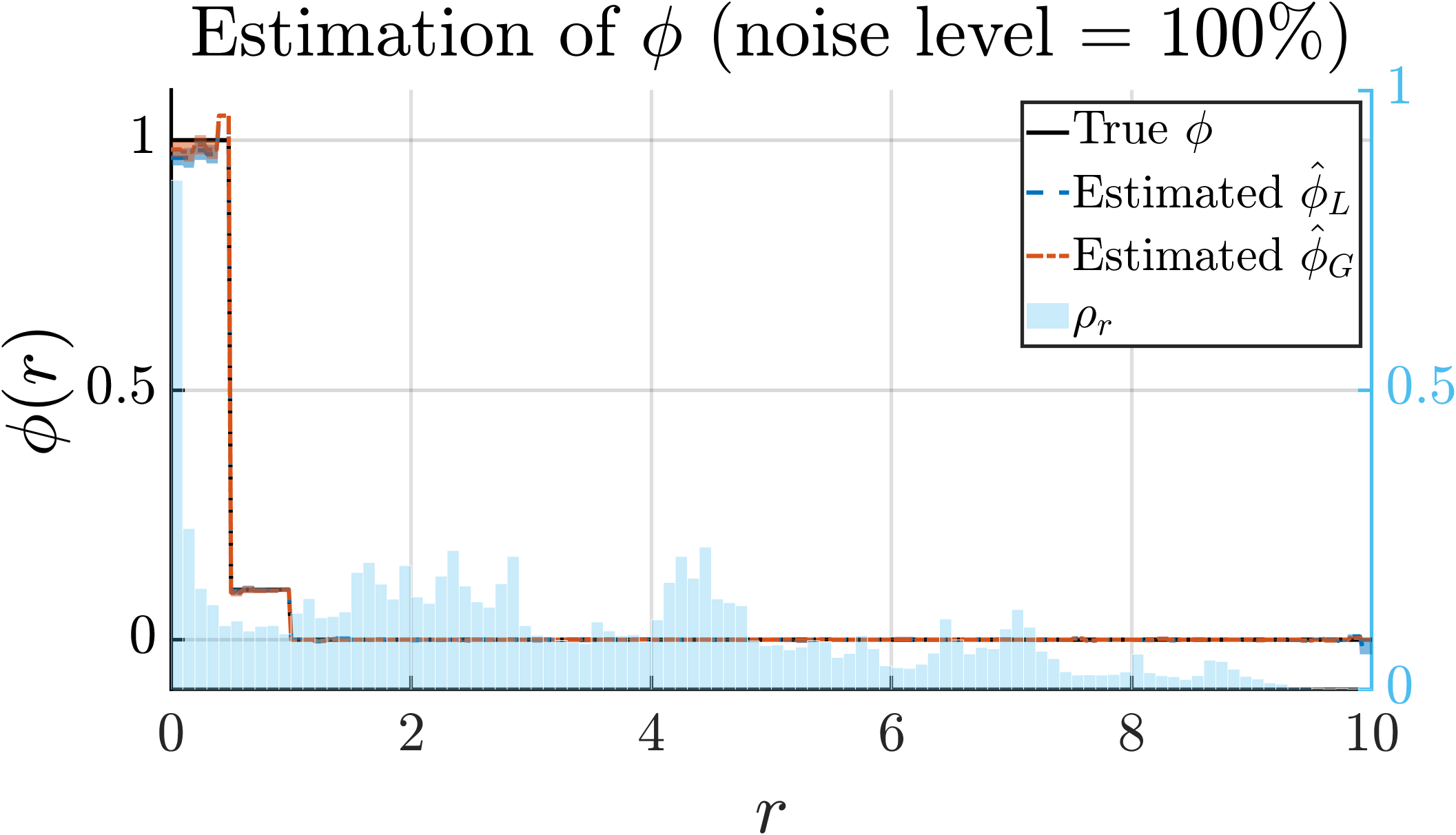}
\end{minipage}
(f)\begin{minipage}{.46\textwidth} \centering   
\includegraphics[scale=0.18]{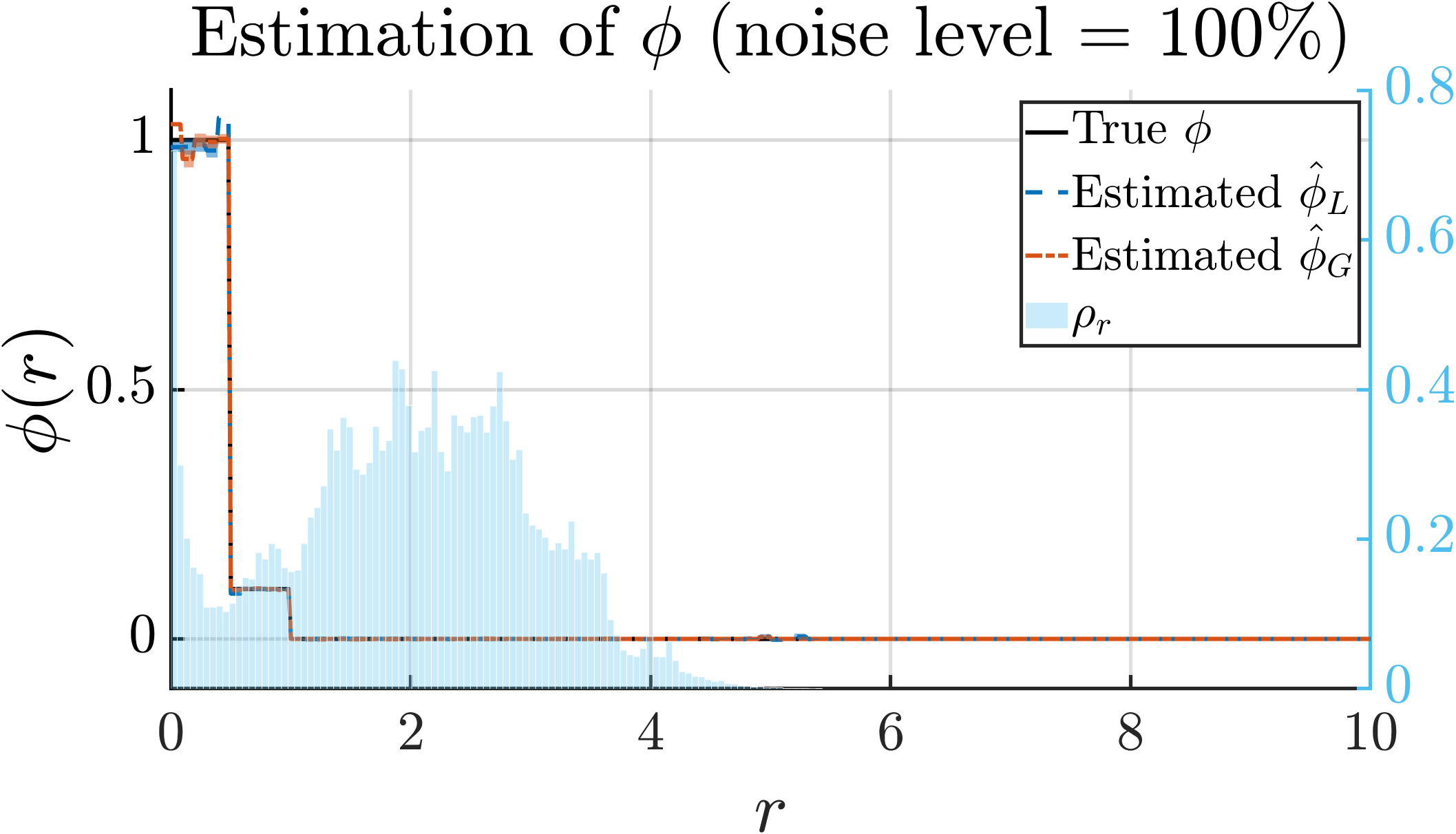}
\end{minipage}
\caption{Estimated interaction kernels $\hat{\phi}$ for 1D (left) and 2D (right) opinion dynamics under increasing noise levels ($(M,L) = (3,6)$). The curves $\hat{\phi}_L$ and $\hat{\phi}_G$ denote the estimates obtained using the Fast Laplace method (hierarchical Laplace hyperprior) and the Gaussian sparse Bayesian learning method (flat hyperprior), respectively, as described in Section~3. The solid black curve represents the true interaction kernel $\phi$. Different colored curves correspond to different noise levels in the observed data.}
\label{fig:OD_ex1}
\end{figure}

The predictive error of $\hat{\phi}_L$ (hierarchical Laplace hyperprior) in the relative $L_\infty$-norm with different training data sizes varied by $M$ is shown in Figure \ref{fig:OD_errors}. When the noise level is small, our method can precisely estimate $\phi$ with only a small number of observations. And when the noise level is large, increasing the number of observations would reduce the predictive error in $\phi$ as we expected. 

We also evaluate model selection performance using three criteria: wTU in \eqref{eq:loss} (our proposed thresholded utility loss), wPE in \eqref{eq:wPE} (predictive error), and wEU in \eqref{eq:wEU} (estimation uncertainty). Table \ref{tab:OD1_rate} summarizes the average success rate for each criterion in identifying the correct support $k^\ast \in \{1,\dots, 10\}$. Notably, wTU consistently selects the correct model across all noise levels, while the other two criteria fail under moderate to high noise, especially for wEU, which aligns with observations in \cite{zhang2018robust}, where small-magnitude coefficients dominate uncertainty estimates. Although we can add a threshold for the estimated coefficients to correct the model selection with wEU as suggested in \cite{zhang2018robust}, the wTU we proposed leverages thresholding over weighted test loss, offering more robustness without additional tuning.

\begin{figure}[!htb]
\centering
(a) \begin{minipage}{.45\textwidth} \centering    
\includegraphics[scale=0.5]{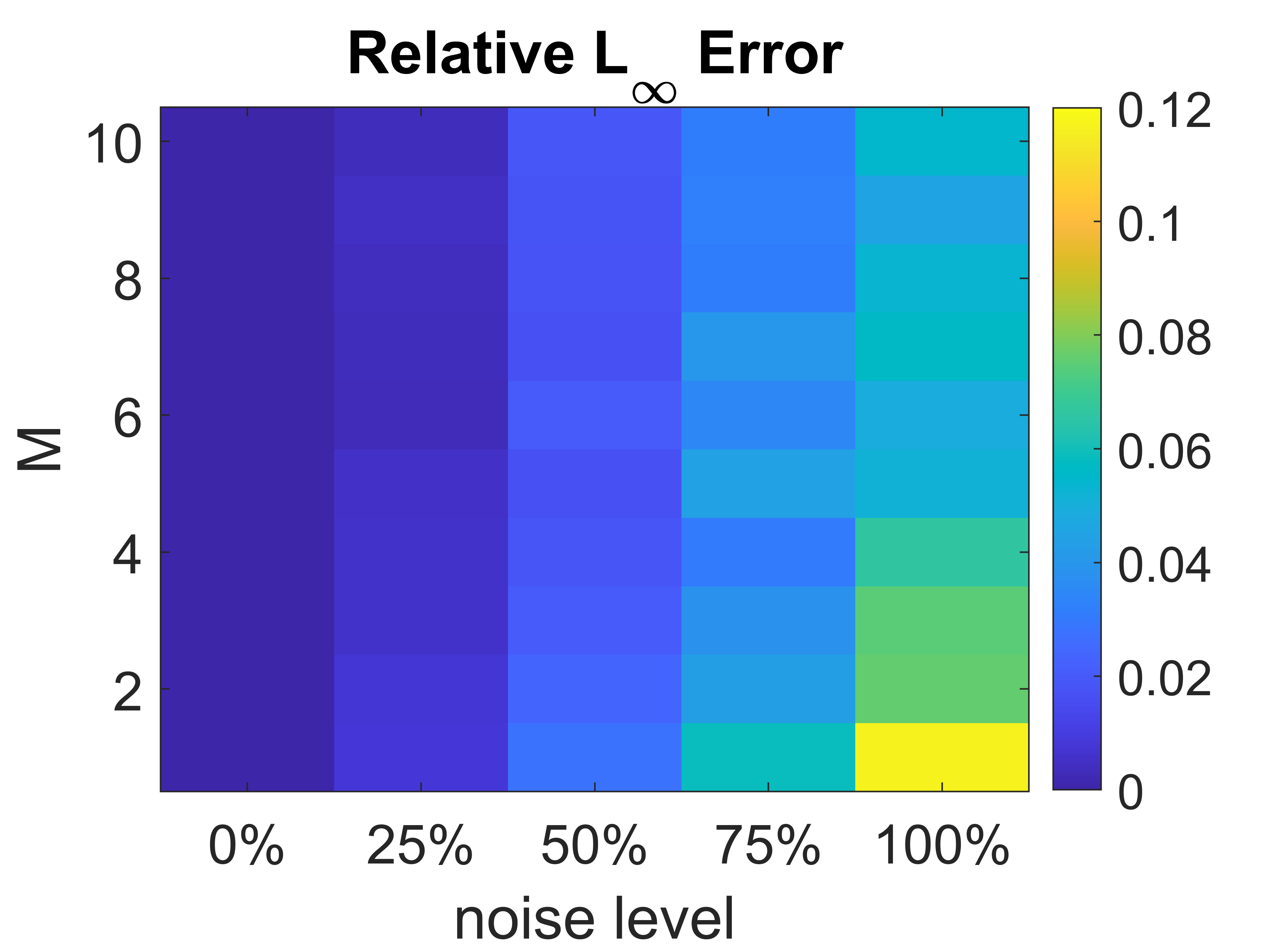}
\end{minipage}
(b) \begin{minipage}{.45\textwidth} \centering    
\includegraphics[scale=0.5]{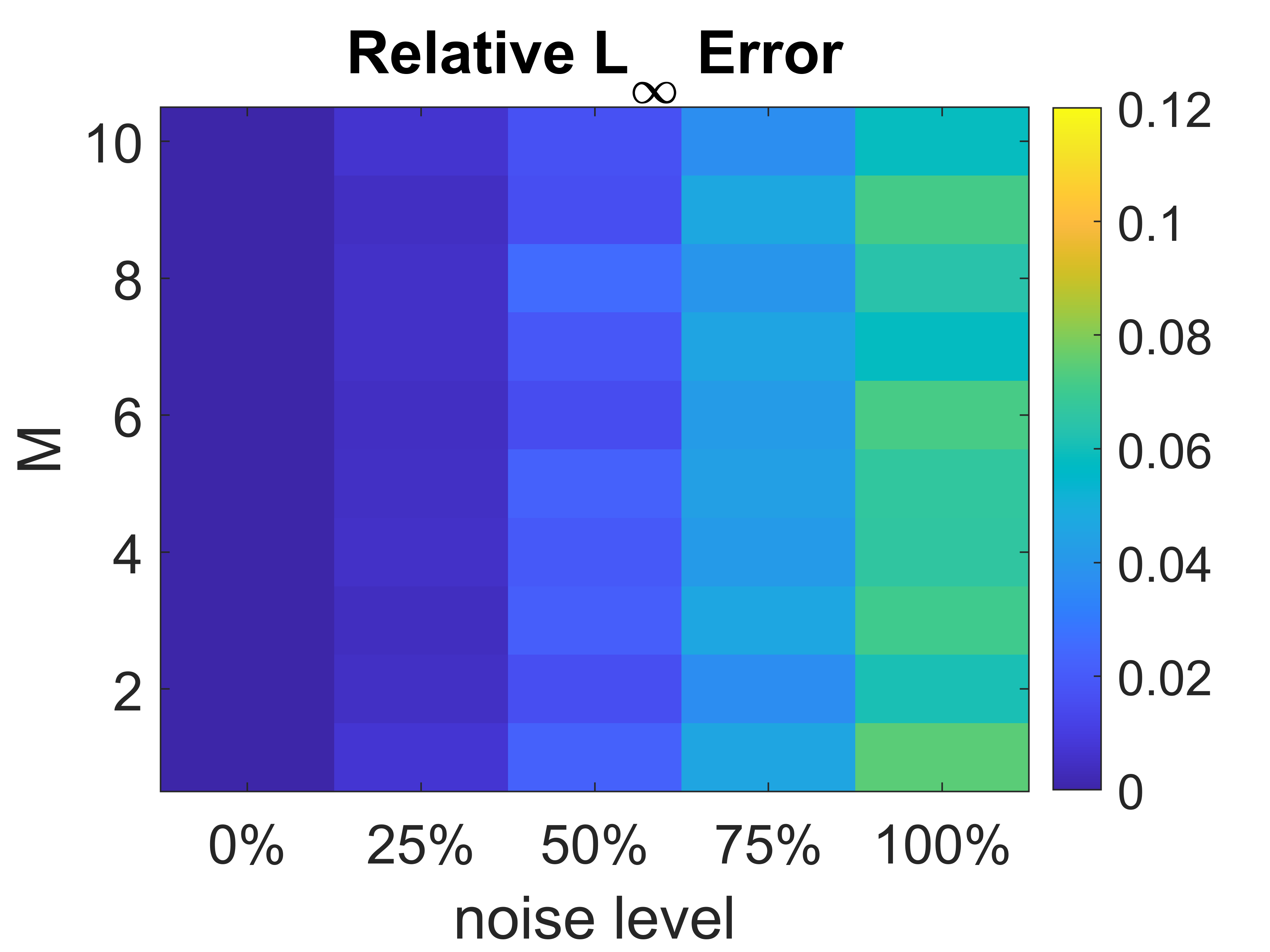}
\end{minipage}
\caption{Prediction error of the estimated interaction kernel $\hat{\phi}_L$ (hierarchical Laplace hyperprior) for (a) 1D and (b) 2D systems under varying numbers of trajectories $M$ and noise levels, with $L=6$ fixed. The error $\|\hat{\phi}_L - \phi\|$ quantifies the discrepancy between the estimate and the true kernel $\phi$.}
\label{fig:OD_errors}
\end{figure}

\begin{table}[tbhp]
\caption{The average successful rate of different model selection criteria.} \label{tab:OD1_rate}
\begin{center}
%\resizebox{\linewidth}{!}{%
\begin{tabular}{cccccc}
\toprule
$d=1$ & noise level & & & & \\
\cmidrule(lr){1-1}\cmidrule(lr){2-6}
Criterion & $0\%$ & $25\%$ & $50\%$ & $75\%$ & $100\%$ \\
\cmidrule(lr){1-1}\cmidrule(lr){2-6}
wTU & 1.00 & 1.00 &	1.00 & 1.00 & 1.00\\
wPE & 1.00 & 1.00 &	1.00 & 0.98 & 0.98\\
wEU & 1.00 & 0.04 &	0 & 0 & 0\\
\midrule
$d=2$ & noise level & & & & \\
\cmidrule(lr){1-1}\cmidrule(lr){2-6}
Criterion & $0\%$ & $25\%$ & $50\%$ & $75\%$ & $100\%$ \\
\cmidrule(lr){1-1}\cmidrule(lr){2-6}
wTU & 1.00 & 1.00 &	1.00 & 1.00 & 1.00\\
wPE & 1.00 & 1.00 &	1.00 & 0.99 & 0.99\\
wEU & 1.00 & 0.11 &	0 & 0 & 0\\
\bottomrule
\end{tabular} 
%}
\end{center}
\end{table}

Figure \ref{fig:OD_runtime} shows the average running time for estimating $\phi$ with the Laplace hyperprior given different sizes of datasets. The results are measured with $50$ trials for different $M$, and each of them includes $5$ different levels of noise as we measure the prediction errors above. The shadow regions indicate the standard deviation of the running times, and the results show that when $d=1$, the average running time is roughly controlled by $\mathcal{O}(M^{3/2})$, and $\mathcal{O}(M^{5/2})$ for $d=2$. As mentioned in section \ref{sec: MS}, the theoretical computational complexity is $\mathcal{O}(M^3)$, however, we can get lower complexity in practice because of the sparsity of our
assembled matrices.

\begin{figure}[!htb]
\centering
(a) \begin{minipage}{.45\textwidth} \centering    
\includegraphics[scale=0.5]{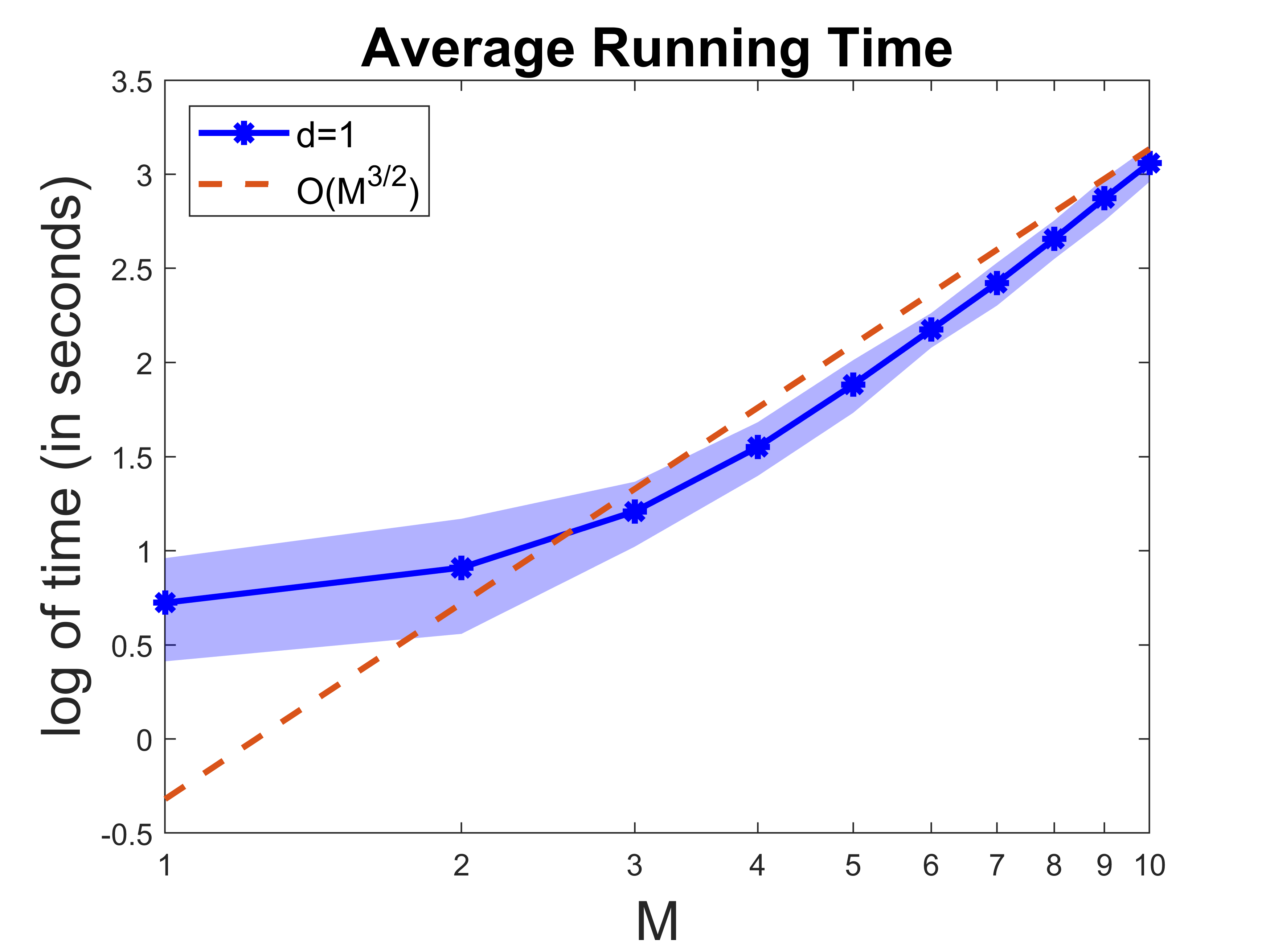}
\end{minipage}
(b)\begin{minipage}{.45\textwidth} \centering   %(This figure is new)
\includegraphics[scale=0.5]{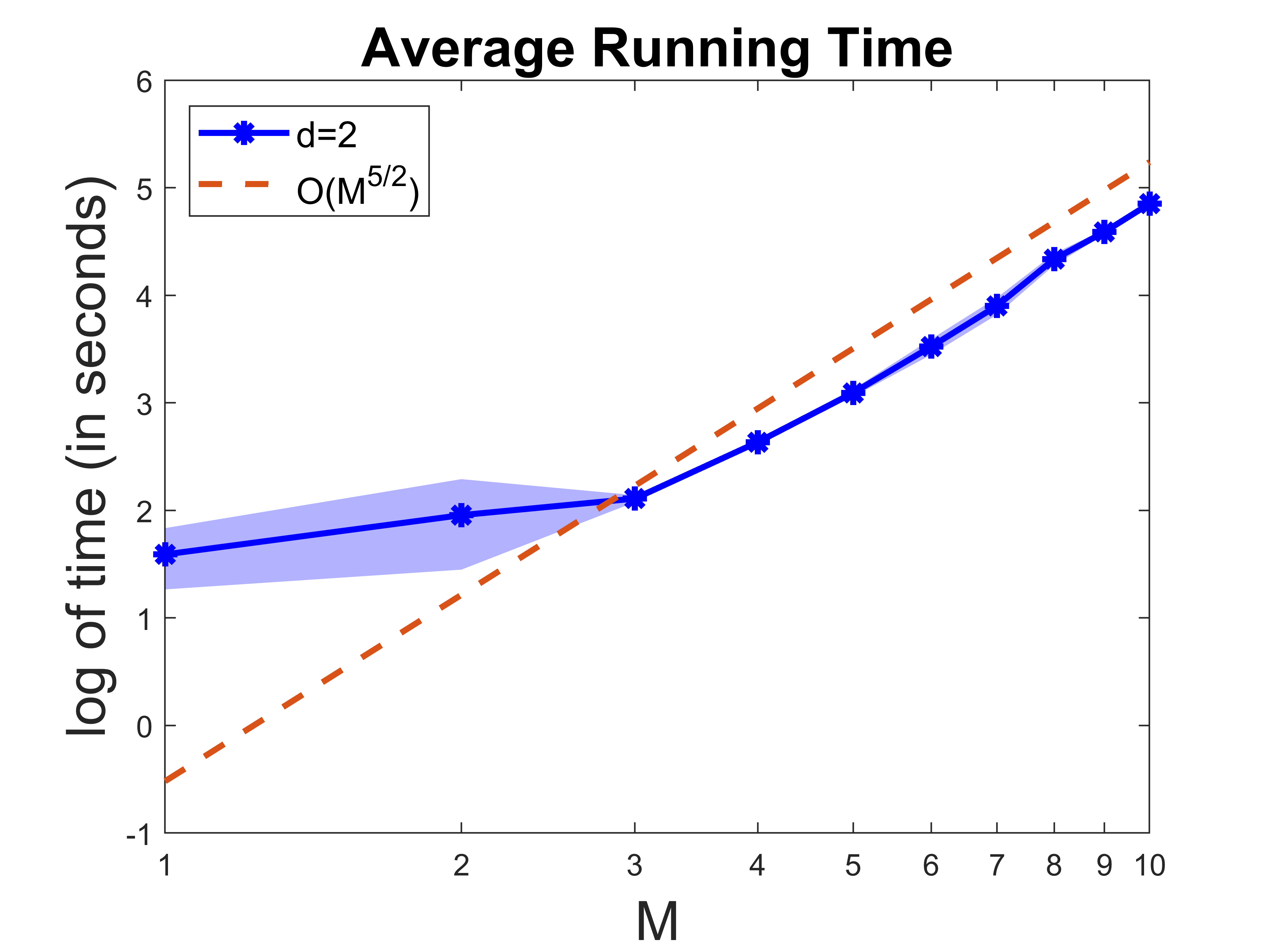}
\end{minipage}
\caption{Average running time for learning opinion dynamics models with two different dimensions.}
\label{fig:OD_runtime}
\end{figure}

\paragraph{Trajectory prediction error.} To complement the kernel-recovery results we measure the relative max-time trajectory error obtained by integrating the dynamics with the learned kernel $\widehat\phi$ and comparing against the ground-truth trajectory from the same initial condition. For a time window $\mathcal{W}$,
\begin{equation}
\mathrm{err}(\mathcal{W}) \;=\; \frac{\displaystyle\max_{t\in\mathcal{W}}\bigl\|x_{\text{true}}(t)-\widehat x(t)\bigr\|_2}{\displaystyle\max_{t\in\mathcal{W}}\|x_{\text{true}}(t)\|_2}.
\label{eq:err-pred}
\end{equation}
We report values on the prediction window $\mathcal{W}_{\text{pred}}=(5,T_f]$, i.e.\ outside the observation interval, evaluated at a fresh initial condition drawn from $\mu_0$ and held fixed across $M$, $\sigma_{\text{nsr}}$, and trials. Figure~\ref{fig:OD_traj_error} shows the resulting prediction error as a function of $M$ for both $d=1$ and $d=2$ at five noise levels. As more training trajectories are observed, the prediction error decreases steadily and its trial-to-trial variability shrinks, particularly under high noise. Figure~\ref{fig:OD2_trajEstimation} further demonstrates that the predicted trajectories remain accurate even in challenging settings with $M=3$ and $50\%$ noise. Tables~\ref{tab:OD-d1-traj}--\ref{tab:OD-d2-traj} list the corresponding numerical values.

\begin{figure}[!htb]
\centering
(a) \begin{minipage}{.45\textwidth} \centering
\includegraphics[width=\linewidth]{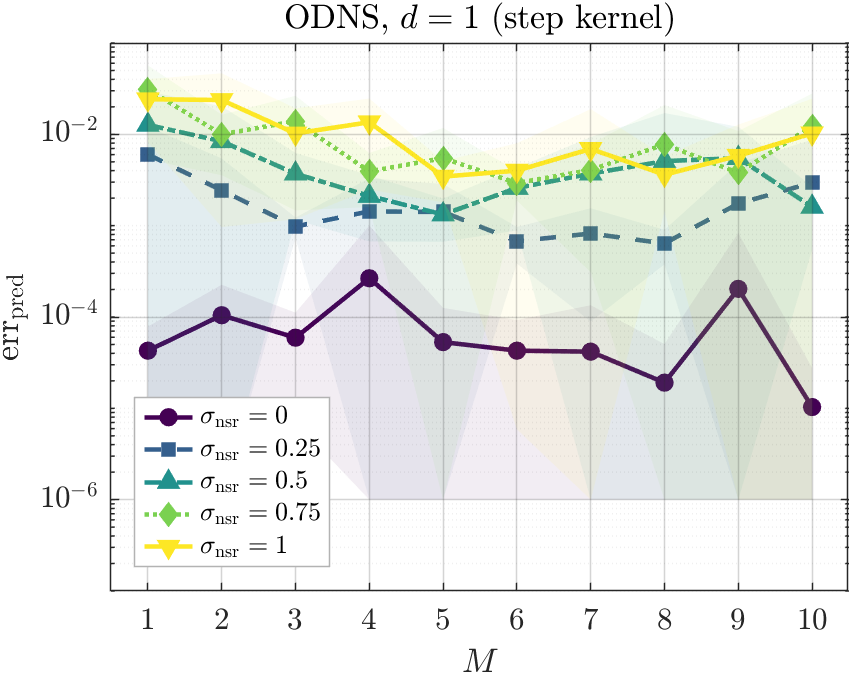}
\end{minipage}
(b) \begin{minipage}{.45\textwidth} \centering
\includegraphics[width=\linewidth]{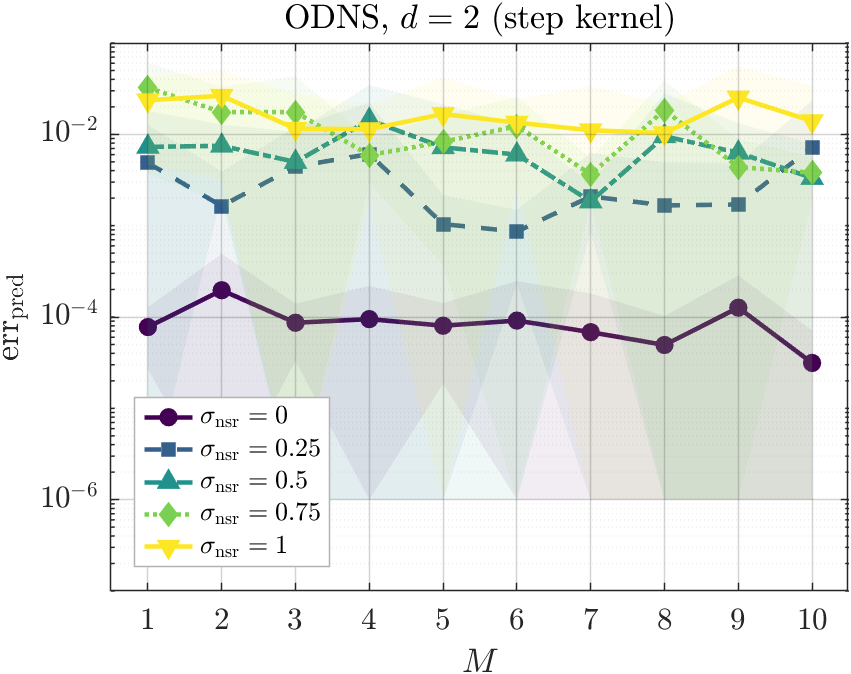}
\end{minipage}
\caption{Trajectory prediction error $\mathrm{err}(\mathcal{W}_{\text{pred}})$ for the opinion dynamics model under varying number of training trajectories $M$ and noise levels, with $L=6$ fixed. Solid lines show the mean across $T=10$ trials and shaded bands show the $\pm 1$ standard-deviation envelope.}
\label{fig:OD_traj_error}
\end{figure}

\begin{figure}[!htb]
\centering
\includegraphics[scale=0.4]{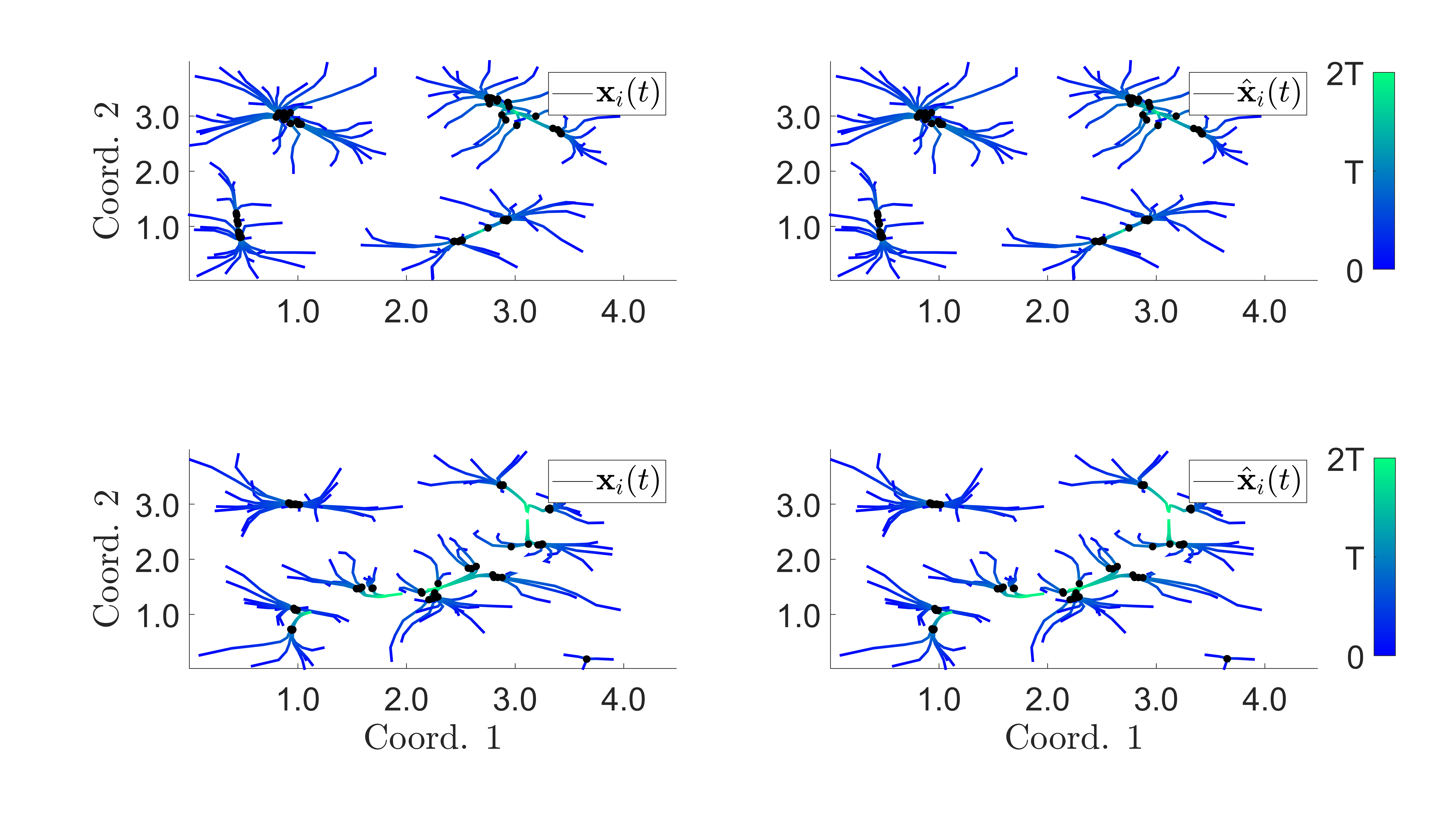}
\caption{Comparison of true (left) and predicted (right) agent trajectories using the estimated interaction kernel $\hat{\phi}_L$ with $(M,L) = (3,6)$ under 50\% noise. Despite local discrepancies in $\hat{\phi}$, the predicted system dynamics remain consistent with the ground truth trajectories.}
\label{fig:OD2_trajEstimation}
\end{figure}
% auto-generated by make_table.m -- ODNS trajectory prediction error

\begin{table}[!htb]
\centering
\caption{ODNS, $d=1$. Relative max-time trajectory prediction error
$\mathrm{err}(\mathcal{W}_{\text{pred}})$ on the prediction window
$(5,T_f]$, evaluated at a fresh initial condition. Rows: number of
training trajectories $M$; columns: noise-to-signal ratio
$\sigma_{\text{nsr}}$. Each cell reports $\mu \pm s$ across $T=10$
independent trials.}
\label{tab:OD-d1-traj}
\renewcommand{\arraystretch}{1.15}
\setlength{\tabcolsep}{4pt}
\footnotesize
\begin{tabular}{c|ccccc}
\toprule
\multirow{2}{*}{$M$} & \multicolumn{5}{c}{$\sigma_{\text{nsr}}$} \\
\cmidrule(lr){2-6}
  & $0$ & $0.25$ & $0.5$ & $0.75$ & $1$ \\
\midrule
1 & $(4.3 \pm 3.6)\!\times\!10^{-5}$ & $(6.0 \pm 6.1)\!\times\!10^{-3}$ & $(1.3 \pm 1.7)\!\times\!10^{-2}$ & $(3.1 \pm 2.6)\!\times\!10^{-2}$ & $(2.4 \pm 1.6)\!\times\!10^{-2}$ \\
2 & $(1.0 \pm 1.2)\!\times\!10^{-4}$ & $(2.4 \pm 3.0)\!\times\!10^{-3}$ & $(0.8 \pm 1.2)\!\times\!10^{-2}$ & $(1.0 \pm 0.6)\!\times\!10^{-2}$ & $(2.4 \pm 2.3)\!\times\!10^{-2}$ \\
3 & $(5.9 \pm 5.2)\!\times\!10^{-5}$ & $(9.8 \pm 2.5)\!\times\!10^{-4}$ & $(3.7 \pm 2.5)\!\times\!10^{-3}$ & $(1.4 \pm 1.2)\!\times\!10^{-2}$ & $(1.0 \pm 0.9)\!\times\!10^{-2}$ \\
4 & $(2.7 \pm 7.4)\!\times\!10^{-4}$ & $(1.4 \pm 1.9)\!\times\!10^{-3}$ & $(2.1 \pm 1.4)\!\times\!10^{-3}$ & $(3.9 \pm 2.3)\!\times\!10^{-3}$ & $(1.4 \pm 1.1)\!\times\!10^{-2}$ \\
5 & $(5.3 \pm 7.3)\!\times\!10^{-5}$ & $(1.4 \pm 1.4)\!\times\!10^{-3}$ & $(1.3 \pm 0.7)\!\times\!10^{-3}$ & $(5.5 \pm 6.3)\!\times\!10^{-3}$ & $(3.4 \pm 1.6)\!\times\!10^{-3}$ \\
6 & $(4.3 \pm 5.0)\!\times\!10^{-5}$ & $(6.8 \pm 2.9)\!\times\!10^{-4}$ & $(2.6 \pm 1.7)\!\times\!10^{-3}$ & $(2.9 \pm 1.0)\!\times\!10^{-3}$ & $(4.0 \pm 4.0)\!\times\!10^{-3}$ \\
7 & $(4.2 \pm 9.3)\!\times\!10^{-5}$ & $(8.2 \pm 7.3)\!\times\!10^{-4}$ & $(3.7 \pm 5.6)\!\times\!10^{-3}$ & $(4.1 \pm 3.8)\!\times\!10^{-3}$ & $(0.7 \pm 1.2)\!\times\!10^{-2}$ \\
8 & $(1.9 \pm 3.1)\!\times\!10^{-5}$ & $(6.4 \pm 2.6)\!\times\!10^{-4}$ & $(0.5 \pm 1.2)\!\times\!10^{-2}$ & $(0.8 \pm 1.3)\!\times\!10^{-2}$ & $(3.6 \pm 2.2)\!\times\!10^{-3}$ \\
9 & $(2.0 \pm 6.3)\!\times\!10^{-4}$ & $(1.7 \pm 3.1)\!\times\!10^{-3}$ & $(5.5 \pm 6.6)\!\times\!10^{-3}$ & $(3.8 \pm 7.1)\!\times\!10^{-3}$ & $(5.8 \pm 6.9)\!\times\!10^{-3}$ \\
10 & $(1.0 \pm 1.7)\!\times\!10^{-5}$ & $(3.0 \pm 5.8)\!\times\!10^{-3}$ & $(1.6 \pm 1.1)\!\times\!10^{-3}$ & $(1.2 \pm 1.6)\!\times\!10^{-2}$ & $(1.0 \pm 1.4)\!\times\!10^{-2}$ \\
\bottomrule
\end{tabular}
\end{table}

\begin{table}[!htb]
\centering
\caption{ODNS, $d=2$. Relative max-time trajectory prediction error
$\mathrm{err}(\mathcal{W}_{\text{pred}})$ on the prediction window
$(5,T_f]$, evaluated at a fresh initial condition. Rows: number of
training trajectories $M$; columns: noise-to-signal ratio
$\sigma_{\text{nsr}}$. Each cell reports $\mu \pm s$ across $T=10$
independent trials.}
\label{tab:OD-d2-traj}
\renewcommand{\arraystretch}{1.15}
\setlength{\tabcolsep}{4pt}
\footnotesize
\begin{tabular}{c|ccccc}
\toprule
\multirow{2}{*}{$M$} & \multicolumn{5}{c}{$\sigma_{\text{nsr}}$} \\
\cmidrule(lr){2-6}
  & $0$ & $0.25$ & $0.5$ & $0.75$ & $1$ \\
\midrule
1 & $(7.8 \pm 5.1)\!\times\!10^{-5}$ & $(4.9 \pm 7.9)\!\times\!10^{-3}$ & $(0.7 \pm 1.1)\!\times\!10^{-2}$ & $(3.2 \pm 2.9)\!\times\!10^{-2}$ & $(2.4 \pm 1.9)\!\times\!10^{-2}$ \\
2 & $(2.0 \pm 3.0)\!\times\!10^{-4}$ & $(1.6 \pm 2.3)\!\times\!10^{-3}$ & $(7.5 \pm 5.3)\!\times\!10^{-3}$ & $(1.8 \pm 1.6)\!\times\!10^{-2}$ & $(2.6 \pm 2.3)\!\times\!10^{-2}$ \\
3 & $(8.7 \pm 5.3)\!\times\!10^{-5}$ & $(4.4 \pm 7.6)\!\times\!10^{-3}$ & $(5.0 \pm 5.7)\!\times\!10^{-3}$ & $(1.8 \pm 2.6)\!\times\!10^{-2}$ & $(1.2 \pm 1.6)\!\times\!10^{-2}$ \\
4 & $(0.9 \pm 1.2)\!\times\!10^{-4}$ & $(0.6 \pm 1.6)\!\times\!10^{-2}$ & $(1.4 \pm 2.0)\!\times\!10^{-2}$ & $(5.9 \pm 3.0)\!\times\!10^{-3}$ & $(1.1 \pm 1.0)\!\times\!10^{-2}$ \\
5 & $(8.0 \pm 6.2)\!\times\!10^{-5}$ & $(1.0 \pm 1.1)\!\times\!10^{-3}$ & $(0.7 \pm 1.4)\!\times\!10^{-2}$ & $(8.3 \pm 7.9)\!\times\!10^{-3}$ & $(1.7 \pm 2.5)\!\times\!10^{-2}$ \\
6 & $(0.9 \pm 1.6)\!\times\!10^{-4}$ & $(8.6 \pm 6.3)\!\times\!10^{-4}$ & $(6.0 \pm 6.7)\!\times\!10^{-3}$ & $(1.2 \pm 1.3)\!\times\!10^{-2}$ & $(1.3 \pm 1.1)\!\times\!10^{-2}$ \\
7 & $(0.7 \pm 1.1)\!\times\!10^{-4}$ & $(2.1 \pm 4.2)\!\times\!10^{-3}$ & $(1.8 \pm 1.0)\!\times\!10^{-3}$ & $(3.7 \pm 1.8)\!\times\!10^{-3}$ & $(1.1 \pm 2.3)\!\times\!10^{-2}$ \\
8 & $(4.9 \pm 5.3)\!\times\!10^{-5}$ & $(1.7 \pm 3.2)\!\times\!10^{-3}$ & $(1.0 \pm 1.9)\!\times\!10^{-2}$ & $(1.8 \pm 2.0)\!\times\!10^{-2}$ & $(1.0 \pm 1.1)\!\times\!10^{-2}$ \\
9 & $(1.3 \pm 1.6)\!\times\!10^{-4}$ & $(1.7 \pm 3.1)\!\times\!10^{-3}$ & $(6.3 \pm 7.0)\!\times\!10^{-3}$ & $(4.4 \pm 4.5)\!\times\!10^{-3}$ & $(2.5 \pm 3.1)\!\times\!10^{-2}$ \\
10 & $(3.1 \pm 4.0)\!\times\!10^{-5}$ & $(0.7 \pm 1.6)\!\times\!10^{-2}$ & $(3.3 \pm 4.5)\!\times\!10^{-3}$ & $(3.8 \pm 1.9)\!\times\!10^{-3}$ & $(1.4 \pm 2.0)\!\times\!10^{-2}$ \\
\bottomrule
\end{tabular}
\end{table}

\subsection{Second-order systems: Cucker-Smale models}

Our framework naturally extends to second-order systems such as the Cucker-Smale model, which describes the collective motion of interacting agents governed by second-order dynamics. Such models capture richer dynamics observed in physical and biological swarming systems, where alignment is velocity-driven. Consider the system of $N$ agents evolving according to the equation:
\begin{equation}\label{eq:2ndOrder}
    m_i\ddot{\mathbf{x}}_i = \sum_{i'=1}^N \frac{\phi(\|\mathbf{x}_{i'} - \mathbf{x}_i\|)}{\sum_{j=1}^{N}\phi(\|\mathbf{x}_{j} - \mathbf{x}_i\|)}(\dot{\mathbf{x}}_{i'} - \dot{\mathbf{x}}_i),
\end{equation} where $m_i$ denotes the mass of agent $i$, and $\ddot{\mathbf{x}}_i$ represents its acceleration. This formulation captures how each agent's acceleration is influenced by the relative velocities of its neighbors, with interactions weighted by distance. Notably, this model is well-suited for systems with non-homogeneous phase spaces, as introduced in \cite{motsch2011new}, where local interactions play a dominant role in shaping the global dynamics.

Our method can be adapted to infer the interaction dynamics in second-order systems with minor modifications. Based on \eqref{eq:2ndOrder}, given training data $\{\mathbf{X}^{(m,l)}, \dot{\mathbf{X}}^{(m,l)}, \ddot{\mathbf{X}}_{\sigma^2}^{(m,l)}\}$, we define the regression matrix by isolating $\phi(|\bx_j - \bx_i|)$ and projecting onto a set of basis functions ${\xi_k(r)}$, i.e., we construct the regression matrix $\mathbf{A} = [\mathbf{A}_{ik}] \in \mathbb{R}^{2dN \times K}$ as follows:
\begin{equation}\label{eq:assemble_A_2nd}
    \mathbf{A}_{ik} = \sum_{j \in \mathcal{N}_i} \xi_k(\|\mathbf{x}_j - \mathbf{x}_i\|)(m_i\ddot{\mathbf{x}}_i - (\dot{\mathbf{x}}_i  - \dot{\mathbf{x}}_j)),
\end{equation} where $\mathcal{N}_i$ denotes the neighborhood of agent $i$, and $\xi_k(\|\mathbf{x}_j - \mathbf{x}_i\|)$ is a basis function that depends on the inter-agent distance. 

To evaluate our framework, we consider two representative examples of second-order interaction dynamics using prototype kernels that model either multi-cluster or mono-cluster flocking behavior. Figure \ref{CKtraj} depicts the resulting trajectory profiles.

\begin{figure}[!htb]
\centering
(a)\begin{minipage}{.45\textwidth} \centering  
\includegraphics[scale=0.19]{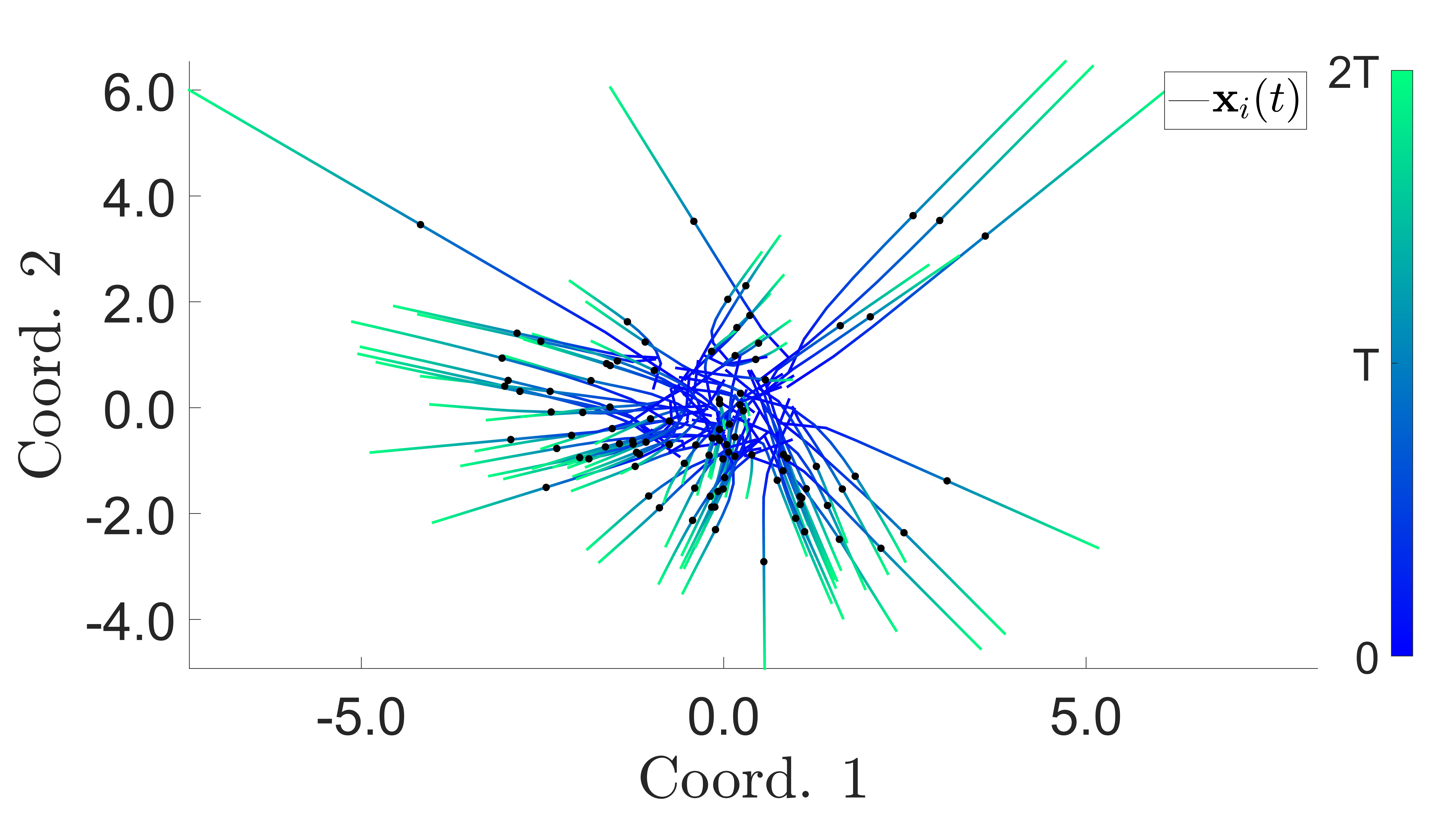}
\end{minipage}
(b)\begin{minipage}{.45\textwidth} \centering  
\includegraphics[scale=0.19]{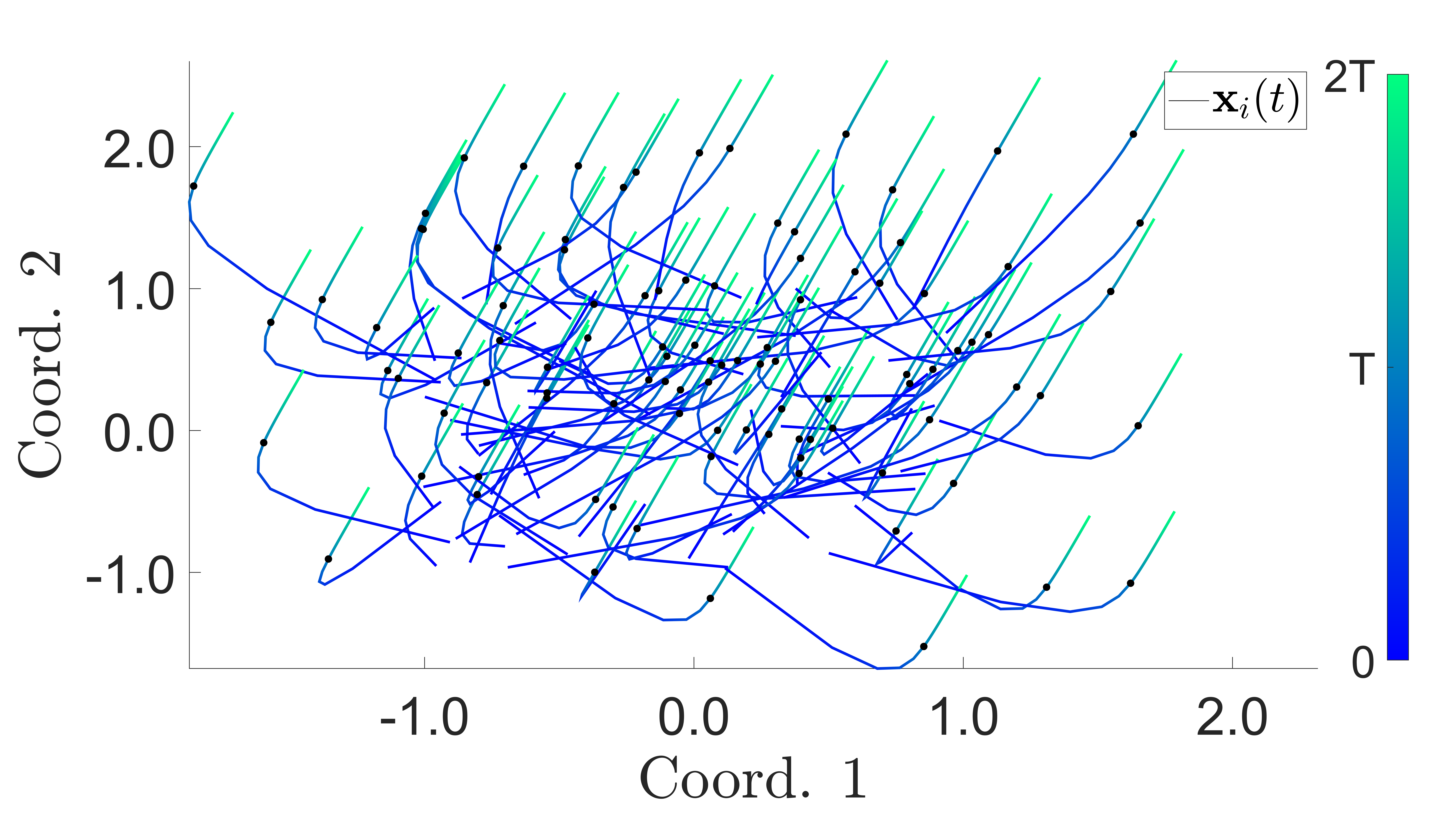}
\end{minipage}
\caption{Trajectory profiles for the Cucker-Smale system under different interaction kernels.}
\label{CKtraj}
\end{figure}

We consider a system of dimension $dN=200$ and summarize the common system parameters in Table \ref{t:CK_params}.

\begin{table}[H]
\centering
\begin{tabular}{|c|c|c|c|c|c|c|c|c|}
\hline 
$m_i$&$N$ & $d$ & $M$ & $L$ & $[0,T;T_f]$ & Learning domain & $K$ & $\xi_k(r)$ \\ 
\hline 
1&100 & 2 & 5 & 6 & $[0,5;10]$ & $[0,5]$ & 100 & $\chi_{[\frac{10(k-1)}{K},\frac{10k}{K}]}(r)$ \\ 
\hline
\end{tabular}
\caption{System and learning parameters for the Cucker-Smale model.}
\label{t:CK_params}
\end{table}

\paragraph{Example 1 (Cut-off  kernel at a finite distance)}
In the first example, we use a compactly supported interaction kernel:
\begin{equation*}
   \phi(r) = \chi_{[0,0.5]}(r),
\end{equation*}
which enforces interactions only between agents within a radius of 0.5. This leads to locally influenced multi-cluster flocking behavior, as shown in Figure \ref{CKtraj}(a). The corresponding estimation results using a piecewise constant basis are shown in Figure \ref{fig:CK1Estimation} and 9(a).

\begin{figure}[!htb]
\centering

(a) \begin{minipage}{.46\textwidth} \centering    
\includegraphics[scale=0.18]{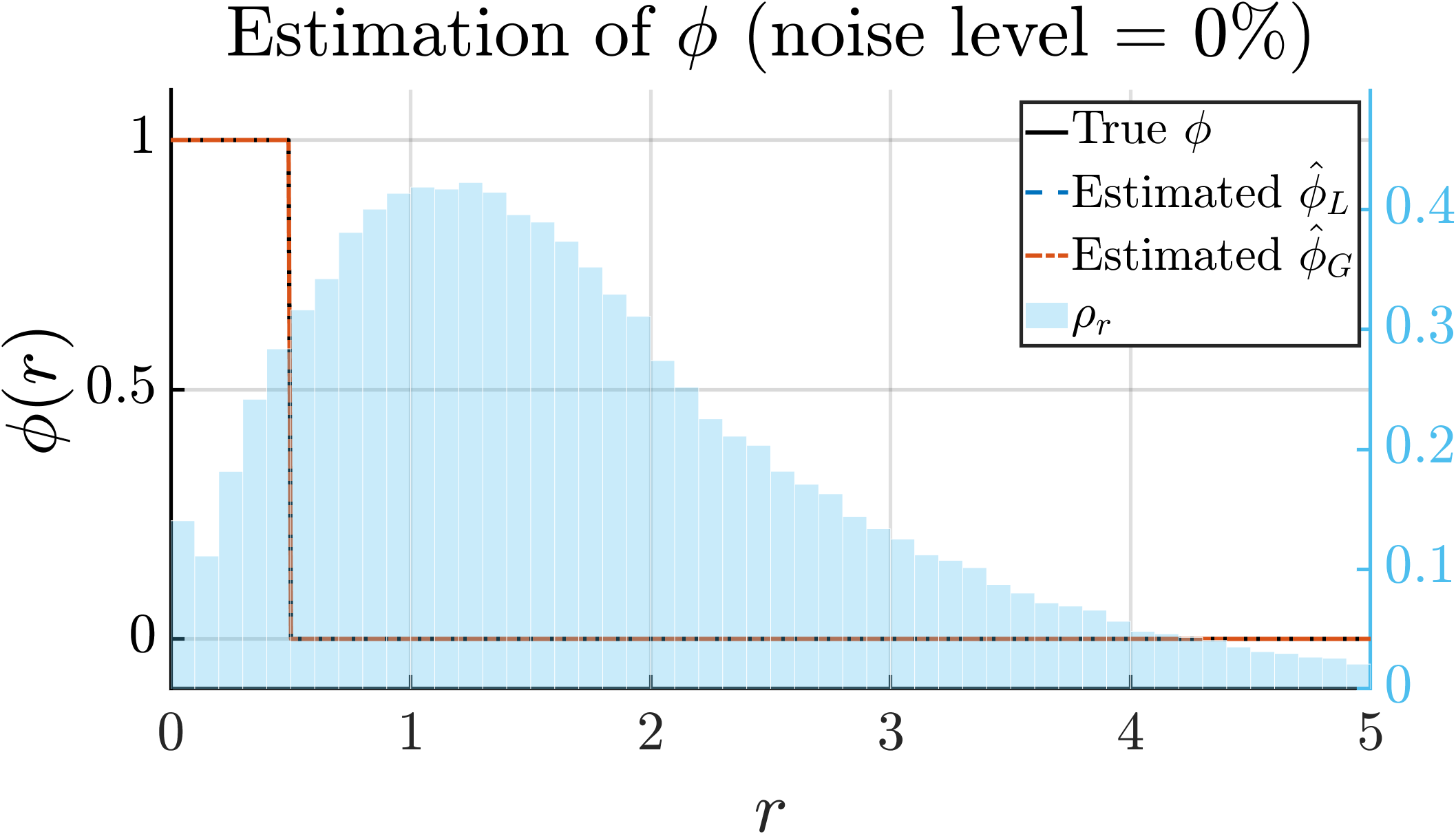}
\end{minipage}
(b)\begin{minipage}{.46\textwidth} \centering   %(This figure is new)
\includegraphics[scale=0.18]{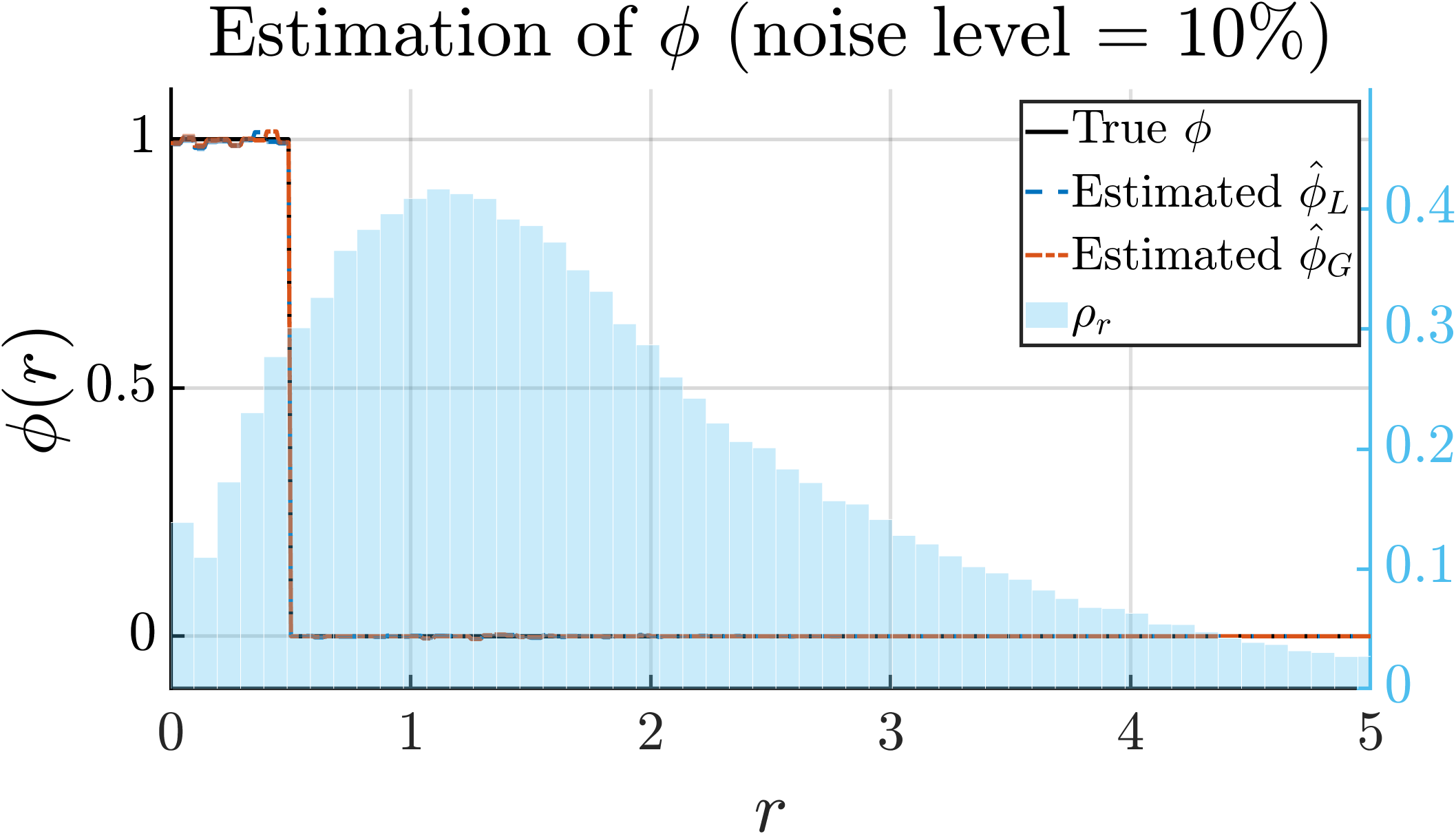}
\end{minipage}
(c) \begin{minipage}{.46\textwidth} \centering    
\includegraphics[scale=0.18]{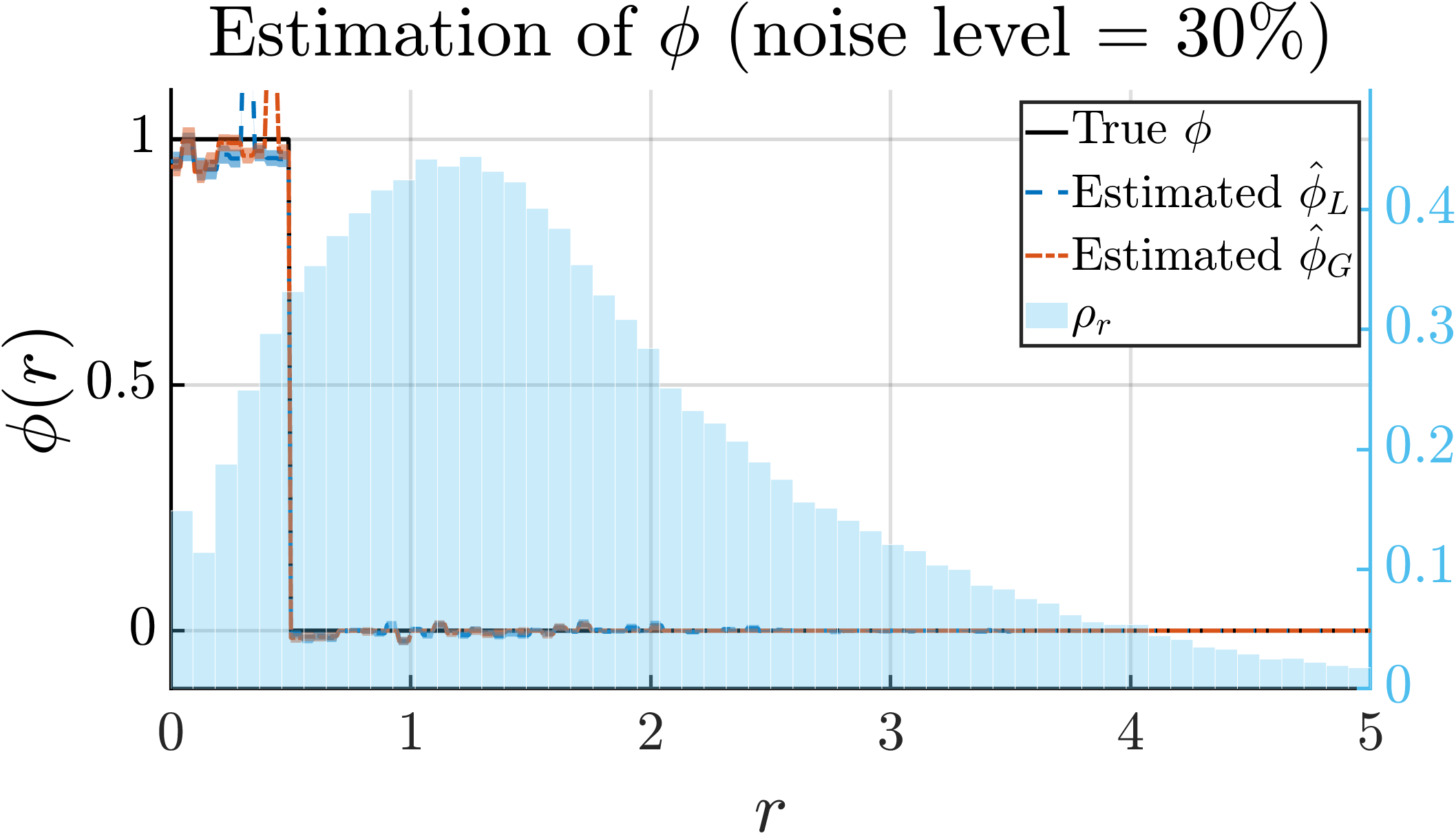}
\end{minipage}
(d)\begin{minipage}{.46\textwidth} \centering   %(This figure is new)
\includegraphics[scale=0.18]{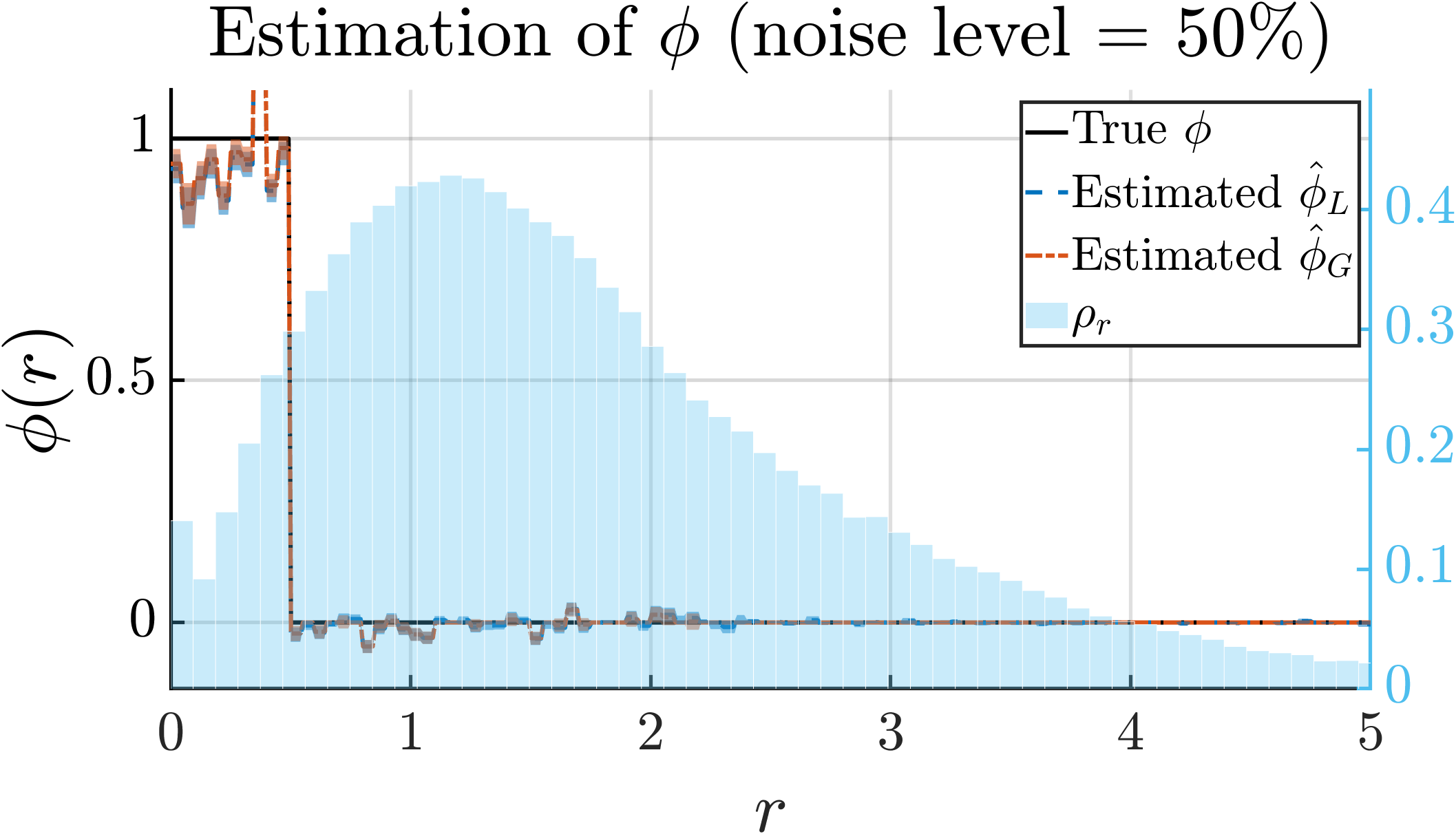}
\end{minipage}
\caption{Estimated interaction kernels $\hat{\phi}$ for the compactly supported kernel case under increasing noise levels ($(M,L) = (5,6)$). The solid black curve represents the true interaction kernel $\phi$, while the estimated kernels (e.g., $\hat{\phi}_L$ and $\hat{\phi}_G$) correspond to the different hyperpriors described in Section~3. Different colored curves indicate different noise levels in the observed data.}
\label{fig:CK1Estimation}
\end{figure}

\begin{figure}[!htb]
\centering
(a) \begin{minipage}{.45\textwidth} \centering    
\includegraphics[scale=0.5]{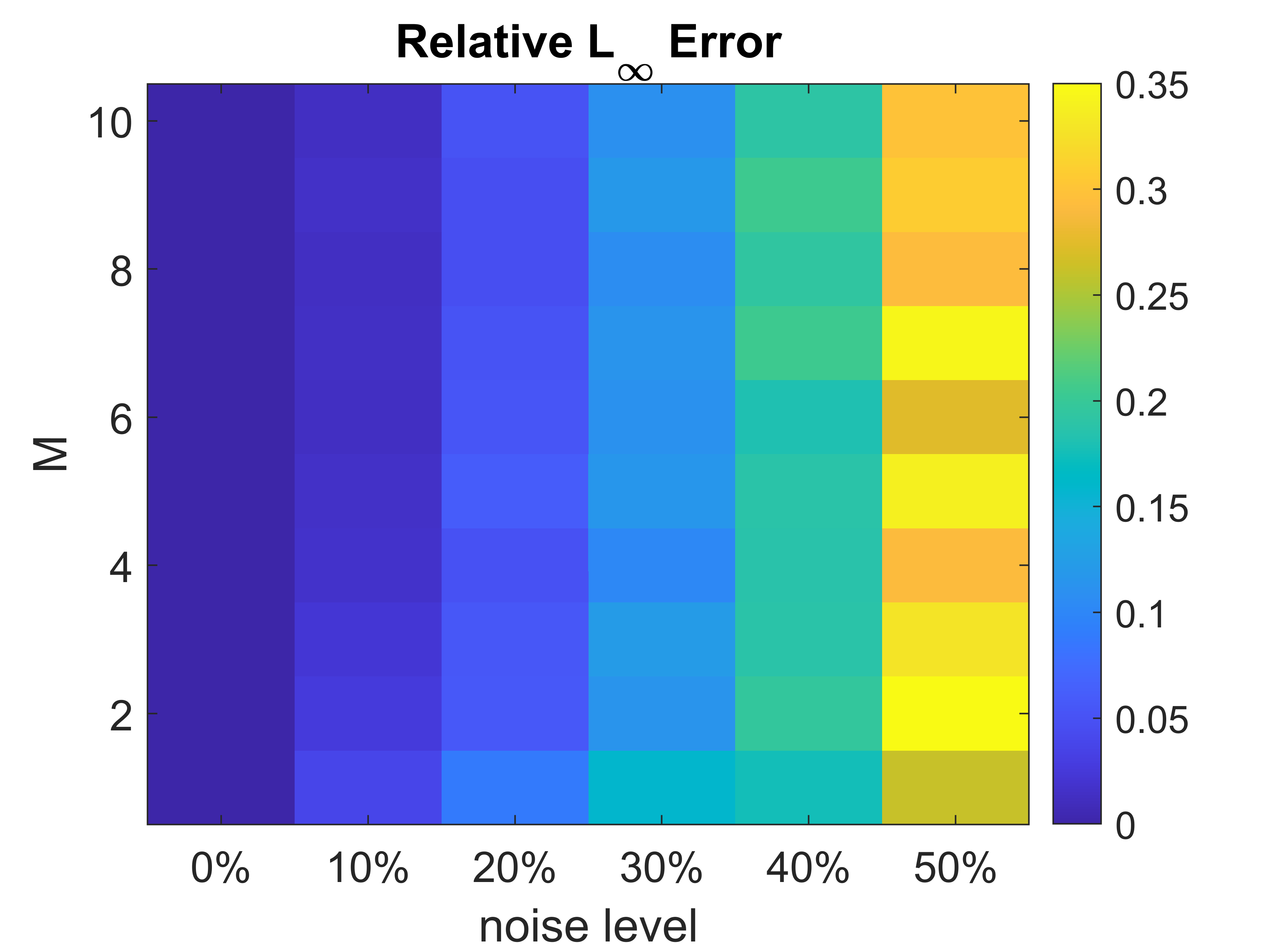}
\end{minipage}
(b) \begin{minipage}{.45\textwidth} \centering    
\includegraphics[scale=0.51]{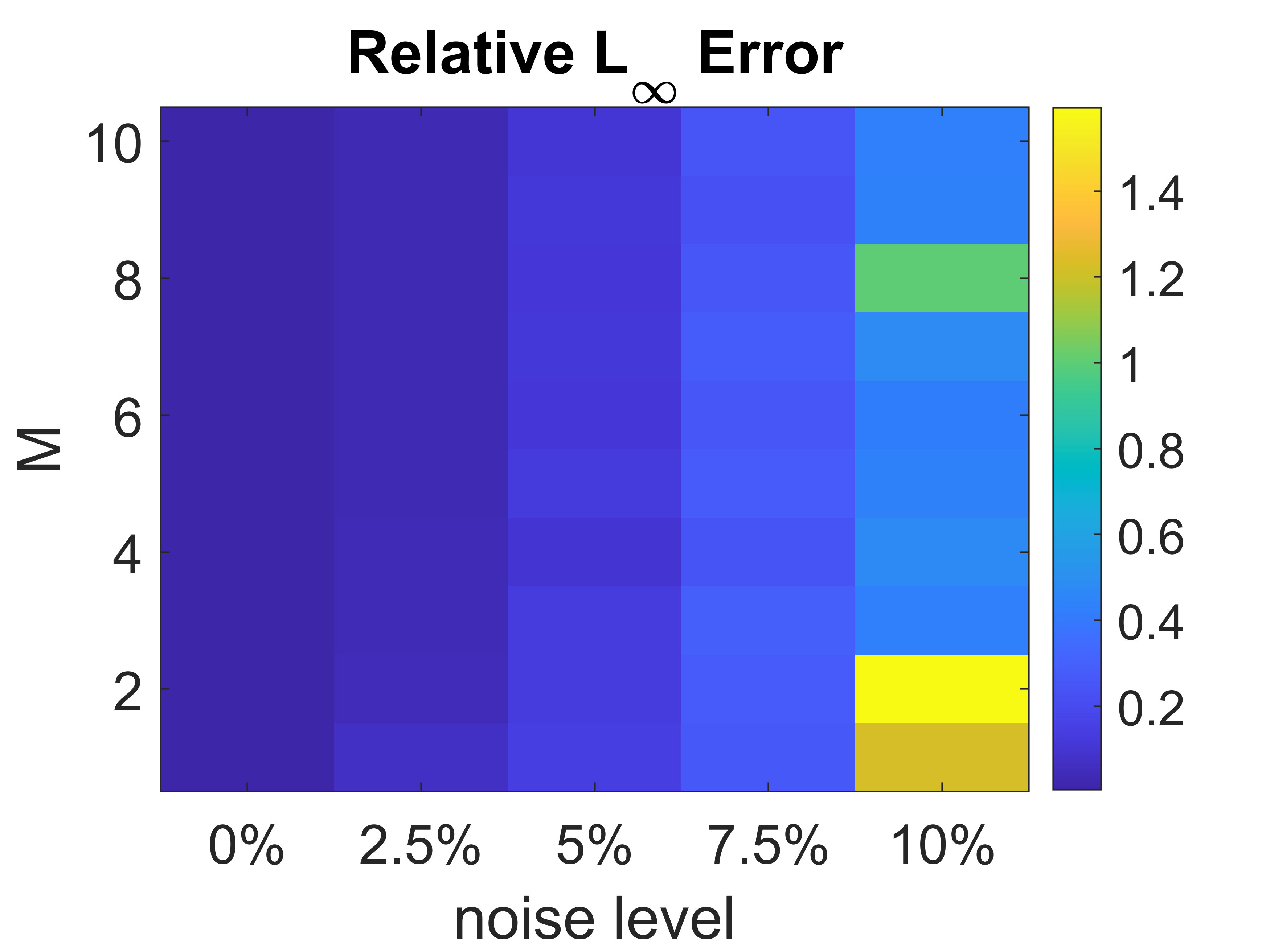}
\end{minipage}
\caption{Prediction error of the estimated interaction kernel $\hat{\phi}_L$ for the Cucker--Smale model with (a) compactly supported (cut-off) kernel and (b) rapidly decaying kernel, under varying numbers of trajectories $M$ and noise levels, with $L=6$ fixed. The error $\|\hat{\phi}_L - \phi\|$ quantifies the discrepancy between the Fast Laplace estimate and the true kernel $\phi$.}
\label{fig:CS_errors}
\end{figure}

\paragraph{Example 2 (Rapid decay kernel)} In the second example, we examine a smooth, rapidly decaying interaction kernel:
\begin{equation*} 
\phi(r) = \frac{1}{(1+r^2)^{1/4}},
\end{equation*}
which maintains long-range influence while emphasizing nearby interactions. This kernel enables a balance between local coherence and global flocking. The true trajectories and resulting estimations are presented in Figures \ref{CKtraj}(b), 9(b), and \ref{fig:CK2Estimation}, respectively.

\begin{figure}[!htb]
\centering

(a) \begin{minipage}{.46\textwidth} \centering    
\includegraphics[scale=0.18]{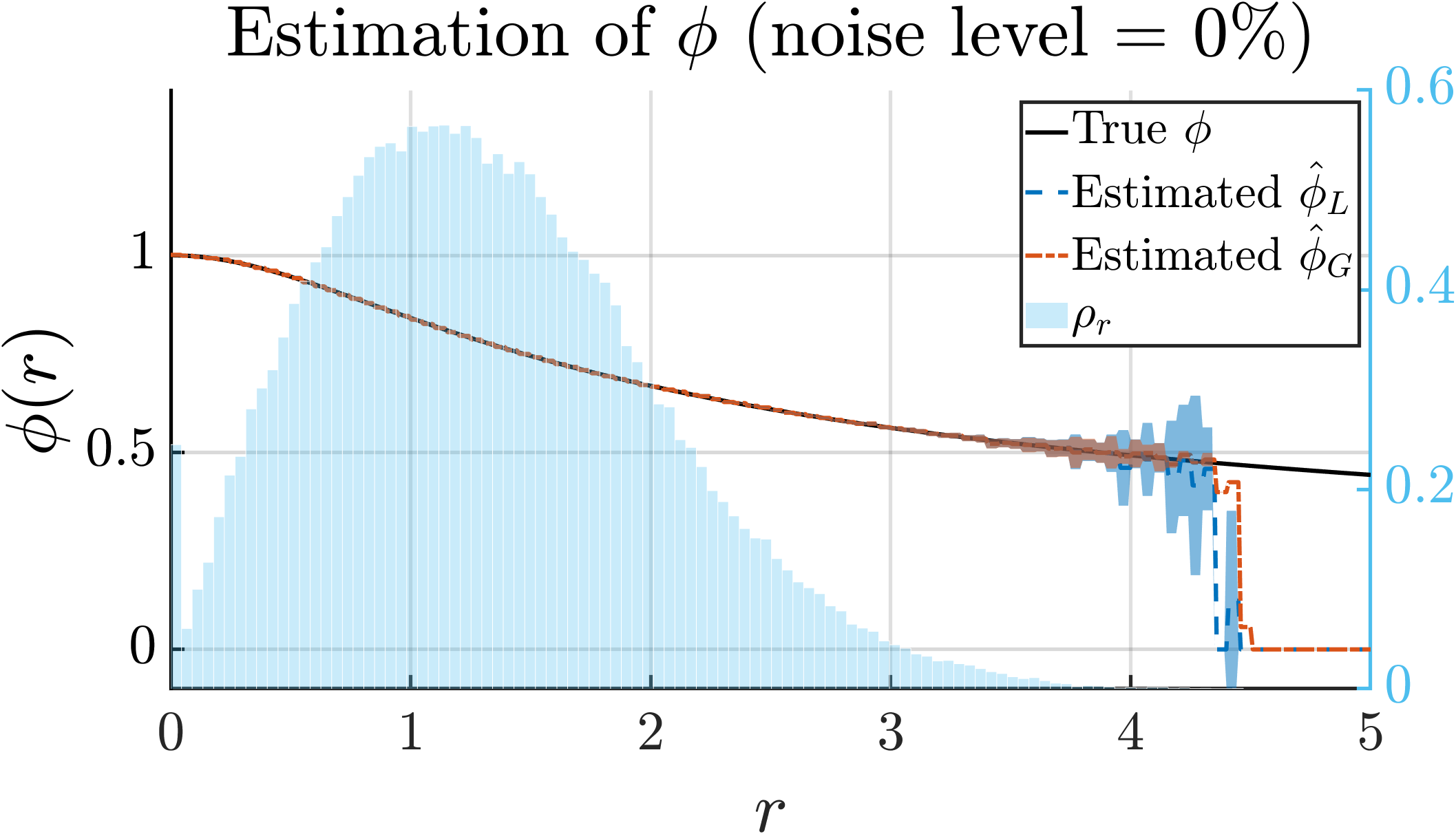}
\end{minipage}
(b)\begin{minipage}{.46\textwidth} \centering   %(This figure is new)
\includegraphics[scale=0.18]{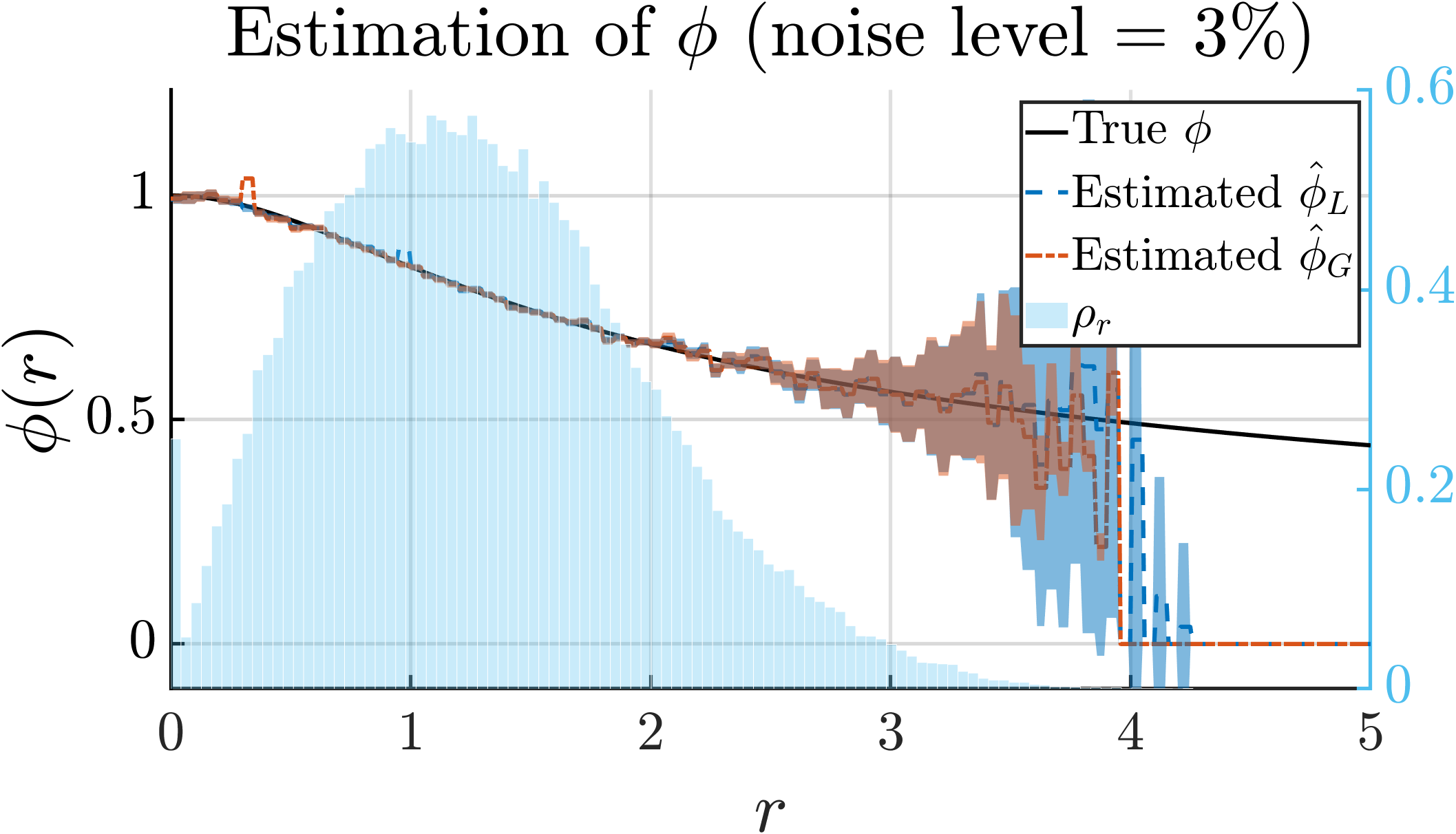}
\end{minipage}
(c) \begin{minipage}{.46\textwidth} \centering    
\includegraphics[scale=0.18]{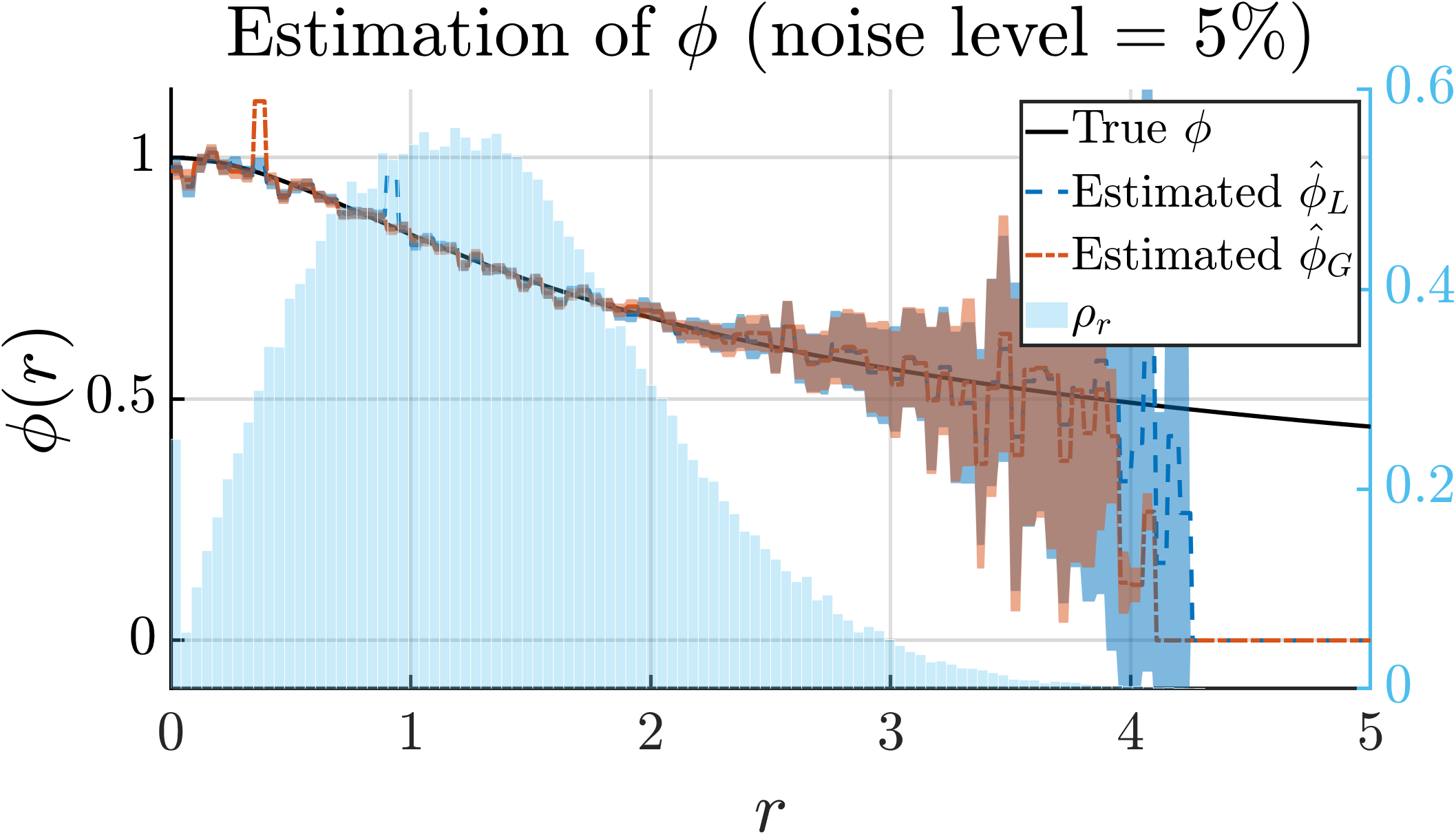}
\end{minipage}
(d)\begin{minipage}{.46\textwidth} \centering   %(This figure is new)
\includegraphics[scale=0.18]{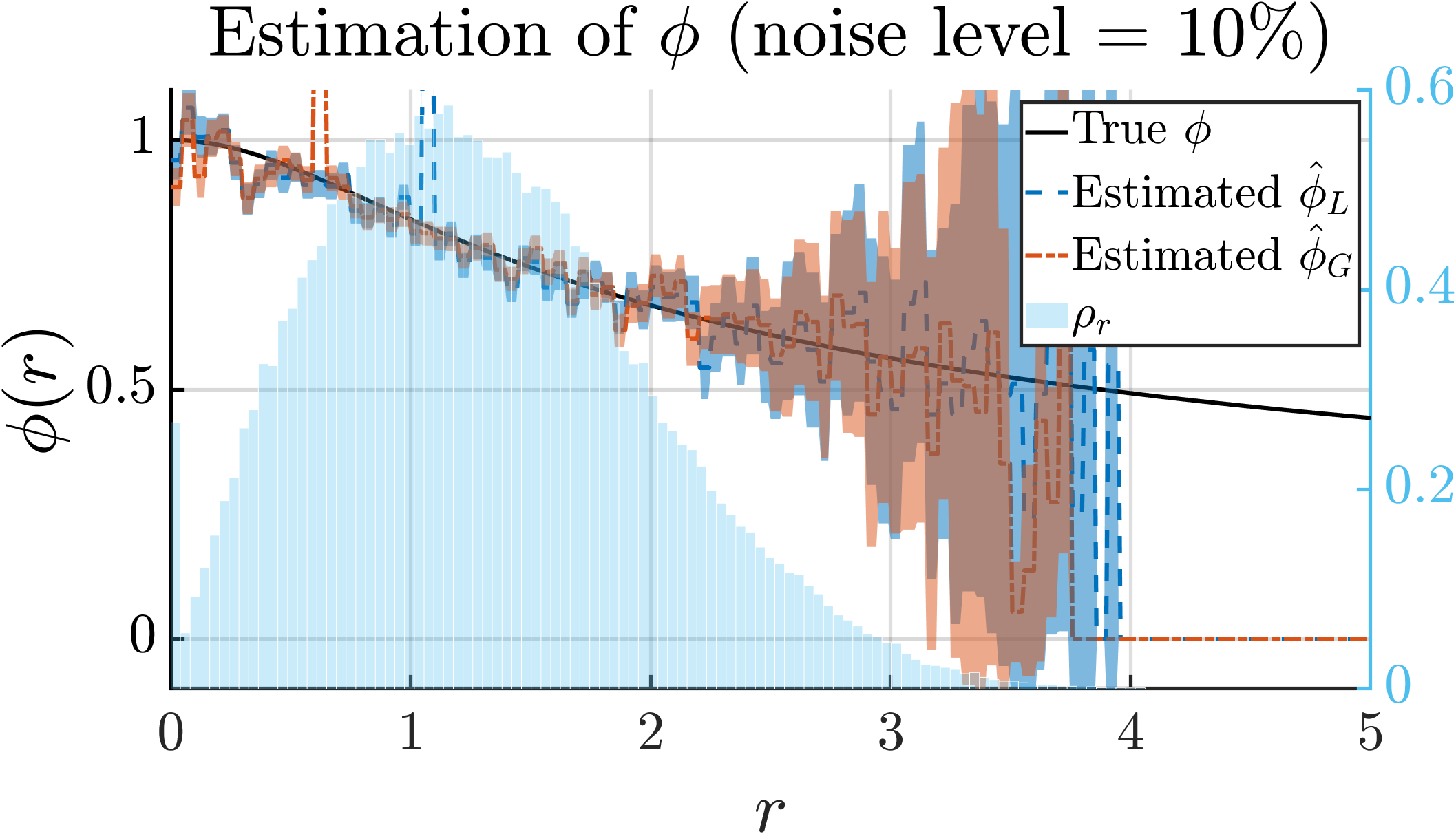}
\end{minipage}
\caption{Estimated interaction kernels $\hat{\phi}$ for the rapidly decaying kernel under increasing noise levels ($(M,L) = (5,6)$). The solid black curve represents the true interaction kernel $\phi$, while the estimated kernels (e.g., $\hat{\phi}_L$ and $\hat{\phi}_G$) correspond to the different hyperpriors described in Section~3. Different colored curves indicate different noise levels in the observed data.}
\label{fig:CK2Estimation}
\end{figure}

As shown in Figures \ref{fig:CK2Estimation}(c)–(d), high noise levels degrade the estimation quality, particularly in intervals with limited data support. This is expected, as we approximate a smooth kernel using piecewise constant basis functions. Nevertheless, Figure \ref{fig:CK2_trajEstimation} demonstrates that despite local discrepancies in $\hat\phi$, the predicted trajectories remain accurate, highlighting the model’s robustness in forward simulation.

\begin{figure}[!htb]
\centering
\includegraphics[scale=0.4]{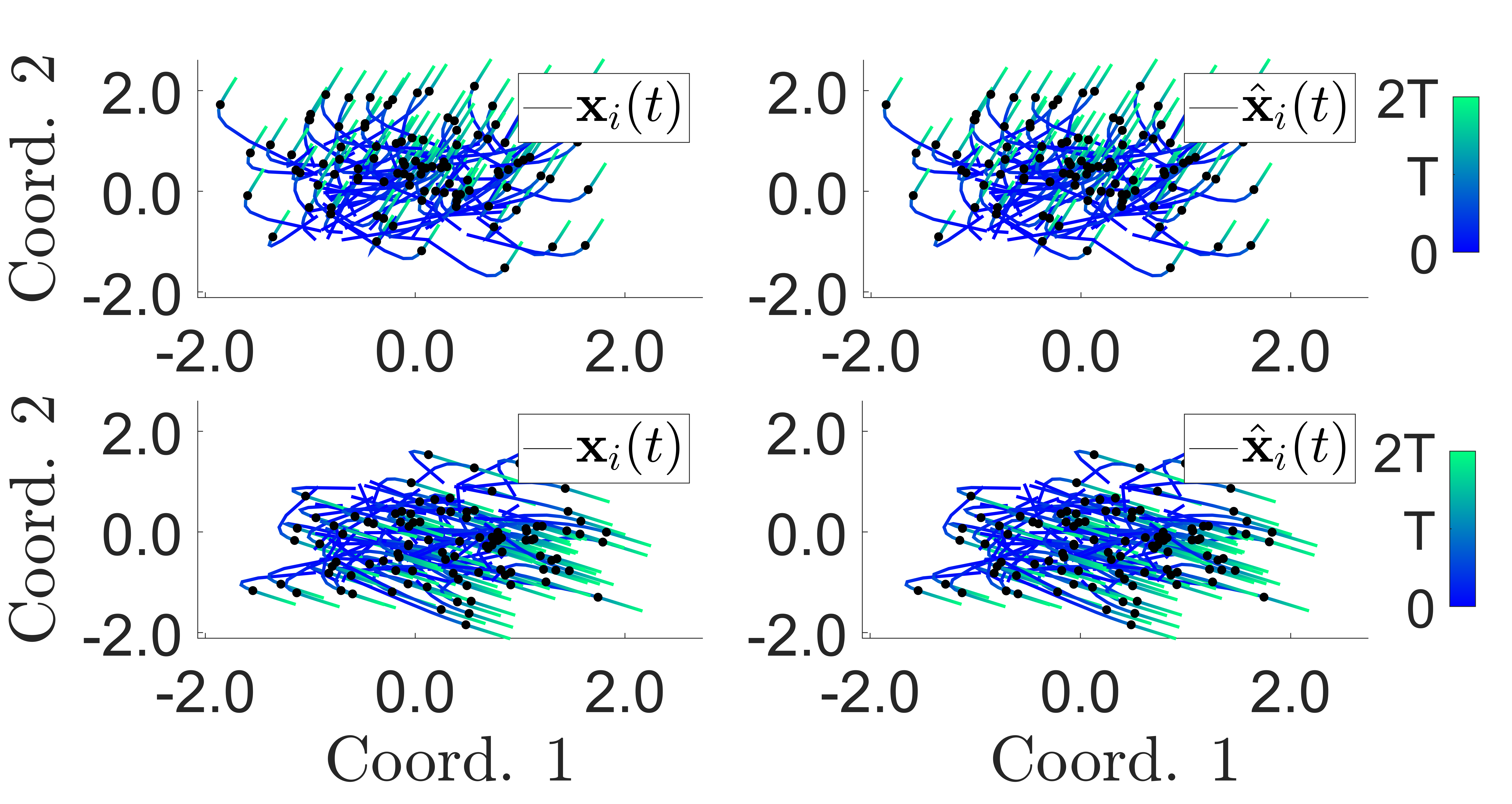}
\caption{Comparison of true (left) and predicted (right) agent trajectories using the estimated interaction kernel $\hat{\phi}_L$ under 10\% noise ($(M,L) = (5,6)$). Despite local discrepancies in $\hat{\phi}$, the predicted system dynamics remain consistent with the ground truth trajectories.}
\label{fig:CK2_trajEstimation}
\end{figure}

\begin{figure}[!htb]
\centering
(a) \begin{minipage}{.45\textwidth} \centering    
\includegraphics[scale=0.5]{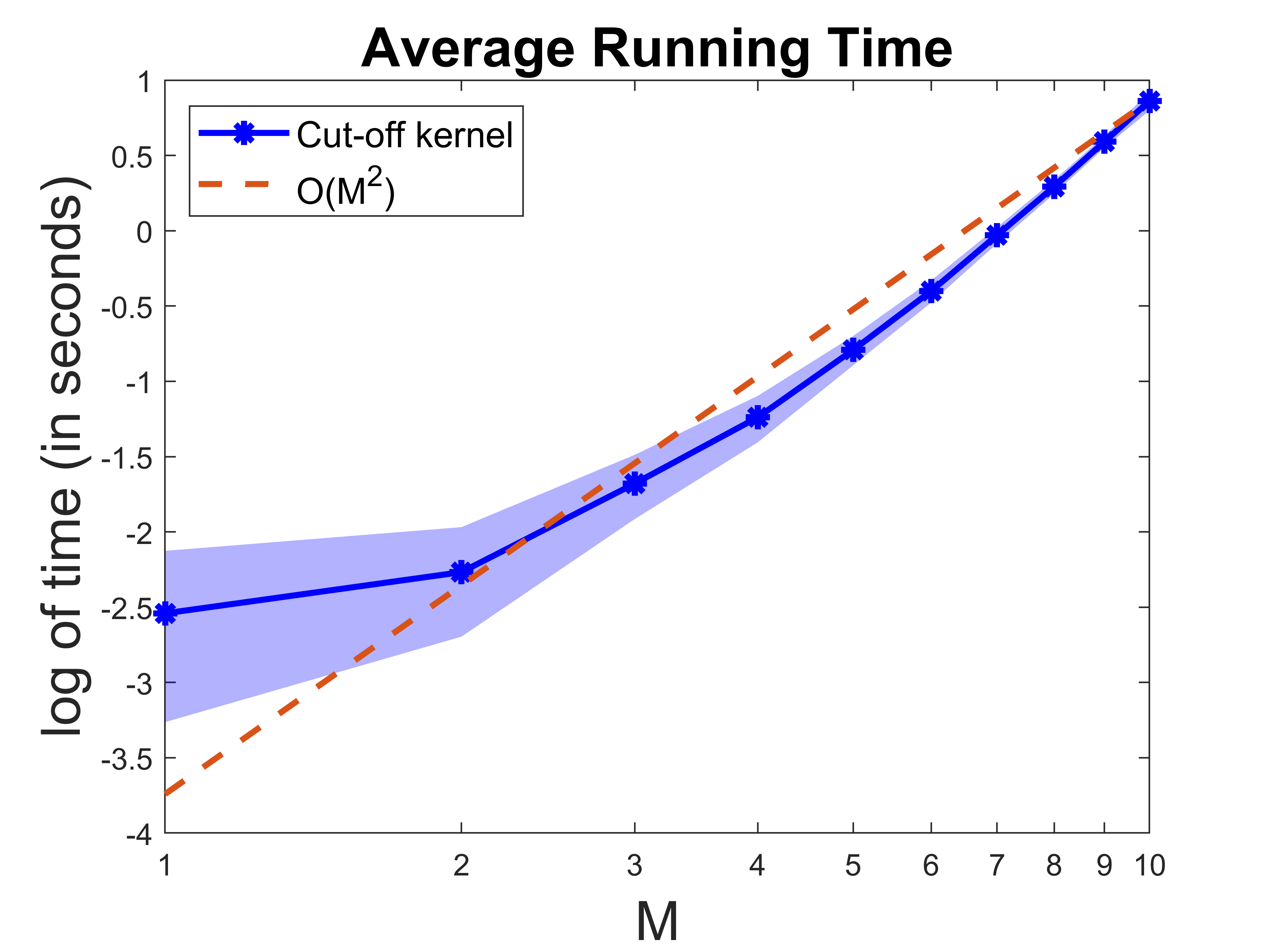}
\end{minipage}
(b)\begin{minipage}{.45\textwidth} \centering   %(This figure is new)
\includegraphics[scale=0.5]{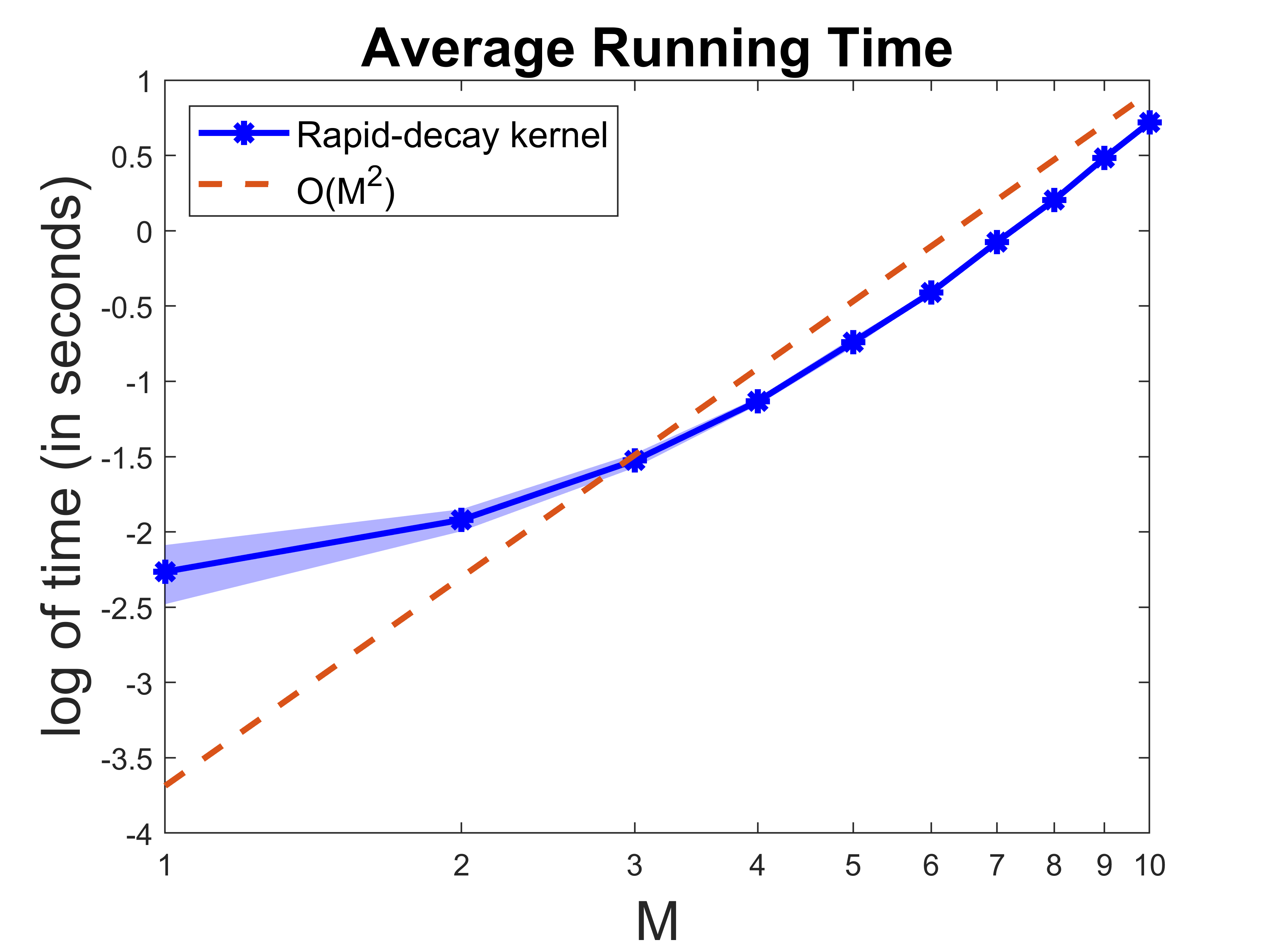}
\end{minipage}
\caption{Average running time for learning Cucker-Smale models with two different kernels.}
\label{fig:CS_runtime}
\end{figure}

As in the first-order opinion dynamics case, we report the average runtime of our algorithm under different dataset sizes for both kernel types (cut-off and rapid decay). As shown in Figure \ref{fig:CS_runtime}, the computational cost grows approximately as $\mathcal{O}(M^2)$ in both cases, reflecting the practical benefits of the sparsity structure in the assembled regression matrices.

\paragraph{Trajectory prediction error.} Mirroring the first-order analysis, we evaluate the relative max-time trajectory error \eqref{eq:err-pred} on the prediction window $\mathcal{W}_{\text{pred}}=(5,T_f]$ for both Cucker--Smale kernels. Figure~\ref{fig:CS_traj_error} reports the prediction error against the number of training trajectories $M$ at multiple noise levels for the cut-off and the smooth-decay kernels. Two distinctive behaviors emerge: for the smooth kernel (panel (b)) the error decreases monotonically with $M$ and remains tightly concentrated even at $\sigma_{\text{nsr}}=0.1$; for the cut-off kernel (panel (a)) the single-trajectory regime ($M=1$) at high noise occasionally produces a learned $\widehat\phi$ that destabilizes the forward integration, giving rise to large outlier errors and large standard deviations, but this pathology vanishes once $M\!\geq\!5$. Tables~\ref{tab:CS-cutoff-traj}--\ref{tab:CS-smooth-traj} list the corresponding numerical values.

\begin{figure}[!htb]
\centering
(a) \begin{minipage}{.45\textwidth} \centering
\includegraphics[width=\linewidth]{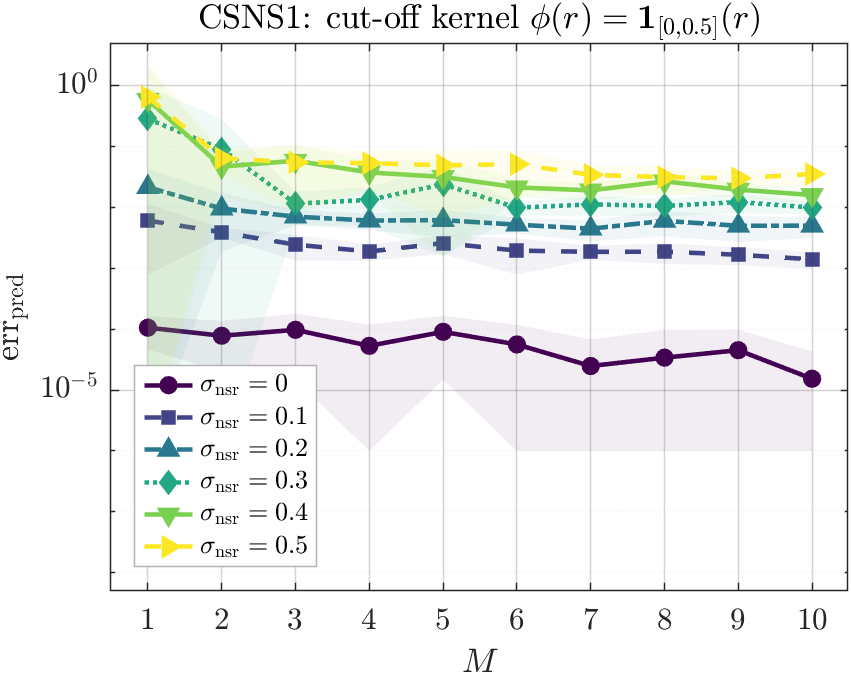}
\end{minipage}
(b) \begin{minipage}{.45\textwidth} \centering
\includegraphics[width=\linewidth]{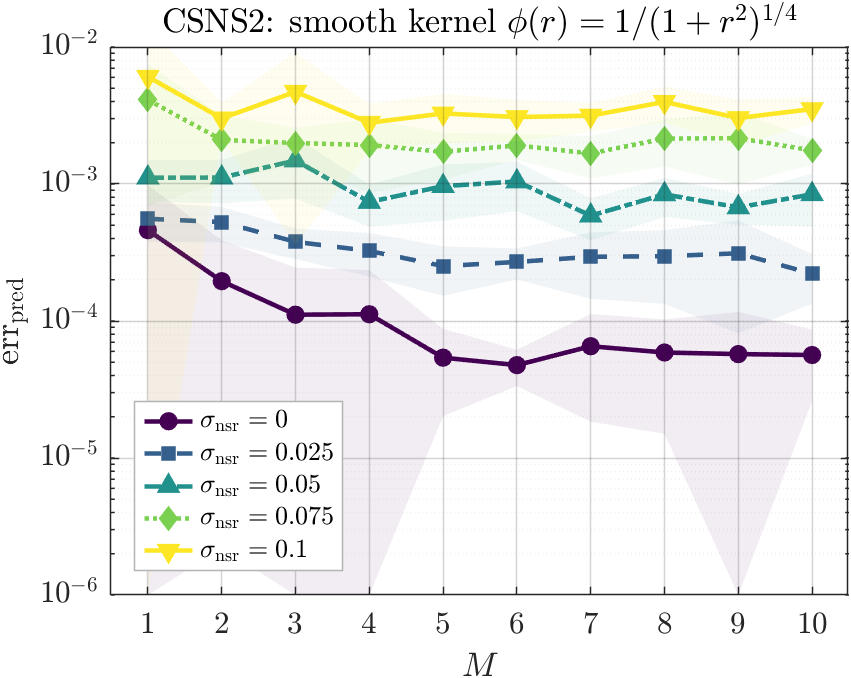}
\end{minipage}
\caption{Trajectory prediction error $\mathrm{err}(\mathcal{W}_{\text{pred}})$ for the Cucker--Smale model under varying number of training trajectories $M$ and noise levels, with $L=6$ fixed. (a) Cut-off kernel $\phi(r)=\mathbf{1}_{[0,0.5]}(r)$. (b) Smooth-decay kernel $\phi(r)=1/(1+r^2)^{1/4}$. Solid lines show the mean across $T=10$ trials and shaded bands show the $\pm 1$ standard-deviation envelope.}
\label{fig:CS_traj_error}
\end{figure}

% auto-generated by make_table.m -- CSNS trajectory prediction error

\begin{table}[!htb]
\centering
\caption{CSNS1: cut-off $\phi(r)=\mathbf{1}_{[0,0.5]}(r)$. Relative max-time trajectory prediction error
$\mathrm{err}(\mathcal{W}_{\text{pred}})$ on the prediction window
$(5,T_f]$, evaluated at a fresh initial condition. Rows: number of
training trajectories $M$; columns: noise-to-signal ratio
$\sigma_{\text{nsr}}$. Each cell reports $\mu \pm s$ across $T=10$
independent trials.}
\label{tab:CS-cutoff-traj}
\renewcommand{\arraystretch}{1.15}
\setlength{\tabcolsep}{4pt}
\footnotesize
\begin{tabular}{c|cccccc}
\toprule
\multirow{2}{*}{$M$} & \multicolumn{6}{c}{$\sigma_{\text{nsr}}$} \\
\cmidrule(lr){2-7}
  & $0$ & $0.1$ & $0.2$ & $0.3$ & $0.4$ & $0.5$ \\
\midrule
1 & $(1.1 \pm 0.6)\!\times\!10^{-4}$ & $(6.1 \pm 5.3)\!\times\!10^{-3}$ & $(2.1 \pm 2.2)\!\times\!10^{-2}$ & $(2.8 \pm 6.5)\!\times\!10^{-1}$ & $0.6 \pm 1.4$ & $0.6 \pm 1.7$ \\
2 & $(7.8 \pm 5.8)\!\times\!10^{-5}$ & $(3.9 \pm 0.7)\!\times\!10^{-3}$ & $(9.5 \pm 7.6)\!\times\!10^{-3}$ & $(0.9 \pm 1.9)\!\times\!10^{-1}$ & $(4.7 \pm 3.2)\!\times\!10^{-2}$ & $(6.4 \pm 3.0)\!\times\!10^{-2}$ \\
3 & $(9.7 \pm 8.3)\!\times\!10^{-5}$ & $(2.4 \pm 1.1)\!\times\!10^{-3}$ & $(7.0 \pm 1.9)\!\times\!10^{-3}$ & $(1.2 \pm 0.6)\!\times\!10^{-2}$ & $(5.8 \pm 5.0)\!\times\!10^{-2}$ & $(5.5 \pm 4.6)\!\times\!10^{-2}$ \\
4 & $(5.3 \pm 6.6)\!\times\!10^{-5}$ & $(1.9 \pm 0.6)\!\times\!10^{-3}$ & $(6.1 \pm 1.8)\!\times\!10^{-3}$ & $(1.3 \pm 0.8)\!\times\!10^{-2}$ & $(3.7 \pm 2.0)\!\times\!10^{-2}$ & $(5.3 \pm 3.3)\!\times\!10^{-2}$ \\
5 & $(9.0 \pm 7.5)\!\times\!10^{-5}$ & $(2.6 \pm 0.9)\!\times\!10^{-3}$ & $(6.2 \pm 2.9)\!\times\!10^{-3}$ & $(2.4 \pm 2.2)\!\times\!10^{-2}$ & $(3.2 \pm 3.0)\!\times\!10^{-2}$ & $(4.9 \pm 4.2)\!\times\!10^{-2}$ \\
6 & $(5.6 \pm 6.2)\!\times\!10^{-5}$ & $(1.9 \pm 1.2)\!\times\!10^{-3}$ & $(5.2 \pm 1.5)\!\times\!10^{-3}$ & $(9.8 \pm 2.4)\!\times\!10^{-3}$ & $(2.1 \pm 0.7)\!\times\!10^{-2}$ & $(5.1 \pm 3.6)\!\times\!10^{-2}$ \\
7 & $(2.5 \pm 4.3)\!\times\!10^{-5}$ & $(1.9 \pm 0.5)\!\times\!10^{-3}$ & $(4.5 \pm 1.7)\!\times\!10^{-3}$ & $(1.1 \pm 0.4)\!\times\!10^{-2}$ & $(1.9 \pm 0.6)\!\times\!10^{-2}$ & $(3.4 \pm 1.5)\!\times\!10^{-2}$ \\
8 & $(3.4 \pm 6.3)\!\times\!10^{-5}$ & $(1.9 \pm 0.7)\!\times\!10^{-3}$ & $(6.0 \pm 2.7)\!\times\!10^{-3}$ & $(1.1 \pm 0.3)\!\times\!10^{-2}$ & $(2.7 \pm 1.0)\!\times\!10^{-2}$ & $(3.1 \pm 1.6)\!\times\!10^{-2}$ \\
9 & $(4.5 \pm 5.3)\!\times\!10^{-5}$ & $(1.7 \pm 0.5)\!\times\!10^{-3}$ & $(5.0 \pm 2.3)\!\times\!10^{-3}$ & $(1.2 \pm 0.4)\!\times\!10^{-2}$ & $(2.0 \pm 0.5)\!\times\!10^{-2}$ & $(3.0 \pm 0.9)\!\times\!10^{-2}$ \\
10 & $(1.5 \pm 2.8)\!\times\!10^{-5}$ & $(1.4 \pm 0.4)\!\times\!10^{-3}$ & $(5.0 \pm 1.9)\!\times\!10^{-3}$ & $(9.9 \pm 2.3)\!\times\!10^{-3}$ & $(1.6 \pm 0.4)\!\times\!10^{-2}$ & $(3.5 \pm 1.3)\!\times\!10^{-2}$ \\
\bottomrule
\end{tabular}
\end{table}

\begin{table}[!htb]
\centering
\caption{CSNS2: smooth $\phi(r)=1/(1+r^2)^{1/4}$. Relative max-time trajectory prediction error
$\mathrm{err}(\mathcal{W}_{\text{pred}})$ on the prediction window
$(5,T_f]$, evaluated at a fresh initial condition. Rows: number of
training trajectories $M$; columns: noise-to-signal ratio
$\sigma_{\text{nsr}}$. Each cell reports $\mu \pm s$ across $T=10$
independent trials.}
\label{tab:CS-smooth-traj}
\renewcommand{\arraystretch}{1.15}
\setlength{\tabcolsep}{4pt}
\footnotesize
\begin{tabular}{c|ccccc}
\toprule
\multirow{2}{*}{$M$} & \multicolumn{5}{c}{$\sigma_{\text{nsr}}$} \\
\cmidrule(lr){2-6}
  & $0$ & $0.025$ & $0.05$ & $0.075$ & $0.1$ \\
\midrule
1 & $(4.6 \pm 5.2)\!\times\!10^{-4}$ & $(5.6 \pm 1.8)\!\times\!10^{-4}$ & $(1.1 \pm 0.4)\!\times\!10^{-3}$ & $(4.1 \pm 3.4)\!\times\!10^{-3}$ & $(6.1 \pm 8.0)\!\times\!10^{-3}$ \\
2 & $(2.0 \pm 1.9)\!\times\!10^{-4}$ & $(5.3 \pm 1.5)\!\times\!10^{-4}$ & $(1.1 \pm 0.4)\!\times\!10^{-3}$ & $(2.1 \pm 1.1)\!\times\!10^{-3}$ & $(3.0 \pm 0.9)\!\times\!10^{-3}$ \\
3 & $(1.1 \pm 1.3)\!\times\!10^{-4}$ & $(3.8 \pm 1.0)\!\times\!10^{-4}$ & $(1.5 \pm 0.7)\!\times\!10^{-3}$ & $(2.0 \pm 0.6)\!\times\!10^{-3}$ & $(4.7 \pm 4.4)\!\times\!10^{-3}$ \\
4 & $(1.1 \pm 1.2)\!\times\!10^{-4}$ & $(3.2 \pm 1.1)\!\times\!10^{-4}$ & $(7.4 \pm 2.5)\!\times\!10^{-4}$ & $(1.9 \pm 1.1)\!\times\!10^{-3}$ & $(2.8 \pm 1.1)\!\times\!10^{-3}$ \\
5 & $(5.4 \pm 3.4)\!\times\!10^{-5}$ & $(2.5 \pm 1.0)\!\times\!10^{-4}$ & $(9.6 \pm 4.3)\!\times\!10^{-4}$ & $(1.7 \pm 0.6)\!\times\!10^{-3}$ & $(3.3 \pm 1.3)\!\times\!10^{-3}$ \\
6 & $(4.8 \pm 1.4)\!\times\!10^{-5}$ & $(2.7 \pm 0.7)\!\times\!10^{-4}$ & $(1.0 \pm 0.4)\!\times\!10^{-3}$ & $(1.9 \pm 0.4)\!\times\!10^{-3}$ & $(3.1 \pm 1.0)\!\times\!10^{-3}$ \\
7 & $(6.5 \pm 4.7)\!\times\!10^{-5}$ & $(2.9 \pm 1.5)\!\times\!10^{-4}$ & $(5.9 \pm 2.0)\!\times\!10^{-4}$ & $(1.7 \pm 0.6)\!\times\!10^{-3}$ & $(3.2 \pm 0.7)\!\times\!10^{-3}$ \\
8 & $(5.9 \pm 4.4)\!\times\!10^{-5}$ & $(3.0 \pm 1.6)\!\times\!10^{-4}$ & $(8.4 \pm 2.7)\!\times\!10^{-4}$ & $(2.1 \pm 0.8)\!\times\!10^{-3}$ & $(4.0 \pm 1.2)\!\times\!10^{-3}$ \\
9 & $(5.7 \pm 5.9)\!\times\!10^{-5}$ & $(3.1 \pm 2.3)\!\times\!10^{-4}$ & $(6.7 \pm 1.7)\!\times\!10^{-4}$ & $(2.2 \pm 1.2)\!\times\!10^{-3}$ & $(3.0 \pm 1.1)\!\times\!10^{-3}$ \\
10 & $(5.6 \pm 3.0)\!\times\!10^{-5}$ & $(2.2 \pm 0.9)\!\times\!10^{-4}$ & $(8.4 \pm 3.5)\!\times\!10^{-4}$ & $(1.8 \pm 0.4)\!\times\!10^{-3}$ & $(3.5 \pm 0.8)\!\times\!10^{-3}$ \\
\bottomrule
\end{tabular}
\end{table}

\subsubsection{Comparison with implicit SINDy.}\label{comparision:sindy}
We compare our approach with an implicit SINDy-type method \cite{mangan2016inferring}. In our setting, the implicit formulation
\[
\tilde{\mathbb{A}}_{M,L}\mathbf{c} = 0
\]
is constructed directly from \eqref{newe}. Following \cite{mangan2016inferring}, we compute an approximate null space of $\tilde{\mathbb{A}}_{M,L}$ via singular value decomposition, and then identify a sparse vector in this subspace using an alternating directions method (ADM).

Figure~\ref{fig:sindy_comparison} (a)-(b) shows the comparison under different noise levels. When the true kernel $\phi$ lies exactly in the chosen functional space (e.g., piecewise constant basis functions $\{\chi_{[\frac{10(k-1)}{20},\frac{10k}{20}]}(r)\}$), the implicit SINDy approach can recover the kernel with behavior similar to our method in the noise-free case. However, it does not provide uncertainty quantification and becomes sensitive as noise increases.

\begin{figure}[!ht]
\centering
(a) \begin{minipage}{.43\textwidth} \centering    
    \includegraphics[width=\linewidth]{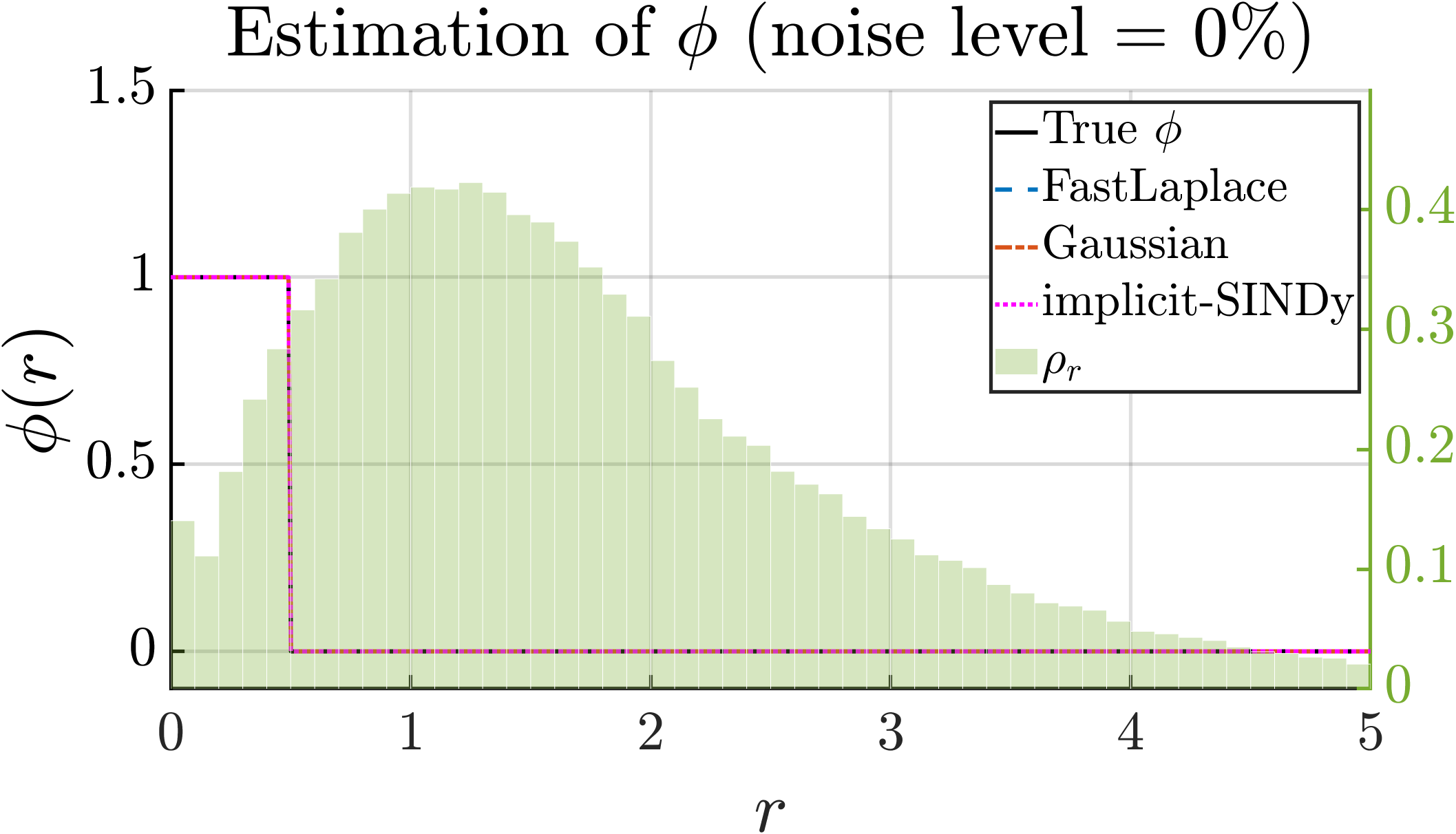}
\end{minipage}
(b) \begin{minipage}{.43\textwidth} 
    \includegraphics[width=\linewidth]{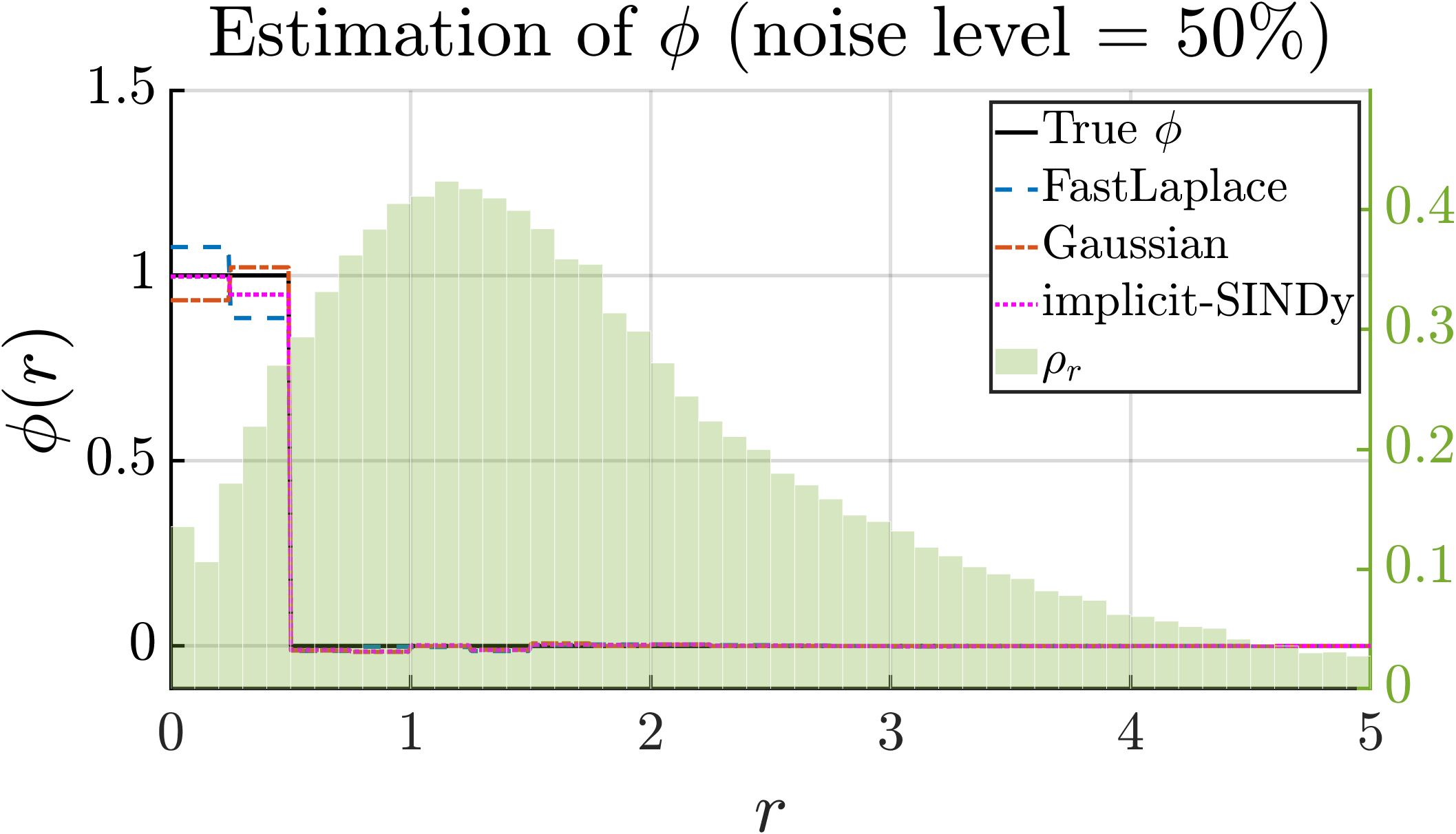}
\end{minipage}

(c) \begin{minipage}{.43\textwidth} \centering    
    \includegraphics[width=\linewidth]{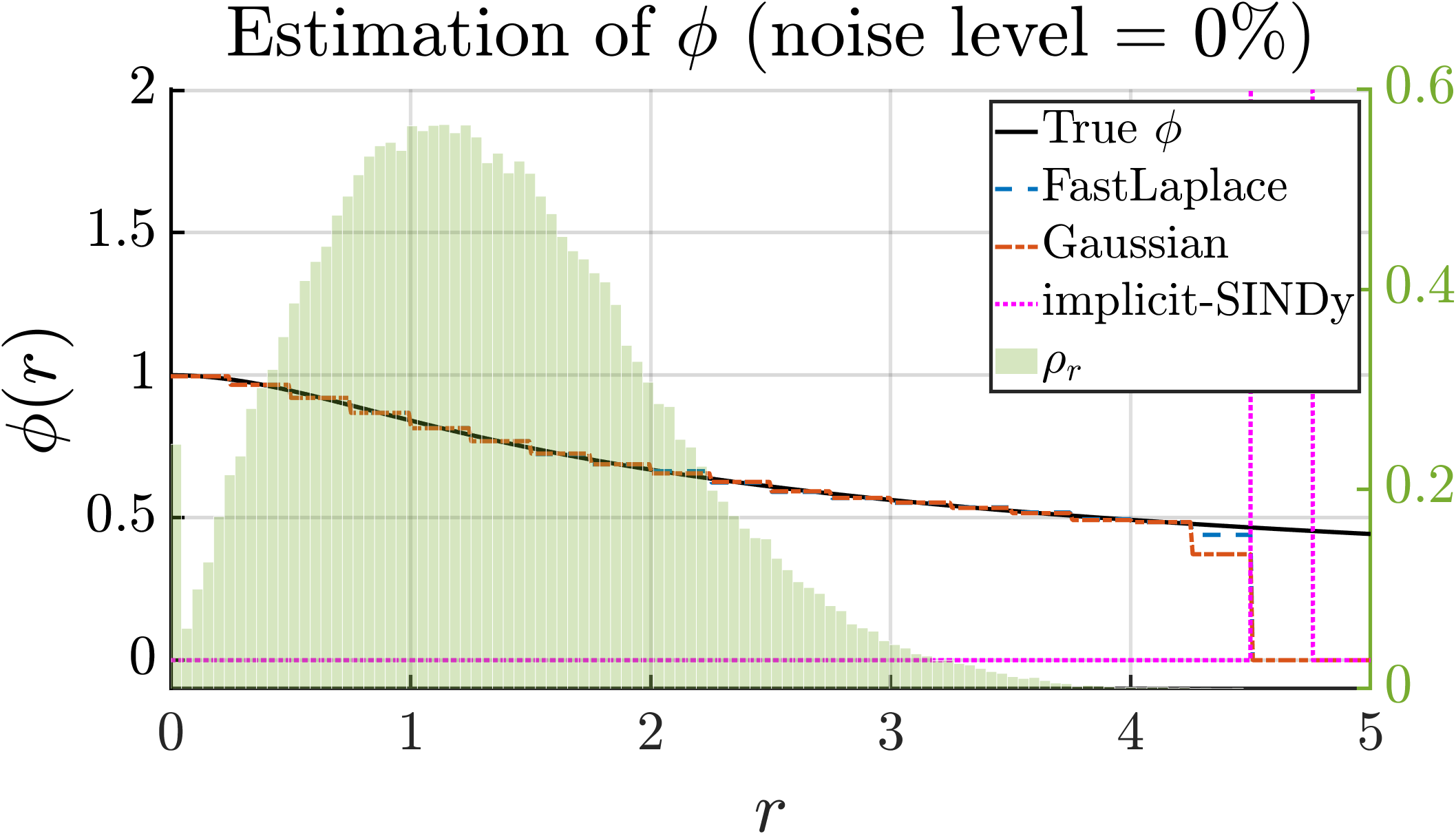}
\end{minipage}
(d) \begin{minipage}{.43\textwidth} \centering    
    \includegraphics[width=\linewidth]{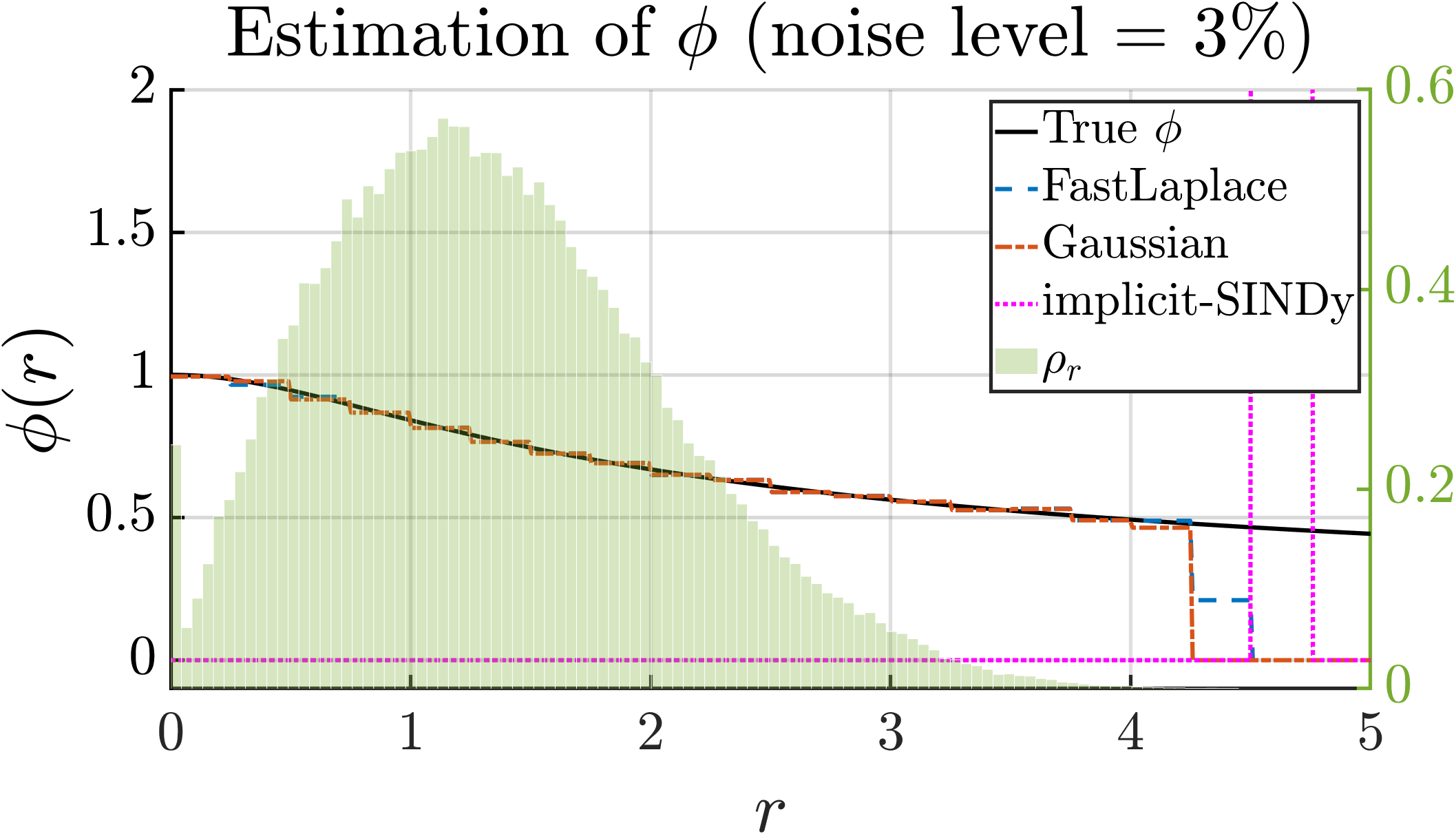}
\end{minipage}
\caption{
Comparison of interaction kernel estimation using the proposed sparse Bayesian learning approach (Fast Laplace and Gaussian hyperpriors) and an implicit SINDy-type method.
The green histogram represents the empirical distance distribution $\rho_r$.
When the true kernel $\phi$ is exactly representable in the chosen basis (bottom row), the implicit SINDy method can recover the correct structure in the noise-free case, but it does not provide uncertainty quantification and degrades under noise.
When the true kernel is not contained in the chosen functional space (top row), the implicit SINDy method fails to produce meaningful estimates, whereas the proposed method remains robust.
}
\label{fig:sindy_comparison}
\end{figure}

When the true kernel is not exactly representable in the chosen basis space, Figure~\ref{fig:sindy_comparison} (c)-(d) shows that the implicit SINDy approach fails to produce a meaningful approximation in our experiments. In contrast, the proposed sparse Bayesian learning framework remains robust, yielding stable estimates and providing meaningful uncertainty quantification even in the presence of noise and model mismatch.

\section{Conclusion and Discussion}
This work presents a variational framework for learning asymmetric interaction kernels in the Motsch–Tadmor model using trajectory data. By reformulating the governing equations in implicit form, the inverse problem is reduced to a subspace identification task. We establish an identifiability result under suitable conditions on the data distribution and basis support, providing theoretical guarantees for kernel recovery.

To address ill-posedness and noise sensitivity, we develop a sparse Bayesian learning (SBL) algorithm that incorporates prior structure, enables model selection via a new weighted total uncertainty (wTU) criterion, and provides uncertainty quantification. Numerical experiments demonstrate accurate recovery of interaction laws and robustness across noise levels and limited data.

Future work includes extensions to position-only observations, heterogeneous or time-varying interactions, and finite-sample recovery guarantees. The proposed framework contributes a principled, interpretable approach to data-driven modeling of asymmetric dynamical systems.

\section*{Data Availability Statement}
The numerical results presented in this manuscript are based on synthetic data generated by computational simulations. The complete reproducibility package---including the implementation of the proposed sparse Bayesian learning algorithm, scripts to regenerate every figure and table in the manuscript, and the pre-computed trajectory-error sweep results---is publicly available at \url{https://github.com/stang33/GPs_IPS} (directory \texttt{learning\_nonsymmetric\_interactions/}). A citable archival snapshot will be deposited on Zenodo with a permanent DOI upon publication.

\begin{acknowledgements}
J. Feng was partially supported by the National Natural Science Foundation of China (Grant No. 12501632), the Cross‑Disciplinary Research Team on Data Science and Intelligent Medicine (Grant No. 2023KCXTD054), and the Dongguan Key Laboratory for AI and Dynamical Systems. S. Tang was partially supported by the National Science Foundation under NSF CAREER Award DMS‑2340631 and by the UCSB Faculty Early Career Development Award.
\end{acknowledgements}

% Authors must disclose all relationships or interests that 
% could have direct or potential influence or impart bias on 
% the work: 
%
% \section*{Conflict of interest}
%
% The authors declare that they have no conflict of interest.

% BibTeX users please use one of
%\bibliographystyle{spbasic}      % basic style, author-year citations
\bibliographystyle{spmpsci}      % mathematics and physical sciences
%\bibliographystyle{spphys}       % APS-like style for physics
%\bibliography{}   % name your BibTeX data base

\bibliography{reference}
\end{document}